\documentclass{article}

\usepackage[preprint]{neurips_2026}

\usepackage{amssymb}
\usepackage{amsmath}
\usepackage{amsthm}
\usepackage{graphicx}
\usepackage{subcaption}
\usepackage{mathtools}

\usepackage[utf8]{inputenc} % allow utf-8 input
\usepackage[T1]{fontenc}    % use 8-bit T1 fonts
\usepackage{hyperref}       % hyperlinks
\usepackage{url}            % simple URL typesetting
\usepackage{booktabs}       % professional-quality tables
\usepackage{amsfonts}       % blackboard math symbols
\usepackage{nicefrac}       % compact symbols for 1/2, etc.
\usepackage{microtype}      % microtypography
\usepackage{xcolor}         % colors
\usepackage{tikz}
\usepackage{float}

\newtheorem{theorem}{Theorem}[section]

\newtheorem{definition}{Definition}[section]
\newtheorem{lemma}{Lemma}[section]

\newtheorem{proposition}{Proposition}[section]

\newcommand{\KL}{\operatorname{KL}}

\newcommand{\R}{\mathbb{R}}

\newcommand{\E}{\mathbb{E}}

\newcommand{\norm}[1]{\lVert #1\rVert}
\newcommand{\abs}[1]{\lvert #1\rvert}

\newcommand{\dd}{\,\mathrm d}
\newcommand{\Law}{\operatorname{Law}}

\usepackage{enumitem}
\setlist[itemize]{leftmargin=.5cm}

\usepackage[utf8]{inputenc} % allow utf-8 input
\usepackage[T1]{fontenc}    % use 8-bit T1 fonts
\usepackage{hyperref}       % hyperlinks
\usepackage{url}            % simple URL typesetting
\usepackage{booktabs}       % professional-quality tables
\usepackage{amsfonts}       % blackboard math symbols
\usepackage{nicefrac}       % compact symbols for 1/2, etc.
\usepackage{microtype}      % microtypography
\usepackage{xcolor}         % colors

\title{Dimension-Uniform Discretization Analysis\\ of Preconditioned Annealed Langevin Dynamics\\for Multimodal Gaussian Mixtures} 

\author{%
  Lorenzo Baldassari \\
  University of Basel \\
  \texttt{\texttt{ lorenzo.baldassari@unibas.ch}} \\
  \And
  Josselin Garnier\\
  Ecole Polytechnique, IP Paris\\
  \texttt{ josselin.garnier@polytechnique.edu}\\
  \And
  Knut S\o{}lna\\
  University of California Irvine\\
  \texttt{ksolna@uci.edu}\\
  \And
  Maarten V. de Hoop\\
  Rice University\\
  \texttt{mvd2@rice.edu}\\
}

\begin{document}

\maketitle

\begin{abstract}
Obtaining stable diffusion-based samplers in high- and infinite-dimensional settings is challenging because errors can accumulate across high-frequency coordinates and make the dynamics unstable under refinement of the finite-dimensional approximation of the underlying function-space problem. Discretization is a typical source of such errors, and preconditioning with a suitable spectral decay is one way to control their accumulation. In this paper, we study this problem for preconditioned annealed Langevin dynamics (ALD) applied to Gaussian mixtures. We first show that Euler--Maruyama (EM) discretization, by treating the stiff linear part of the annealed score with a forward Euler step, imposes a stability constraint coupling the preconditioner with the annealed covariance scale. Together with the conditions ensuring dimension-uniform control of the annealed dynamics, this constraint forces the initial smoothed law to remain uniformly close to the target across dimensions. We then consider an exponential-integrator scheme that integrates the stiff linear part of the annealed score exactly. Under explicit spectral summability conditions coupling the smoothing covariance, the component covariance spectra, and the preconditioner, we prove a dimension-uniform Kullback--Leibler (KL) bound  for this scheme. This bound can be made arbitrarily small, uniformly in dimension, by allowing enough time for annealing and then refining the time mesh accordingly. Importantly, these conditions allow regimes in which the KL divergence between the target and the initial smoothed law diverges with dimension, showing that the restrictions imposed by EM are scheme-dependent rather than intrinsic to ALD.
\end{abstract}

\section{Introduction}

Many sampling problems are infinite-dimensional: the target is a probability measure on a  function space, while computation requires a finite-dimensional approximation \cite{stuart2010inverse, cotter2013mcmc, hairer2014spectral}. Designing diffusion-based samplers whose behavior remains stable as this approximation is refined is therefore crucial. The difficulty, and a central challenge in infinite-dimensional sampling,  is that errors that are harmless at a fixed truncation level can accumulate across high-frequency coordinates. Consequently, a sampler may appear stable at any fixed dimension while failing to provide guarantees that are uniform in dimension.
Preconditioning is one way to address this challenge. A preconditioner modifies the geometry of the diffusion; in a diagonal basis, this amounts to rescaling the dynamics across spectral directions. Chosen appropriately, it can improve the behavior of the sampler as the finite-dimensional approximation is refined \cite{roberts2001optimal, cotter2013mcmc}. However, it introduces a trade-off: a less damped preconditioner can accelerate motion in directions where the dynamics would otherwise be slow, but once errors are present, the same preconditioner may amplify their high-frequency components. Thus the choice of preconditioner depends not only on the target and on the continuous-time dynamics, but also on the error sources introduced by the algorithm. Discretization is one such source.

This paper studies this issue for preconditioned annealed Langevin dynamics (ALD), a sampling procedure that applies Langevin dynamics along a path of intermediate  distributions  \cite{guo2024provable}. These distributions are ``annealed'': they interpolate between an initially smoothed version of the target and the target itself, by gradually removing the smoothing \cite{kirkpatrick1983optimization, gelfand1990sampling, neal2001annealed}. The goal is to mitigate the metastability and slow mixing that standard Langevin dynamics can exhibit on multimodal targets, especially in high dimensions \cite{schlichting2019poincare, ma2019sampling, dong2022spectral}: rather than targeting a complex distribution directly, ALD first explores a smoother landscape and is then progressively guided toward the original one.

Despite its empirical success \cite{song2019generative, song2020improved,zilberstein2022annealed}, the theoretical understanding of ALD is still developing \cite{cattiaux2025diffusion}. In finite dimensions, \cite{guo2024provable} has shown that annealing can provide provable advantages over  standard Langevin dynamics for multimodal sampling, including polynomial-in-dimension mixing guarantees. In infinite dimensions, however, far fewer guarantees are available.  One key difficulty is that ALD does not exactly follow the prescribed annealing path: its law may deviate from the intermediate annealed distributions, and this annealing-induced bias must be controlled uniformly under refinement of the finite-dimensional approximations. 
For the continuous-time dynamics, \cite{baldassari2026dimension} has shown that preconditioned ALD targeting Gaussian mixtures---a classical and widely used family for approximating multimodal distributions \cite{mclachlan2000finite, goodfellow2016deep}---can be controlled uniformly in dimension under spectral conditions involving the smoothing covariance, the component covariances, and the preconditioner, and that this control is robust to perturbations of the annealed score. It does not, however, address a question central to implementation: 

\begin{center}
\emph{Under which conditions on the smoothing, component-covariance, and preconditioning spectra does this uniform-in-dimension behavior persist after time discretization?}
\end{center}

In this paper, we answer this question in the same multimodal setting. To isolate the
effect of time discretization, we assume exact access to the 
annealed scores and exact initialization from the initial smoothed law, leaving 
aside the perturbative regime studied in the continuous-time 
analysis of~\cite{baldassari2026dimension}. 

Our first step is to understand which restrictions are introduced by the most standard discretization, Euler--Maruyama (EM) \cite{kloeden2008numerical}. In Section~\ref{sec:em-limitations}, we show that treating the linear part of the annealed score by a forward Euler step imposes a stability condition coupling the preconditioner with the annealed covariance scale. In high-frequency coordinates, where this scale is small, the linear drift can be large (the ``stiff'' directions), so stability requires strong high-frequency damping of the preconditioner. Combined with the assumptions used to keep the annealing bias uniformly controlled, this stability condition further forces the initial smoothed law to remain uniformly close to the target across dimensions. We first prove this for an explicit bimodal family, where the relevant quantities
can be computed directly
(Proposition~\ref{prop:em-true-bias-obstruction}); we then show that
the same implication holds in a more general setting under the
conservative spectral conditions used to control the annealing bias
(Proposition~\ref{prop:em-kl}).

Since the difficulty with EM in infinite dimensions comes from the way it discretizes the stiff linear part of the annealed score, one may expect that alternative schemes treating this part differently could lead to more favorable conditions. In Section~\ref{sec:elp}, we study such an alternative:  an exponential-integrator scheme \cite{shi2012convergence,komori2014stochastic,zhang2022fast} of the type suggested by \cite{guo2024provable} in their finite-dimensional analysis of ALD.
We call it an exact-linear-part (ELP) scheme because, at each time step, it integrates the stiff linear part of the annealed score exactly and freezes the remaining nonlinear term.  Our main result (Theorem~\ref{thm:concrete-elp}) proves that ELP admits a dimension-uniform KL bound under explicit spectral summability conditions on the smoothing, component covariances, and the
preconditioner, which can be less restrictive than those for EM: in particular, they do not force the initial smoothed law to remain uniformly close to the target across dimensions; see
Proposition~\ref{prop:elp-example} and the numerical illustrations of Section~\ref{sec:numerics}. These conditions also clarify how the preconditioner should be chosen: it must be strong enough to keep the annealing bias under control, but sufficiently damped in the tail to prevent discretization errors from accumulating across high-frequency coordinates. Appendix~\ref{app:power-law} makes this trade-off explicit in a
power-law regime, yielding admissible  preconditioning spectra.

\paragraph{Related Work.}
Annealing-based strategies have a long history in sampling  \cite{gelfand1990sampling,marinari1992simulated, neal1996sampling, neal2001annealed}. The basic idea is to replace the direct sampling problem by a sequence of easier intermediate problems, starting from a distribution that is easy to sample from and gradually transforming it into the target. If consecutive distributions are sufficiently close, approximate samples from one stage can provide a warm start for the next, thereby easing the transition toward the target. When this principle is implemented through a Langevin diffusion driven by a time-dependent score schedule, one obtains ALD \cite{song2019generative, song2020improved, zilberstein2022annealed, zilberstein2024solving}.

Recent work has begun to develop theoretical guarantees for ALD, both from the sampling and generative-modeling perspectives
\cite{guo2024provable, chehab2024provable, cattiaux2025diffusion, cordero2025non}.
These analyses, however, do not address whether the behavior of ALD remains stable under successive finite-dimensional approximations of the infinite-dimensional target. 
Available discretization analyses are likewise not uniform in dimension: for instance, \cite{cordero2025non} analyzes an
Euler--Maruyama discretization of ALD with dimension-dependent bounds,
whereas \cite{guo2024provable} is closer in spirit to our approach and uses
an exponential-integrator discretization \cite{shi2012convergence,komori2014stochastic, zhang2022fast}, but still obtains
bounds with polynomial dependence on dimension. 

Recently, \cite{baldassari2026dimension} addressed the infinite-dimensional problem by studying a preconditioned version of ALD. This analysis is closest to ours in spirit: it shows how dimension-uniform control can be obtained from the spectral properties of the annealing geometry and the preconditioner, and identifies conditions under which score and initialization perturbations remain compatible with such robustness. That analysis, however, is limited to continuous time. Our paper shows that the dimension-uniform control proved there is not automatically inherited by time discretization: as the finite-dimensional approximation of the underlying infinite-dimensional target is refined, Euler--Maruyama imposes a restrictive high-frequency condition that the exact-linear-part exponential integrator avoids.

Finally, our analysis connects naturally to the classical function-space Markov chain Monte Carlo literature, which develops sampling schemes designed to yield dimension-independent behavior
\cite{cotter2013mcmc,hairer2014spectral,cui2016dimension,beskos2017geometric}.
It also contributes to a growing line of work emphasizing the role of preconditioning in improving the efficiency and stability of sampling dynamics \cite{bronasco2025efficient, rey2016improving, best2010coordinate, hummer2005position, roberts2002langevin, cui2024optimal, lelievre2024optimizing, lelievre2025improving}.
Given the close connection between ALD and score-based diffusion models, our work further relates to recent efforts to analyze these models in function space
\cite{kerrigan2022diffusion,franzese2023continuous,baldassari2023conditional,pidstrigach2024infinite,bond2024diff,baldassari2024taming,lim2025score,hagemann2025multilevel,franzese2025generative,baldassari2025preconditioned}, although here we take a purely sampling perspective and assume access to the score.

\section{Continuous-Time Background}\label{sec:ct-background}

In this section, we briefly recall the infinite-dimensional framework of \cite{baldassari2026dimension}, which provides the modeling assumptions and truncation regime used throughout the paper. We then introduce the continuous-time preconditioned annealed Langevin dynamics at the core of our analysis.

\paragraph{Infinite-Dimensional Setting and Truncation Regime.}
%\label{sec:setting}

Let $H$ be a separable Hilbert space with orthonormal basis $(e_j)_{j\geq 1}$, and consider the infinite-dimensional Gaussian mixture
\[
\rho_\star^\infty
=
\sum_{i\in I} w_i\,\mathcal N(m_i,\Sigma_i),
\]
where $I$ is finite or countable, $w_i>0$, and $\sum_{i\in I} w_i=1$. We work in a 
 diagonal  setting,
\[
m_i=\sum_{j\geq 1} m_{ij} e_j,
\qquad
\Sigma_i e_j=\sigma_{ij} e_j,
\qquad
\sigma_{ij}>0,
\]
and assume the finite-energy condition
\[
\sum_{j\geq 1} m_{ij}^2<\infty,
\qquad
\sum_{j\geq 1} \sigma_{ij}<\infty,
\qquad
i\in I,
\]
so that $m_i\in H$ and $\Sigma_i$ is trace class. Hence each component $\mathcal N(m_i,\Sigma_i)$ is well defined on $H$ \cite{bogachev1998gaussian}.

In this paper, rather than sampling $\rho_\star^\infty$ directly, we study its successive finite-dimensional truncations. For each $d\geq 1$, let $P_d$ denote the orthogonal projection onto $\mathrm{span}\{e_1,\dots,e_d\}$, and define
\[
\rho_\star^d
:=
(P_d)_\#\rho_\star^\infty
=
\sum_{i\in I} w_i\,\mathcal N(m_i^d,\Sigma_i^d),
\]
where $m_i^d:=(m_{i1},\dots,m_{id})$, $\Sigma_i^d:=\operatorname{Diag}(\sigma_{i1},\dots,\sigma_{id})$.
The diagonal formulation makes the refinement structure explicit: increasing $d$ adds new coordinates in a fixed basis while leaving the previously retained coordinates unchanged. This is the regime in which we study dimension-uniform control. Importantly, this assumption is less restrictive than it may first appear: at each fixed truncation level, diagonal Gaussian mixtures remain highly expressive and, with sufficiently many components, can approximate general mixtures in KL distance. 

For the discretization analysis developed in Sections~\ref{sec:em-limitations} and~\ref{sec:elp}, we will also use the quantities
\[
\underline{\sigma}_j:=\inf_{i\in I}\sigma_{ij},
\qquad
\overline{\sigma}_j:=\sup_{i\in I}\sigma_{ij},
\qquad
\overline m_j:=\sup_{i\in I}|m_{ij}|,
\qquad j\ge1,
\]
and we assume that
$
0<\underline{\sigma}_j\le \overline{\sigma}_j<\infty$ and $\overline m_j<\infty$ for all 
 $j\ge1$.

\paragraph{Annealed Langevin Dynamics.}

We now recall the continuous-time annealed Langevin dynamics (ALD) studied in \cite{baldassari2026dimension}. 
For a time horizon $T>0$, we define for $t\in[0,T]$:
\[
\kappa_t:=\frac{T-t}{T},
\qquad
\rho_t^d
=
\rho_\star^d * \mathcal N(0,\kappa_t C^d),
\quad \mbox{ with }
C^d=\operatorname{Diag}(\lambda_1,\dots,\lambda_d)  ,
\]
and we assume $\sum_{j\ge 1} \lambda_j <\infty$, so that the smoothing covariance is trace class and the corresponding Gaussian perturbation is $H$-valued.
Then $\rho_t^d$ interpolates between the  smoothed law $\rho_0^d=\rho_\star^d * \mathcal N(0,C^d)$ 
and the final target
$\rho_T^d=\rho_\star^d$. 

Starting from $X_0^d\sim \rho_0^d$, consider the preconditioned annealed Langevin diffusion
\begin{equation}\label{eq:ald-ct-background}
dX_t^d
=
\Gamma^d \nabla \log \rho_t^d(X_t^d)\,dt
+
\sqrt{2\Gamma^d}\,dW_t^d,
\qquad
t\in[0,T],
\end{equation}
where $W_t^d$ is a standard Brownian motion in $\mathbb R^d$ and
\[
\Gamma^d=\operatorname{Diag}(\gamma_1,\dots,\gamma_d)
\]
is a diagonal preconditioner, whose role will be discussed later. We denote the terminal law  by
\[
\rho_T^{\mathrm{ALD},d}:=\operatorname{Law}(X_T^d).
\]

Note that \eqref{eq:ald-ct-background} does not target  $\rho_\star^d$ directly: its drift uses the time-dependent score $\nabla\log\rho_t^d$ rather than the final score $\nabla\log\rho_\star^d$. Accordingly, its law   does not in general coincide with the annealing path, and one may have $\rho_T^{\mathrm{ALD},d}\neq \rho_\star^d$.
We quantify this \emph{annealing-induced bias} via the KL divergence
\[
\mathcal B_{\mathrm{ann}}^d(T)
:=
\operatorname{KL}(\rho_\star^d\,\|\,\rho_T^{\mathrm{ALD},d}).
\]

A natural question is whether one can choose a single dimension-uniform $T$ so that $\mathcal B_{\mathrm{ann}}^d(T)$ remains small over successive refinements $(\rho_\star^d)_{d\geq 1}$ of the infinite-dimensional target. An explicit answer was recently given in \cite{baldassari2026dimension}, through the following theorem.
\begin{theorem}
\label{thm:ct-background}
Let
\[
\mathcal K_d
:=
\frac{1}{16}\sum_{i\in I} w_i \sum_{j=1}^d
\frac{\lambda_j}{\gamma_j}
\log\!\Big(1+\frac{\lambda_j}{\sigma_{ij}}\Big).
\]
If $\sup_{d\geq 1}\mathcal K_d<\infty$, then for every $\varepsilon>0$, choosing
\[
T_\varepsilon=\varepsilon^{-1}\sup_{d\geq 1}\mathcal K_d
,
\]
one has
\[
\operatorname{KL}(\rho_\star^d\,\|\,\rho_{T_\varepsilon}^{\mathrm{ALD},d})\leq \varepsilon ,
\qquad\text{for all }d\geq 1.
\]
\end{theorem}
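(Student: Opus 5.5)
The plan is to compare path measures via Girsanov and then apply the data-processing inequality. Let $\mathbb P$ be the law on $C([0,T];\R^d)$ of the \emph{reference} process that follows the annealing path exactly, namely
\[
dY_t=\Big(\Gamma^d\nabla\log\rho_t^d(Y_t)+v_t(Y_t)\Big)dt+\sqrt{2\Gamma^d}\,dW_t,\qquad Y_0\sim\rho_0^d.
\]
Here $v_t$ is chosen so that $\Law(Y_t)=\rho_t^d$ for all $t$. The Fokker--Planck equation requires $\partial_t\rho_t+\nabla\cdot(\rho_t v_t)=0$, since the Langevin part $\Gamma\nabla\log\rho_t$ together with the noise leaves $\rho_t$ invariant. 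Because $\partial_t\rho_t=-\tfrac{1}{2T}\nabla\cdot(C^d\nabla\rho_t)$, the choice $v_t=\frac{1}{2T}C^d\nabla\log\rho_t^d$ works. Let $\mathbb Q$ be the path law of the ALD process \eqref{eq:ald-ct-background}. Both processes start from $\rho_0^d$, so Girsanov and the data-processing inequality give
\[
\KL(\rho_\star^d\|\rho_T^{\mathrm{ALD},d})\le\KL(\mathbb P\|\mathbb Q)=\frac14\int_0^T\E_{\rho_t^d}\big\|(\Gamma^d)^{-1/2}v_t\big\|^2dt
=\frac{1}{16T^2}\int_0^T\E_{\rho_t^d}\sum_{j=1}^d\frac{\lambda_j^2}{\gamma_j}\big(\partial_j\log\rho_t^d\big)^2dt.
\]
Note that $\Gamma^d$ is diagonal and invertible, so Girsanov applies in this finite dimension.

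Next I substitute $s=\kappa_t$, so $dt=T\,ds$, and the bound becomes $\frac{1}{16T}\int_0^1\sum_j\frac{\lambda_j^2}{\gamma_j}\E_{\mu_s}(\partial_j\log\mu_s)^2ds$ with $\mu_s=\rho_\star^d*\mathcal N(0,sC^d)$. The mixture structure is then handled by convexity of Fisher information: the map $(\rho,\nabla\rho)\mapsto|\nabla\rho|^2/\rho$ is jointly convex. Hence the coordinatewise Fisher information of the mixture $\mu_s=\sum_i w_i\mathcal N(m_i^d,\Sigma_i^d+sC^d)$ is at most $\sum_i w_i$ times that of each component. For a diagonal Gaussian this equals $\frac{1}{\sigma_{ij}+s\lambda_j}$ in coordinate $j$.

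It then remains to integrate in $s$:
\[
\int_0^1\frac{\lambda_j^2}{\sigma_{ij}+s\lambda_j}ds=\lambda_j\log\Big(1+\frac{\lambda_j}{\sigma_{ij}}\Big).
\]
This gives $\KL\le \mathcal K_d/T\le \sup_d\mathcal K_d/T$, which equals $\varepsilon$ at $T=T_\varepsilon$, uniformly in $d$.

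The main technical point I expect is justifying Girsanov and the data-processing step rigorously: we need Novikov-type or finite-energy conditions, and we need the reference process to be well posed with marginals exactly $\rho_t^d$. In fixed finite dimension $d$, with Gaussian-mixture scores that are smooth and have at most linear growth (the component covariances are bounded below at fixed $d$), this should follow from the standard finite-entropy Girsanov argument, e.g. by localization. Uniqueness of the Fokker--Planck solution is what identifies $\Law(Y_t)=\rho_t^d$.
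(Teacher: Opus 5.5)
Your proposal is correct and matches the argument the paper relies on: the paper imports Theorem~\ref{thm:ct-background} from \cite{baldassari2026dimension}, and its appendix uses the same machinery — the auxiliary process \eqref{eq:app-reference} with transport drift $\frac{1}{2T}C^d\nabla\log\rho_t^d$, localized Girsanov plus data processing, and Jensen over the mixture responsibilities in Lemma~\ref{lem:app-transport}, which is equivalent to your convexity-of-Fisher-information step. The only difference is that Lemma~\ref{lem:app-transport} uses the cruder bound $1/(\sigma_{ij}+\kappa_t\lambda_j)\le 1/\underline\sigma_j$, whereas your exact time integral produces the logarithm and the constant $\mathcal K_d/T$ (I checked that constant and it is right).
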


A convenient sufficient condition for $\sup_d \mathcal K_d<\infty$ is obtained from $\log(1+u)\leq u$:
\begin{equation}\label{eq:ct-sufficient}
\sum_{i\in I} w_i \sum_{j\geq 1}
\frac{\lambda_j^2}{\gamma_j\sigma_{ij}}
<\infty.
\end{equation}
Hence, dimension-uniform control reduces to a spectral compatibility condition between the smoothing $(\lambda_j)$, the preconditioner $(\gamma_j)$, and the target covariance spectra $(\sigma_{ij})$.

\paragraph{On the Role of the Preconditioner in Continuous Time.}

At first sight, Theorem~\ref{thm:ct-background} may suggest that preconditioning is a \emph{cost-free accelerator}: the annealing-bias bound depends on $\Gamma^d$ only through the ratios $\lambda_j/\gamma_j$, so it seems that making the preconditioner larger can only help. One might even be tempted to read the theorem as saying that by choosing rapidly growing preconditioner coefficients $(\gamma_j)$ one can make the required continuous-time horizon arbitrarily small, essentially without trade-off.
This interpretation is misleading. Theorem~\ref{thm:ct-background} is idealized: it assumes exact access to the annealed score and exact initialization from $\rho_0^d$, and therefore isolates only the annealing-bias contribution. The robustness analysis in \cite{baldassari2026dimension} shows that, once score perturbations or initialization mismatch are introduced, sufficient decay of $(\gamma_j)$ helps prevent high-frequency errors from accumulating across coordinates and destroying dimension-uniform control. This is the main lesson we import from the continuous-time analysis. Since time discretization also introduces coordinatewise errors, one should expect a similar phenomenon there as well.

The remainder of the paper makes this point explicit. We begin with the most classical discretization scheme, Euler--Maruyama \cite{kloeden2008numerical}. We show that, as the finite-dimensional approximations of the infinite-dimensional target are refined, Euler--Maruyama requires strong high-frequency damping of the preconditioner. This motivates the alternative scheme analyzed in Section~\ref{sec:elp}.

\section{High-Frequency Stiffness in Euler--Maruyama Discretization}\label{sec:em-limitations}

Here, we consider the Euler--Maruyama (EM) discretization of the ALD diffusion. We focus on a specific limitation arising from the fact that EM applies a forward Euler step to the linear part of the annealed score. Along high-frequency coordinates, where the annealed covariance scale is small, this linear part can have large drift coefficients; stability then requires strong high-frequency damping of the preconditioner. Combined with the spectral condition used to control the annealing-induced bias in \eqref{eq:ct-sufficient}, this stability restriction forces a regime in which the smoothed law used to initialize ALD must remain uniformly close to the target across dimensions.

In this section, we continue to work in the diagonal Gaussian-mixture setting of Section~\ref{sec:ct-background}. We write
\[
D_{i,t}^d:=\Sigma_i^d+\kappa_t C^d.
\]
Since the annealed score is
\[
\nabla\log\rho_t^d(x)
=
\sum_{i\in I} p_{i,t}^d(x)\Big(-(D_{i,t}^d)^{-1}(x-m_i^d)\Big),
\mbox{ with responsibilities }
p_{i,t}^d(x)
=
\frac{w_i\varphi_{i,t}^d(x)}{\sum_{k\in I}w_k\varphi_{k,t}^d(x)},
\]
and $\varphi_{i,t}^d$ is the probability density function of the distribution $\mathcal N(m_i^d,D_{i,t}^d)$,
it is useful to rewrite it as
\begin{equation}\label{eq:score-decomp-em}
\nabla\log\rho_t^d(x)
=
-(B_t^d)^{-1}x+G_t^d(x),
\end{equation}
where $B_t^d:=\operatorname{Diag}(b_{t,1},\dots,b_{t,d})$, $
b_{t,j}:=\underline{\sigma}_j+\kappa_t\lambda_j$, and
\[
G_t^d(x)
:=
\sum_{i\in I} p_{i,t}^d(x)
\Big(
\big((B_t^d)^{-1}-(D_{i,t}^d)^{-1}\big)x
+
(D_{i,t}^d)^{-1}m_i^d
\Big).
\]
This decomposition isolates a diagonal linear part, $-(B_t^d)^{-1}x$, whose high-frequency coefficients can become large, from the nonlinear mixture correction $G_t^d(x)$.

With a constant step size $h>0$ and grid points $t_n=nh$, the EM scheme reads
\begin{equation}\label{eq:em-scheme}
Y_{n+1}^d
=
Y_n^d
-
h\,\Gamma^d(B_{t_n}^d)^{-1}Y_n^d
+
h\,\Gamma^d G_{t_n}^d(Y_n^d)
+
\sqrt{2h\,\Gamma^d}\,\xi_n,
\end{equation}
where $\xi_n\sim \mathcal N(0,I_d)$ are i.i.d. Here, the linear and nonlinear terms are treated in the same way: both are frozen at time $t_n$ and advanced by a forward Euler step. For the linear part, this yields coordinate-wise the factor 
$
1-h {\gamma_j}/{b_{t_n,j}} 
$.
Stability requires this factor to have modulus at most one, giving the following immediate condition.

\begin{proposition}\label{prop:em-linear-stability}
The coordinate-wise linear factors in \eqref{eq:em-scheme} satisfy
$\big|1-h {\gamma_j}/{b_{t_n,j}}\big|\leq 1$
for all mesh indices and all $j\geq 1$
if and only if
\[
h\,\sup_{n,j}\frac{\gamma_j}{\underline{\sigma}_j+\kappa_{t_n}\lambda_j}\leq 2.
\]
\end{proposition}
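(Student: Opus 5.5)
The claim reduces to an elementary fact about real numbers, applied to each coordinate and mesh index. For fixed $n$ and $j$, set
\[
a_{n,j}:=h\,\frac{\gamma_j}{b_{t_n,j}}=h\,\frac{\gamma_j}{\underline{\sigma}_j+\kappa_{t_n}\lambda_j}.
\]
Since $h>0$, $\gamma_j>0$ (the preconditioner is a positive diagonal operator), $\underline{\sigma}_j>0$, $\kappa_{t_n}\in[0,1]$ on the grid, and $\lambda_j\ge 0$, we have $b_{t_n,j}\ge\underline{\sigma}_j>0$. Hence $a_{n,j}$ is well defined and strictly positive.

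First, for real $a$ we have $|1-a|\le 1$ if and only if $-1\le 1-a\le 1$, that is, $0\le a\le 2$. The lower bound $a_{n,j}\ge 0$ always holds by the positivity just noted. So the coordinatewise condition $|1-h\gamma_j/b_{t_n,j}|\le 1$ is equivalent to $a_{n,j}\le 2$.

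Second, we pass to the uniform statement. The condition holding for all mesh indices $n$ and all $j\ge 1$ is equivalent to $a_{n,j}\le 2$ for every pair $(n,j)$. This in turn is equivalent to $\sup_{n,j}a_{n,j}\le 2$, because a supremum is at most $2$ exactly when every element of the set is at most $2$. This holds even if the supremum is not attained, and also when it is $+\infty$, in which case both sides fail. Finally, $h>0$ is a constant, so it factors out of the supremum: $\sup_{n,j}a_{n,j}=h\sup_{n,j}\gamma_j/(\underline{\sigma}_j+\kappa_{t_n}\lambda_j)$. This gives the stated condition.

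The only step needing any care is the bookkeeping in the second step: making sure the equivalence survives taking a supremum over an infinite index set, including the case where the supremum is infinite. The first step is a one-line inequality.
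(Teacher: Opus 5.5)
Your proof is correct and matches the paper's reasoning, which the paper calls immediate and does not write out: $|1-a|\le 1$ if and only if $0\le a\le 2$, the lower bound holds automatically because $h\gamma_j/b_{t_n,j}>0$, and requiring the bound for every $(n,j)$ is the same as bounding the supremum. Your extra care with a possibly infinite or unattained supremum over the infinite index set is a sound detail that the paper leaves implicit.
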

Proposition~\ref{prop:em-linear-stability} highlights two possible regimes.
If $\lambda_j \gtrsim \underline{\sigma}_j$ along the tail, then
$b_{t_0,j}=\underline{\sigma}_j+\lambda_j\asymp\lambda_j$, so the condition above yields
$
\sup_{j\ge1} {\gamma_j}/{\lambda_j}<\infty
$.
Together with the annealing-bias condition \eqref{eq:ct-sufficient}, this forces
\[
\sum_{i\in I} w_i\sum_{j\ge1}\frac{\lambda_j}{\sigma_{ij}}<\infty.
\]
This case is very restrictive. For example, it fails whenever there exists a component $i_0$ with 
$\sigma_{i_0 j}\asymp \underline{\sigma}_j$ along the tail,
and in particular whenever the mixture is finite. Therefore, the regime that is more relevant for our analysis is the more favorable one
\[
\lambda_j \lesssim \underline{\sigma}_j.
\]
Under this condition, Proposition~\ref{prop:em-linear-stability} implies that uniform control of the EM linear factors with a fixed
step size requires
\begin{equation}\label{eq:em-stability-condition}
\sup_{j\ge1}\frac{\gamma_j}{\underline{\sigma}_j}<\infty.
\end{equation}
In the successive-refinement regimes of interest, where $\underline{\sigma}_j\to 0$ as $j\to\infty$, this EM stability condition rules out any non-decaying, let alone increasing, preconditioner. The next proposition shows what this restriction entails. Its proof is given in Appendix~\ref{app:em-true-bias-obstruction}. In a simple symmetric bimodal family, uniform-in-dimension control of the annealing-induced bias together with \eqref{eq:em-stability-condition} forces the smoothed law used to initialize ALD to remain uniformly close to the target across dimensions.
\begin{proposition}
\label{prop:em-true-bias-obstruction}
Let $e_1^d$ be the first canonical basis vector of $\mathbb R^d$. Consider the
symmetric bimodal target
$
\pi_\star^d
=
\frac12\mathcal N(ae_1^d,\Sigma^d)
+
\frac12\mathcal N(-ae_1^d,\Sigma^d)
$,
where $a\neq0$ is fixed and
$\Sigma^d=\operatorname{Diag}(\sigma_1,\dots,\sigma_d)$. Let
$
\pi_t^d=\pi_\star^d*\mathcal N(0,\kappa_t C^d)
$,
with
$
C^d=\operatorname{Diag}(\lambda_1,\dots,\lambda_d)
$
and
$
\Gamma^d=\operatorname{Diag}(\gamma_1,\dots,\gamma_d)
$.
Let $(X_t^d)_{t\in[0,T]}$ be the corresponding ALD diffusion initialized from
$\pi_0^d  =\pi_\star^d * \mathcal N(0,C^d)$.

Assume the EM stability condition
$
\sup_{j\ge2}\gamma_j/\sigma_j<\infty
$.
If, for some fixed $T>0$,
\[
\sup_{d\ge1}\KL(\pi_\star^d\,\|\,\Law(X_T^d))<\infty,
\]
then
\[
\sup_{d\ge1}\KL(\pi_\star^d\,\|\,\pi_0^d)<\infty.
\]
\end{proposition}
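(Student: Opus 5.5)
The plan is to exploit the product structure of the bimodal family. The two components differ only in their mean along $e_1^d$ and share the diagonal covariance $\Sigma^d$. Hence $\pi_\star^d=\mu_1\otimes\bigotimes_{j=2}^d\mathcal N(0,\sigma_j)$, where $\mu_1=\frac12\mathcal N(a,\sigma_1)+\frac12\mathcal N(-a,\sigma_1)$. The same factorization holds for $\pi_t^d$: its first factor is $\mu_1*\mathcal N(0,\kappa_t\lambda_1)$ and the others are $\mathcal N(0,b_j(t))$ with $b_j(t):=\sigma_j+\kappa_t\lambda_j$.

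\textbf{Step 1: the dynamics decouple.} The score of $\pi_t^d$ splits coordinatewise. Its first coordinate depends on $x_1$ only, and for $j\ge2$ it equals $-x_j/b_j(t)$. Since $\Gamma^d$ is diagonal and the initial law $\pi_0^d$ is a product, the ALD diffusion decouples into independent one-dimensional processes. For $j\ge2$ these are Ornstein--Uhlenbeck processes with Gaussian laws $\mathcal N(0,v_j(t))$, where
\[
v_j'=-\tfrac{2\gamma_j}{b_j(t)}v_j+2\gamma_j,\qquad v_j(0)=\sigma_j+\lambda_j .
\]
Moreover $\Law(X_T^{d,1})$ does not depend on $d$. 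By tensorization of KL over product measures,
\[
\KL(\pi_\star^d\|\Law(X_T^d))=K_1+\sum_{j=2}^d g\big(v_j(T)/\sigma_j\big),
\qquad
\KL(\pi_\star^d\|\pi_0^d)=K_1'+\sum_{j=2}^d g\big(1+\lambda_j/\sigma_j\big),
\]
where $g(r)=\frac12(r^{-1}-1+\log r)$. The constants $K_1,K_1'\ge0$ are independent of $d$, and $K_1'<\infty$ because both of its arguments are explicit one-dimensional mixtures with a common support structure. Since $g\ge0$ and $g$ is increasing on $[1,\infty)$, it suffices to bound $v_j(T)/\sigma_j$ from below by $1+c\lambda_j/\sigma_j$ for some $c>0$ independent of $j$. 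One can then compare the two series.

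\textbf{Step 2: lower bound on the terminal variance.} Set $e_j:=v_j-b_j$. Using $b_j'=-\lambda_j/T$, this satisfies
\[
e_j'=-\tfrac{2\gamma_j}{b_j}e_j+\tfrac{\lambda_j}{T},\qquad e_j(0)=0 .
\]
Hence
\[
e_j(T)=\frac{\lambda_j}{T}\int_0^T\exp\Big(-2\gamma_j\int_s^T\frac{du}{b_j(u)}\Big)ds .
\]
Let $M:=\sup_{j\ge2}\gamma_j/\sigma_j<\infty$ (the EM stability assumption). Since $b_j\ge\sigma_j$, the exponent is at least $-2M(T-s)$. Therefore $e_j(T)\ge c\lambda_j$ with $c=(1-e^{-2MT})/(2MT)\in(0,1]$. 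Because $b_j(T)=\sigma_j$, this gives $v_j(T)/\sigma_j\ge 1+c\lambda_j/\sigma_j$.

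\textbf{Step 3: the main obstacle, a ratio lemma for $g$.} Comparing the series requires showing that $\phi(u):=g(1+u)/g(1+cu)$ is bounded on $(0,\infty)$. Here $g(1+u)=\frac12\big(\log(1+u)-\frac{u}{1+u}\big)$ is positive for $u>0$. It behaves like $u^2/4$ as $u\to0$ and like $\frac12\log u$ as $u\to\infty$. Thus $\phi$ is continuous and positive on $(0,\infty)$, with $\phi(u)\to c^{-2}$ as $u\to0$ and $\phi(u)\to1$ as $u\to\infty$, so $\sup\phi=:C_c<\infty$. Combining the three steps,
\[
\sum_{j=2}^d g(1+\lambda_j/\sigma_j)\le C_c\sum_{j=2}^d g(v_j(T)/\sigma_j)\le C_c\,\KL(\pi_\star^d\|\Law(X_T^d)),
\]
and the right-hand side is bounded uniformly in $d$ by hypothesis. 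Adding $K_1'$ gives $\sup_d\KL(\pi_\star^d\|\pi_0^d)<\infty$.
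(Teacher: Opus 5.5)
Your proof is correct and follows essentially the paper's route: the same decoupling into a $d$-independent first coordinate plus Ornstein--Uhlenbeck tail coordinates, tensorization of KL, and the same lower bound $v_j(T)-\sigma_j\ge c\,\lambda_j$ with $c=(1-e^{-2MT})/(2MT)$ (the paper's $\beta$), which you get more directly from the ODE for $v_j-b_j$ instead of the paper's explicit formula $\int_1^{1+r_j}u^{-\alpha_j}\,du$ with $r_j=\lambda_j/\sigma_j$. The only real difference is the last step: your global ratio bound $g(1+u)\le C_c\,g(1+cu)$ replaces the paper's two-regime argument (eventually $r_j\le 1/\beta$, then quadratic lower and upper bounds giving $\sum_j r_j^2<\infty$), and it yields the slightly sharper per-$d$ inequality $\KL(\pi_\star^d\,\|\,\pi_0^d)\le K_1'+C_c\,\KL(\pi_\star^d\,\|\,\Law(X_T^d))$.
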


If one now returns to the general diagonal Gaussian-mixture setting and combines the EM stability condition with the annealing-bias condition \eqref{eq:ct-sufficient}, the same conclusion holds. Indeed, assume the EM stability condition
$\sup_{j\ge 1} {\gamma_j}/{\underline{\sigma}_j}<\infty$, 
then there exists $M<\infty$ such that, for every $i$ and $j$,
$\gamma_j/\sigma_{ij}\le \gamma_j/\underline{\sigma}_j\le M$, and hence
$\lambda_j^2/\sigma_{ij}^2\le M\,\lambda_j^2/(\gamma_j\sigma_{ij})$. Summing in $i$ and $j$ and using~\eqref{eq:ct-sufficient} gives
\begin{equation}
\label{eq:condprop33}
\sum_{i\in I} w_i\sum_{j\ge 1}\frac{\lambda_j^2}{\sigma_{ij}^2}<\infty.
\end{equation}
As the next proposition shows, with its proof in Appendix~\ref{app:em-kl}, this forces the initial annealed law to remain uniformly close to the target across dimensions.

\begin{proposition}
\label{prop:em-kl}
If (\ref{eq:condprop33}) holds, then $\sup_{d\ge 1}\operatorname{KL}(\rho_\star^d\,\|\,\rho_0^d)<\infty$.
\end{proposition}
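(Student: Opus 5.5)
We bound $\KL(\rho_\star^d\,\|\,\rho_0^d)$ by convexity of KL in the joint argument. Both measures are mixtures with the same weights: $\rho_\star^d=\sum_i w_i\mathcal N(m_i^d,\Sigma_i^d)$ and $\rho_0^d=\sum_i w_i\mathcal N(m_i^d,\Sigma_i^d+C^d)$. Joint convexity gives
\[
\KL(\rho_\star^d\,\|\,\rho_0^d)\le \sum_{i\in I} w_i\,\KL\big(\mathcal N(m_i^d,\Sigma_i^d)\,\big\|\,\mathcal N(m_i^d,\Sigma_i^d+C^d)\big).
\]
For countable $I$ this follows either from the finite-$I$ case by a limiting argument, or directly from the log-sum inequality applied pointwise to the densities. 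This step removes all interaction between components, so no responsibilities need to be controlled.

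Next, we compute each Gaussian term. The two Gaussians share a mean and have diagonal covariances, so the divergence splits over coordinates. With $u_{ij}:=\lambda_j/\sigma_{ij}$,
\[
\KL\big(\mathcal N(m_i^d,\Sigma_i^d)\,\big\|\,\mathcal N(m_i^d,\Sigma_i^d+C^d)\big)=\frac12\sum_{j=1}^d\Big(\frac{1}{1+u_{ij}}-1+\log(1+u_{ij})\Big).
\]
Define $f(u):=\log(1+u)-\frac{u}{1+u}\ge0$. Since $f(0)=0$ and $f'(u)=u/(1+u)^2\le u$, we get $f(u)\le u^2/2$ for all $u\ge0$.

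Combining the two steps,
\[
\KL(\rho_\star^d\,\|\,\rho_0^d)\le \frac14\sum_{i\in I} w_i\sum_{j=1}^d \frac{\lambda_j^2}{\sigma_{ij}^2}\le \frac14\sum_{i\in I} w_i\sum_{j\ge1}\frac{\lambda_j^2}{\sigma_{ij}^2}.
\]
By (\ref{eq:condprop33}) the right-hand side is finite and independent of $d$, which proves the claim. Exchanging the sums over $i$ and $j$ is allowed because all terms are nonnegative (Tonelli).

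The only point needing care is the mixture convexity step for countable $I$. I would justify it via the log-sum inequality: for each $x$,
\[
\Big(\sum_i w_i\varphi_i\Big)\log\frac{\sum_i w_i\varphi_i}{\sum_i w_i\psi_i}\le\sum_i w_i\varphi_i\log\frac{\varphi_i}{\psi_i}.
\]
Integrating in $x$ then gives the bound. This exchange of sum and integral is legitimate because the integrands $\varphi_i\log(\varphi_i/\psi_i)-\varphi_i+\psi_i$ are nonnegative and each integrates to the Gaussian KL, so Tonelli applies. Everything else is routine.
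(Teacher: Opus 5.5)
Your proposal is correct and follows the same route as the paper: joint convexity of KL over the common mixture weights, the equal-mean diagonal Gaussian KL formula, and the bound $f(u)=\log(1+u)-\frac{u}{1+u}\le u^2/2$, giving $\KL(\rho_\star^d\,\|\,\rho_0^d)\le\frac14\sum_{i\in I} w_i\sum_{j\ge1}\lambda_j^2/\sigma_{ij}^2$. Your added log-sum and Tonelli justification for countable $I$ is a detail the paper leaves implicit.
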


The conclusions arising from Proposition~\ref{prop:em-linear-stability} are, however, \emph{scheme-dependent}. They do not rule out dimension-uniform time discretization of ALD. Rather, they show that the additional restrictions imposed by EM come from approximating the stiff diagonal linear part by a forward Euler step. This motivates the exponential-integrator discretization \cite{shi2012convergence,komori2014stochastic,zhang2022fast} studied in the next section. On each discretization interval $[t_n,t_{n+1})$, the scheme integrates the
stiff linear part of the annealed score exactly while freezing the remaining nonlinear term at its value
at the left endpoint $t_n$.

\section{Dimension-Uniform Exact-Linear-Part Discretization}\label{sec:elp}

We now define this scheme and analyze its dimension-uniform behavior.

\paragraph{Definition of the Scheme.}

Recall from the previous section that the annealed score decomposes as
\[
\nabla\log\rho_t^d(x)=-(B_t^d)^{-1}x+G_t^d(x),
\]
with a time-dependent diagonal linear part $-(B_t^d)^{-1}x$ and a nonlinear mixture correction $G_t^d(x)$, where
$
B_t^d=\operatorname{Diag}(b_{t,1},\dots,b_{t,d})
$,
$
b_{t,j}=\underline{\sigma}_j+\kappa_t\lambda_j
$.
Now fix a mesh
\[
0=t_0<t_1<\cdots<t_N=T,
\qquad
h_n:=t_{n+1}-t_n,
\qquad
h_{\max}:=\max_{0\le n\le N-1}h_n.
\]
The key idea is to freeze only the nonlinear correction over each interval $[t_n,t_{n+1})$, while integrating the resulting time-dependent diagonal linear SDE exactly. 
We call this coordinate-wise exponential-integrator discretization the exact-linear-part (ELP) scheme.
\begin{definition}
\label{def:elp}
The ELP discretization of the ALD diffusion is the sequence $(Y_n^d)_{n=0}^N$ defined by $Y_0^d\sim \rho_0^d$ and
\begin{equation}\label{eq:elp-scheme}
Y_{n+1,j}^d
=
\phi_{n,j}Y_{n,j}^d
+
\psi_{n,j}G_{t_n,j}^d(Y_n^d)
+
\xi_{n,j},
\qquad
j=1,\dots,d,
\end{equation}
where
\[
\phi_{n,j}:=\exp\!\left(-\int_{t_n}^{t_{n+1}}\frac{\gamma_j}{b_{s,j}}\,ds\right), \qquad
\psi_{n,j}
:=
\int_{t_n}^{t_{n+1}}
\exp\!\left(-\int_s^{t_{n+1}}\frac{\gamma_j}{b_{r,j}}\,dr\right)\gamma_j\,ds,
\]
and $\xi_n=(\xi_{n,1},\dots,\xi_{n,d})$ has independent centered Gaussian coordinates with
\[
\operatorname{Var}(\xi_{n,j})
=
2\gamma_j\int_{t_n}^{t_{n+1}}
\exp\!\left(-2\int_s^{t_{n+1}}\frac{\gamma_j}{b_{r,j}}\,dr\right)\,ds.
\]
\end{definition}

In the symmetric bimodal family of Proposition~\ref{prop:em-true-bias-obstruction}, one has
\[
\phi_{n,j}
=
\exp\!\left(
-\gamma_j\int_{t_n}^{t_{n+1}}\frac{dt}{\sigma_j+\kappa_t\lambda_j}
\right) = \left(
\frac{\sigma_j+\kappa_{t_{n+1}}\lambda_j}{\sigma_j+\kappa_{t_n}\lambda_j}
\right)^{\gamma_j T/\lambda_j}
\in(0,1).
\]
Thus the linear factor is contractive for every coefficient and over every time-step interval, without imposing the EM stability condition on
$\gamma_j/\underline\sigma_j$. 
This already suggests that the restrictions imposed by EM in Section~\ref{sec:em-limitations} are scheme-dependent rather than intrinsic to ALD.

\paragraph{A Dimension-Uniform KL Bound.}

We are now in the position to state the main dimension-uniform KL estimate for the ELP scheme. For this purpose, it is convenient to view the discrete scheme through its continuous-time interpolation. For $t\in[t_n,t_{n+1})$, let $Y_t^d$ solve
\begin{equation}
\label{eq:interp-raw}
dY_t^d
=
-\Gamma^d(B_t^d)^{-1}Y_t^d\,dt
+
\Gamma^d G_{t_n}^d(Y_{t_n}^d)\,dt
+
\sqrt{2\Gamma^d}\,dW_t^d,
\qquad
Y_{t_n}^d=Y_n^d.
\end{equation}
Then $Y_T^d=Y_N^d$, so $\Law(Y_T^d)$ is the terminal law of the ELP scheme.
The proof compares this interpolation with an auxiliary process
$\widehat X^d$ whose marginals satisfy $\Law(\widehat X_t^d)=\rho_t^d$; see
\eqref{eq:app-u-def}--\eqref{eq:app-reference}. By data processing and
Girsanov's formula, the terminal KL is bounded by a
$(\Gamma^d)^{-1/2}$-weighted path-space energy of the drift mismatch over $[0,T]$. Along the
auxiliary path, this mismatch splits into the transport velocity of the
annealing path, whose energy is $J_{\mathrm{ann}}^d(T)$ in
\eqref{eq:app-Jann}, and the freezing defect produced by replacing $G_t^d$ with
$G_{t_n}^d$, defined in \eqref{eq:app-freezing-defect}. This yields the KL
estimate below; the assumptions are spelled out after the theorem
and verified in Appendix~\ref{app:concrete-elp}.
% The proof compares this interpolated path with an auxiliary process
% $\widehat X^d$ whose marginals  follow the annealing path,
% \[
% \Law(\widehat X_t^d)=\rho_t^d,\qquad t\in[0,T],
% \]
% so that $\Law(\widehat X_T^d)=\rho_\star^d$. This comparison yields the KL estimate below; the proof is deferred to
% Appendix~\ref{app:concrete-elp}. The  sufficient assumptions are
% spelled out after the theorem.

\begin{theorem}
\label{thm:concrete-elp}
Assume that the  conditions
\eqref{eq:diag-step1-a}--\eqref{eq:diag-step3} below hold. Then there
exists a dimension-independent constant $C_{\mathrm{disc}}$ 
(that depends only on the bounds in \eqref{eq:diag-step1-a}--\eqref{eq:diag-step3}) 
such that
\[
\sup_{d\ge1}
\KL\!\bigl(\rho_\star^d\,\|\,\Law(Y_T^d)\bigr)
\le
\frac{1}{8T}\sum_{j\ge1}\frac{\lambda_j^2}{\gamma_j\underline{\sigma}_j}
+
C_{\mathrm{disc}}(1+T^2)h_{\max}.
\]
Thus, for every
$\varepsilon_{\mathrm{ann}},\varepsilon_{\mathrm{disc}}>0$, if
\[
T\ge
\frac{1}{8\varepsilon_{\mathrm{ann}}}
\sum_{j\ge1}\frac{\lambda_j^2}{\gamma_j\underline{\sigma}_j},
\qquad
h_{\max}\le
\frac{\varepsilon_{\mathrm{disc}}}{C_{\mathrm{disc}}(1+T^2)},
\]
then we have
\[
\sup_{d\ge1}
\KL\!\bigl(\rho_\star^d\,\|\,\Law(Y_T^d)\bigr)
\le
\varepsilon_{\mathrm{ann}}+\varepsilon_{\mathrm{disc}}.
\]
\end{theorem}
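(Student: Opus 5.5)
We follow the route sketched before the statement: Girsanov comparison with an auxiliary process that follows the annealing path exactly. First we construct $\widehat X^d$. Let $u_t^d$ be the velocity field solving the continuity equation $\partial_t\rho_t^d+\nabla\cdot(\rho_t^d u_t^d)=0$; since $\partial_t\rho_t^d=\tfrac12\dot\kappa_t\nabla\cdot(C^d\nabla\rho_t^d)$ with $\dot\kappa_t=-1/T$, one can take $u_t^d=\frac{1}{2T}C^d\nabla\log\rho_t^d$. Then
\[
d\widehat X_t^d=\bigl(\Gamma^d\nabla\log\rho_t^d+u_t^d\bigr)(\widehat X_t^d)\,dt+\sqrt{2\Gamma^d}\,dW_t^d,\qquad \widehat X_0^d\sim\rho_0^d,
\]
has marginals $\rho_t^d$, because the Langevin part leaves each $\rho_t^d$ invariant. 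The interpolation \eqref{eq:interp-raw} starts from the same law $\rho_0^d$ and has the same diffusion coefficient. Data processing plus Girsanov then give
\[
\KL(\rho_\star^d\|\Law(Y_T^d))\le \tfrac14\int_0^T\E\bigl\|(\Gamma^d)^{-1/2}\bigl(u_t^d(\widehat X_t)+\Gamma^d[G_t^d(\widehat X_t)-G_{t_n}^d(\widehat X_{t_n})]\bigr)\bigr\|^2dt .
\]
Here the linear parts cancel exactly, which is precisely why ELP avoids the stiffness. Using $\|a+b\|^2\le(1+\eta)\|a\|^2+(1+\eta^{-1})\|b\|^2$, or absorbing constants, this splits into an annealing energy $J^d_{\mathrm{ann}}(T)$ and a freezing defect.

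\emph{Annealing term.} Evaluate $\E_{\rho_t}\|(\Gamma^d)^{-1/2}u_t\|^2=\frac{1}{4T^2}\sum_j\frac{\lambda_j^2}{\gamma_j}\E_{\rho_t}[(\partial_j\log\rho_t)^2]$. The Fisher information of a diagonal Gaussian mixture in coordinate $j$ is bounded by $\sup_i(\sigma_{ij}+\kappa_t\lambda_j)^{-1}\le(\underline\sigma_j+\kappa_t\lambda_j)^{-1}$. This follows by convexity of Fisher information under mixing, or via the score as a conditional expectation. Integrating over $t\in[0,T]$ gives $\frac{1}{4T}\sum_j\frac{\lambda_j^2}{\gamma_j}\int_0^1\frac{d\kappa}{\underline\sigma_j+\kappa\lambda_j}$. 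A crude bound $\le\frac{1}{4T}\sum_j\lambda_j^2/(\gamma_j\underline\sigma_j)$ combined with the $\tfrac14$ Girsanov factor and the splitting constant should produce the $\frac{1}{8T}$ prefactor; the constants need tracking to match exactly. If the cross-term split costs a factor, we instead keep the sharp $\frac1\lambda\log(1+\lambda/\underline\sigma)$ form and absorb the loss.

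\emph{Freezing defect.} On $[t_n,t_{n+1}]$ we bound $\E\|(\Gamma^d)^{1/2}(G_t^d(\widehat X_t)-G_{t_n}^d(\widehat X_{t_n}))\|^2$ by two pieces. The first is the time variation $G_t-G_{t_n}$ at a fixed point. Here $\partial_t$ of the coefficients $(B_t)^{-1}-(D_{i,t})^{-1}$ and of the responsibilities carry factors $\lambda_j/T$ times inverse-variance ratios, giving $O(h^2)$ contributions. The second is the spatial variation $G_{t_n}(\widehat X_t)-G_{t_n}(\widehat X_{t_n})$, controlled by the Jacobian of $G$ together with the increment $\E|\widehat X_{t,j}-\widehat X_{t_n,j}|^2\lesssim \gamma_j h+(\gamma_j^2/b_{t,j}+\lambda_j^2/(T^2b_{t,j}))h^2$. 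Conditions \eqref{eq:diag-step1-a}--\eqref{eq:diag-step3} are presumably exactly the summability hypotheses that make these weighted sums finite uniformly in $d$. They involve quantities like $\sum_j\gamma_j(\overline\sigma_j-\underline\sigma_j)^2/\underline\sigma_j^{3}$, $\sum_j\gamma_j\overline m_j^2/\underline\sigma_j^2$, and products with $\lambda_j$. The factor $(1+T^2)$ arises because $\dot\kappa=-1/T$ enters squared while the integral over $[0,T]$ contributes a factor $T$. Summing over intervals gives $C_{\mathrm{disc}}(1+T^2)h_{\max}$. The final statement follows by choosing $T$ and then $h_{\max}$.

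\emph{Main obstacle.} The hard step is the spatial part of the freezing defect. The responsibilities $p_{i,t}^d(x)$ depend on all coordinates, so their gradient $\nabla p_i=p_i(\nabla\log\varphi_i-\sum_k p_k\nabla\log\varphi_k)$ couples coordinates and could produce sums of order $d$. We would first compute $\E_{\rho_t}$ of the squared differences exactly under the mixture: conditionally on component $i$, the coordinates are independent Gaussians. This should reduce everything to coordinatewise sums of $\gamma_j$ times squared ratios of mean and variance differences over $\underline\sigma_j$, which the assumed summability conditions control uniformly in $d$.
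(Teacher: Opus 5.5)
Your architecture matches the paper's. You use the same auxiliary process with $u_t^d=\frac{1}{2T}C^d\nabla\log\rho_t^d$, Girsanov plus data processing, and exact cancellation of the stiff linear part. You also split the freezing defect into spatial and temporal pieces; you take the spatial increment at time $t_n$ and the temporal one at $\widehat X_t^d$, the paper the reverse, which is immaterial. The annealing term closes exactly with the plain split $(a+b)^2\le 2a^2+2b^2$, since $\frac14\cdot 2\cdot\frac{1}{4T}=\frac{1}{8T}$.

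The gap is the step you yourself flag as the main obstacle, where all of the dimension-uniformity lives, and your proposal gives no argument for it. The missing idea is to control derivatives of the responsibilities only through \emph{pairwise differences} of component quantities, never through the components individually. Individually these quantities blow up. Under the $i$-th component, $\E\|s_{i,t}^d\|^2=\sum_{j\le d}v_{i,t,j}^{-1}\to\infty$. In the setting of Proposition~\ref{prop:elp-example} ($\lambda_j=\sigma_j$), $\zeta_{i,t}^d=\partial_t\log\varphi_{i,t}^d$ has fluctuations of order $\sqrt d/T$.

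The paper writes $\nabla p_i=p_i\bigl(s_i-\sum_\ell p_\ell s_\ell\bigr)$ and $\partial_t p_i=p_i\bigl(\zeta_i-\sum_\ell p_\ell\zeta_\ell\bigr)$. Hence $\sum_i\|\nabla p_{i,t}^d\|\le\sum_{i,\ell}p_ip_\ell\|s_i-s_\ell\|$. In $s_i-s_\ell$ and $\zeta_i-\zeta_\ell$ the large common parts cancel, leaving coordinate coefficients bounded by $A_j$, $B_j$, $D_j$ (Lemmas~\ref{lem:app-score-diff} and~\ref{lem:app-zeta-diff}). This is where the unweighted conditions enter: $\sum_jB_j^2<\infty$, $\sum_jA_j^2(\overline\sigma_j+\lambda_j+\overline m_j^2)<\infty$ and $\sum_j\lambda_j^2B_j^2/\underline\sigma_j^2<\infty$. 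Contrary to your guess, they carry no factor $\gamma_j$; that weight sits only on the affine corrections $c_{i,t}^d$. You should work from the stated hypotheses \eqref{eq:diag-step1-a}--\eqref{eq:diag-step3} rather than conjecture their form.

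Your proposed remedy is to compute expectations exactly by conditioning on the mixture component. That only works for single-time expectations under $\rho_t^d$. The spatial defect involves the pair $(\widehat X_{t_n}^d,\widehat X_t^d)$. The auxiliary diffusion couples all coordinates through the responsibilities, has non-Gaussian transitions, and carries no latent component label along its path. So there is no conditionally independent Gaussian structure for the pair.

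What is needed instead is a pathwise bound $\|(\Gamma^d)^{1/2}(G_t^d(x)-G_t^d(y))\|\le L_t^d(x,y)\|x-y\|$. Its factor grows like $(1+\|y\|)(1+\mathcal A^d(x)+\mathcal A^d(y))$, with $\mathcal A^d(x)^2=\sum_jA_j^2x_j^2$. This is followed by H\"older with exponents $(4,4,2)$, which requires eighth moments of $\rho_t^d$ and a \emph{fourth}-moment increment bound for $\widehat X^d$ (Lemmas~\ref{lem:app-moments} and~\ref{lem:app-increment4}). Your coordinatewise second-moment increment estimate is correct, but it handles only the affine part $\sum_ip_{i,t}^d(x)\bigl(c_{i,t}^d(x)-c_{i,t}^d(y)\bigr)$.

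Two smaller points. First, your account of the factor $1+T^2$ is wrong: $\dot\kappa_t^2$ integrated over a horizon $T$ gives $T^{-1}$, and the temporal defect indeed contributes only $O(h_{\max})$. The $T^2$ comes from the drift part of the increment bound, $(t-s)^4\le T^2(t-s)^2$, combined with $\sum_nh_n^2\le Th_{\max}$. Second, applying Girsanov with the frozen, non-Markovian drift requires the localization argument the paper supplies, which you omit.
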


This bound separates two effects. The first term is the annealing contribution:
it decreases as the annealing time $T$ increases. The second term is the
discretization contribution: after $T$ is fixed, it can be made small by
refining the time mesh. Crucially, under the spectral assumptions \eqref{eq:diag-step1-a}--\eqref{eq:diag-step3}, this trade-off is
uniform in the truncation dimension $d$.

We now spell out the conditions \eqref{eq:diag-step1-a}--\eqref{eq:diag-step3} that constitute one set of sufficient conditions. They are organized
according to the quantities controlled in the proof
%; the corresponding estimates are proved 
given in Appendix~\ref{app:concrete-elp}.
\begin{itemize}

\item The first group controls moments of the annealing path, the linear drift
$-\Gamma^d(B_t^d)^{-1}x$, and the weighted growth of the nonlinear correction
$G_t^d$. The moment condition
\begin{equation}
\sum_{j\ge1}(\overline{\sigma}_j+\lambda_j+\overline m_j^2)<\infty
\label{eq:diag-step1-a}
\end{equation}
is used to obtain uniform moment bounds for the annealing path; see
Lemma~\ref{lem:app-moments}. The linear-drift condition
\begin{equation}
\sum_{j\ge1}\gamma_j<\infty,
\qquad
\sum_{j\ge1}\gamma_j^2
\frac{\overline{\sigma}_j+\lambda_j+\overline m_j^2}
{\underline{\sigma}_j^2}
<\infty
\label{eq:diag-step1-b}
\end{equation}
controls the fourth moment of the diagonal linear drift; see
Lemma~\ref{lem:app-linear}. Finally,
\begin{equation}
\sum_{j\ge1}\gamma_j
\frac{(\overline{\sigma}_j-\underline{\sigma}_j)^2
+\underline{\sigma}_j^2\overline m_j^2}
{\underline{\sigma}_j^4}
<\infty
\label{eq:diag-step1-c}
\end{equation}
controls the weighted growth of the nonlinear correction $G_t^d$; see
Lemma~\ref{lem:app-G-growth}. Together, these estimates also enter the
fourth-moment bound for the full auxiliary drift; see Lemma~\ref{lem:app-drift4}.

\item The second group controls the freezing defect produced by replacing the
current nonlinear field $G_t^d$ by the frozen field $G_{t_n}^d$ on each time
interval. Under the auxiliary path law, the defect is
$G_t^d(\widehat X_t^d)-G_{t_n}^d(\widehat X_{t_n}^d)$,
for $t\in[t_n,t_{n+1})$.
We split it into a spatial increment and a temporal increment:
\[
G_t^d(\widehat X_t^d)-G_{t_n}^d(\widehat X_{t_n}^d)
=
\bigl(G_t^d(\widehat X_t^d)-G_t^d(\widehat X_{t_n}^d)\bigr)
+
\bigl(G_t^d(\widehat X_{t_n}^d)-G_{t_n}^d(\widehat X_{t_n}^d)\bigr).
\]
Since $G_t^d(x)=\sum_{i\in I}p_{i,t}^d(x)c_{i,t}^d(x)$,
the estimates require controlling the spatial and temporal variation of both the
affine corrections $c_{i,t}^d$ and the responsibilities $p_{i,t}^d$.

For the spatial increment, we assume
\begin{equation}
\sum_{j\ge1}
\frac{(\overline{\sigma}_j-\underline{\sigma}_j)^2}
{\underline{\sigma}_j^4}
(\overline{\sigma}_j+\lambda_j+\overline m_j^2)
<\infty,
\qquad
\sum_{j\ge1}
\sup_{i,\ell\in I}
\frac{|m_{ij}-m_{\ell j}|^2}{\underline{\sigma}_j^2}
<\infty.
\label{eq:diag-step2-a}
\end{equation}
These conditions control the spatial variation of the affine corrections and
of the responsibilities; see Lemma~\ref{lem:app-score-diff} and
Lemma~\ref{lem:app-spatial-defect}.

For the time variation of the affine corrections, we assume
\begin{equation}
\sum_{j\ge1}
\frac{\lambda_j(\overline{\sigma}_j-\underline{\sigma}_j)}
{\underline{\sigma}_j^3}
(\overline{\sigma}_j+\lambda_j+\overline m_j^2)
<\infty,
\qquad
\sum_{j\ge1}\gamma_j
\frac{\lambda_j^2(\overline{\sigma}_j-\underline{\sigma}_j)^2}
{\underline{\sigma}_j^6}
(\overline{\sigma}_j+\lambda_j+\overline m_j^2)
<\infty.
\label{eq:diag-step2-bb}
\end{equation}
These conditions enter the bounds on the time derivatives of the affine
correction coefficients and the temporal freezing defect; see
Lemma~\ref{lem:app-ci} and Lemma~\ref{lem:app-temporal-defect}.

For the time variation of the responsibilities, we assume
\begin{equation}
\sum_{j\ge1}
\frac{\lambda_j^2}{\underline{\sigma}_j^2}
\left(
\sup_{i,\ell\in I}
\frac{|m_{ij}-m_{\ell j}|^2}{\underline{\sigma}_j^2}
+
\frac{\overline m_j^2(\overline{\sigma}_j-\underline{\sigma}_j)^2}
{\underline{\sigma}_j^4}
\right)
<\infty,
\qquad
\sum_{j\ge1}\gamma_j\lambda_j^2
\frac{\overline m_j^2}{\underline{\sigma}_j^4}
<\infty.
\label{eq:diag-step2-d}
\end{equation}
These conditions control the time derivative of the responsibilities through
the quantities $\partial_t\log\varphi_{i,t}^d$; see
Lemma~\ref{lem:app-zeta-diff} and Lemma~\ref{lem:app-temporal-defect}.

\item The final condition controls the annealing contribution. Recall
\[
J_{\mathrm{ann}}^d(T)
:=
\frac14\int_0^T\int_{\R^d}
\left\|
(\Gamma^d)^{-1/2}\frac{1}{2T}C^d\nabla\log\rho_t^d(x)
\right\|^2
\,\rho_t^d(\dd x)\,\dd t.
\]
In the diagonal Gaussian-mixture setting, this contribution is controlled by
\begin{equation}
\sum_{j\ge1}
\frac{\lambda_j^2}{\gamma_j\underline{\sigma}_j}
<\infty.
\label{eq:diag-step3}
\end{equation}
This is the estimate proved in Lemma~\ref{lem:app-transport}.

\end{itemize}
Although the conditions \eqref{eq:diag-step1-a}--\eqref{eq:diag-step3} may be hard to parse without following the proof in Appendix~\ref{app:concrete-elp}, they make explicit the two-sided role of the preconditioner discussed throughout the paper. The annealing condition
\eqref{eq:diag-step3} favors larger $\gamma_j$, while conditions
\eqref{eq:diag-step1-b}, \eqref{eq:diag-step1-c},
\eqref{eq:diag-step2-bb}, and \eqref{eq:diag-step2-d}
require enough tail damping of $\gamma_j$ to prevent discretization errors from
accumulating across high-frequency coordinates.  In simple regimes, such as
power-law spectral decay, this trade-off can be made explicit; see
Proposition~\ref{prop:power-law-balanced-preconditioner}, where balancing the annealing and discretization tails gives $\gamma_j\asymp\lambda_j^{2/3}$ for Gaussian mixtures with common-covariance tails.

\paragraph{ELP Beyond the EM Stability Restriction.}

The next proposition shows that, in contrast with what we saw for EM, the conditions ensuring a dimension-uniform bound for the ELP scheme do not force the initial smoothed law of ALD to remain uniformly close to the target across dimensions. The proof is given in Appendix~\ref{app:elp-example}.

\begin{proposition}
\label{prop:elp-example}
Let $e_1^d$ be the first canonical basis vector of $\mathbb R^d$. Fix $a>1$ and consider the
two-component target
$
\rho_\star^d
=
\frac12\mathcal N(ae_1^d,\Sigma_1^d)
+
\frac12\mathcal N(-ae_1^d,\Sigma_2^d)
$,
where
$\Sigma_1^d=\operatorname{Diag}(\sigma_j)_{j=1}^d$ and
$\Sigma_2^d=\operatorname{Diag}(\sigma_j+\delta_j)_{j=1}^d$.
Let $\sigma_j=\lambda_j=j^{-6}$ and $\gamma_j=j^{-4}$, with
$\delta_1=0$ and $\delta_j=j^{-12}$ for $j\ge2$. Then the summability
conditions \eqref{eq:diag-step1-a}--\eqref{eq:diag-step3} hold, while
\[
\KL(\rho_\star^d\,\|\,\rho_0^d)\to\infty
\qquad\text{as }d\to\infty.
\]
Nevertheless, for every $T>0$ and every mesh,
\[
\sup_{d\ge1}\KL\!\bigl(\rho_\star^d\,\|\,\Law(Y_T^d)\bigr)
\le
\frac{1}{8T}\sum_{j\ge1}j^{-2}
+
C_{\mathrm{disc}}(1+T^2)\,h_{\max}
<\infty .
\]
\end{proposition}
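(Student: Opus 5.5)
The plan is to verify the three claims in turn: the summability conditions, the divergence of the KL divergence, and the final bound, which then follows from Theorem~\ref{thm:concrete-elp}.

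\textbf{Step 1: the summability conditions.} Here $\underline\sigma_j=\sigma_j=j^{-6}$ and $\overline\sigma_j=j^{-6}+\delta_j\asymp j^{-6}$, so $\overline\sigma_j-\underline\sigma_j=\delta_j=j^{-12}$ for $j\ge2$. We also have $\lambda_j=j^{-6}$ and $\gamma_j=j^{-4}$. The means live only in coordinate $1$, so $\overline m_j=0$ and $|m_{ij}-m_{\ell j}|=0$ for $j\ge2$; all mean terms are therefore finite single terms. The common factor $\overline\sigma_j+\lambda_j+\overline m_j^2$ is $\asymp j^{-6}$. Each summand in \eqref{eq:diag-step1-a}--\eqref{eq:diag-step3} is then a power of $j$, and I would tabulate the exponents:
\begin{itemize}
\item \eqref{eq:diag-step1-a}: $j^{-6}$;
\item \eqref{eq:diag-step1-b}: $j^{-4}$ and $j^{-8}j^{-6}j^{12}=j^{-2}$;
\item \eqref{eq:diag-step1-c}: $j^{-4}j^{-24}j^{24}=j^{-4}$;
\item \eqref{eq:diag-step2-a}: $j^{-24}j^{24}j^{-6}=j^{-6}$;
\item \eqref{eq:diag-step2-bb}: $j^{-6}j^{-12}j^{18}j^{-6}=j^{-6}$ and $j^{-4}j^{-12}j^{-24}j^{36}j^{-6}=j^{-10}$;
\item \eqref{eq:diag-step2-d}: vanishes for $j\ge2$;
\item \eqref{eq:diag-step3}: $j^{-12}j^{4}j^{6}=j^{-2}$.
\end{itemize}
All of these are summable. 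This also identifies $\sum_j\lambda_j^2/(\gamma_j\underline\sigma_j)=\sum_j j^{-2}$, which is the constant appearing in the final display.

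\textbf{Step 2: divergence of $\KL(\rho_\star^d\|\rho_0^d)$.} This is the main obstacle, because both laws are mixtures and the KL divergence does not factor. The idea is that the components are asymptotically separated through coordinate $1$ (which is why we take $a>1$ with $\sigma_1=\lambda_1=1$), while in coordinates $j\ge2$ each component is a product of Gaussians whose variance is doubled by the smoothing. For component $1$, for instance, $\mathcal N(0,\sigma_j)$ is compared against $\mathcal N(0,2\sigma_j)$. Each such coordinate contributes the fixed amount $c=\tfrac12(\log 2-\tfrac12)>0$, so a product-of-Gaussians comparison gives KL $\approx c(d-1)\to\infty$.

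To make this rigorous I would use a lower bound adapted to mixtures. A clean route is the data-processing inequality applied to a map that reveals the structure of the tail. Consider the statistic $S_d(x)=\sum_{j=2}^d x_j^2/\sigma_j$. Under $\rho_\star^d$ it is, up to negligible $\delta_j/\sigma_j=j^{-6}$ corrections, $\chi^2_{d-1}$. Under $\rho_0^d$ it is, up to the same corrections, $2\chi^2_{d-1}$, in both components. Note that in the tail coordinates both components are centered, and component $2$ has variance $\sigma_j+\delta_j$ versus $2\sigma_j+\delta_j$, so the mixture label is irrelevant to this comparison. Hence $\KL(\rho_\star^d\|\rho_0^d)\ge \KL(\Law_{\rho_\star}(S_d)\|\Law_{\rho_0}(S_d))$. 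The right-hand side is a comparison of two scaled chi-square-type laws that concentrate near $d$ and $2d$ respectively, and it grows linearly in $d$. Equivalently, and more simply, I would apply data processing to the projection onto coordinates $2,\dots,d$. Each mixture then becomes a mixture of two centered product Gaussians, and I would use joint convexity to reduce to the comparison with $\mathcal N(0,2\Sigma)$ up to a bounded error coming from $\delta$; the chi-square statistic route handles this robustly.

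\textbf{Step 3: the final bound.} Since the conditions hold by Step~1, the last display follows directly from Theorem~\ref{thm:concrete-elp}, with $C_{\mathrm{disc}}$ depending only on the computed bounds. Its right-hand side is finite for every $T>0$ and every mesh.
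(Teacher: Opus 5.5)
Your Steps 1 and 3 agree with the paper. The exponent table is correct, since all mean and mean-difference terms vanish for $j\ge2$. The last display is Theorem~\ref{thm:concrete-elp} with $\sum_j\lambda_j^2/(\gamma_j\underline\sigma_j)=\sum_j j^{-2}$.

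The gap is in Step 2, which is the only nontrivial part. Pushing forward by $S_d(x)=\sum_{j=2}^d x_j^2/\sigma_j$ is legitimate; it is exactly the statistic the paper uses. But it does not remove the obstacle you identified. The laws of $S_d$ under $\rho_\star^d$ and under $\rho_0^d$ are still two-component mixtures:
\begin{itemize}
\item under $\rho_\star^d$, of $\sum_{j=2}^d Z_j^2$ and $\sum_{j=2}^d(1+j^{-6})Z_j^2$;
\item under $\rho_0^d$, of $\sum_{j=2}^d 2Z_j^2$ and $\sum_{j=2}^d(2+j^{-6})Z_j^2$.
\end{itemize}
So their KL divergence still does not factor, and ``it grows linearly in $d$'' is asserted rather than derived. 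Your fallback fails outright. Joint convexity gives $\KL(\sum_i w_iP_i\,\|\,\sum_i w_iQ_i)\le\sum_i w_i\KL(P_i\,\|\,Q_i)$, which is an \emph{upper} bound, so it can never yield a divergent lower bound. The separation of the components in coordinate $1$ is also irrelevant here; the tail coordinates alone drive the divergence.

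What is missing is a lower bound that works for mixtures. The paper uses the Donsker--Varadhan inequality with test function $-tS_d$:
\[
\KL(\rho_\star^d\,\|\,\rho_0^d)\ge -t\,\E_{\rho_\star^d}[S_d]-\log\E_{\rho_0^d}\bigl[e^{-tS_d}\bigr].
\]
This works for two reasons.
\begin{itemize}
\item The first term is linear in the mixture, and $\E_{\rho_\star^d}[S_d]\le (d-1)+\tfrac12\sum_{j\ge2}j^{-6}$.
\item In the second term, the component with tail variances $2\sigma_j+\delta_j$ has a smaller Laplace transform than the one with $2\sigma_j$. Hence $\E_{\rho_0^d}[e^{-tS_d}]\le(1+4t)^{-(d-1)/2}$.
\end{itemize}
Taking $t=1/8$ then gives linear growth.

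Alternatively, your concentration heuristic becomes rigorous after a second data-processing step onto the indicator of $A_d=\{S_d\le\tfrac32(d-1)\}$. Chebyshev applied to each component gives $p_d:=\rho_\star^d(A_d)\to1$ and $q_d:=\rho_0^d(A_d)=O(1/d)$. Hence
\[
\KL(\rho_\star^d\,\|\,\rho_0^d)\ge p_d\log\frac{p_d}{q_d}+(1-p_d)\log\frac{1-p_d}{1-q_d}\ge p_d\log\frac{1}{q_d}-\log 2\to\infty.
\]
Linear growth along this route would need Chernoff rather than Chebyshev bounds. One of these two arguments has to be supplied for the proposal to be complete.
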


\section{Numerical Experiments}
\label{sec:numerics}

We conclude with a simple numerical experiment showing the high-frequency stability issue of EM identified in Section~\ref{sec:em-limitations},  and showing that ELP remains dimension-robust even when the initial smoothed law becomes increasingly far
from the target as $d$  grows.
We consider a
two-component Gaussian mixture target with common covariance
$\Sigma^d=\operatorname{Diag}(\sigma_j)$, $\sigma_j=\lambda_j=j^{-6}$, and
preconditioner $\gamma_j=j^{-4}$, initialized from
$\rho_0^d=\rho_\star^d*\mathcal N(0,2SC^d)$ with $S=5$. Then
$\KL(\rho_\star^d\|\rho_0^d)$ grows linearly with $d$, while the ELP conditions \eqref{eq:diag-step1-a}--\eqref{eq:diag-step3}
hold, so that the KL error can be made uniformly small  in $d$. For EM, however, $\gamma_j/\sigma_j=j^2$, so the stability condition of Proposition~\ref{prop:em-linear-stability} requires time steps of order $d^{-2}$; any fixed step size becomes unstable as $d$ grows.
Figure~\ref{fig:em-elp-combined} compares the two schemes. EM rapidly becomes unstable in the
high-frequency coordinates, both in KL and in the coordinate-wise variance
profile at $d=50$, whereas ELP remains close to the target scale. Further details are given in Appendix~\ref{app:numerics-details}.

\begin{figure}[H]
    \centering
    \begin{minipage}[t]{0.48\linewidth}
        \centering
        \includegraphics[width=\linewidth]{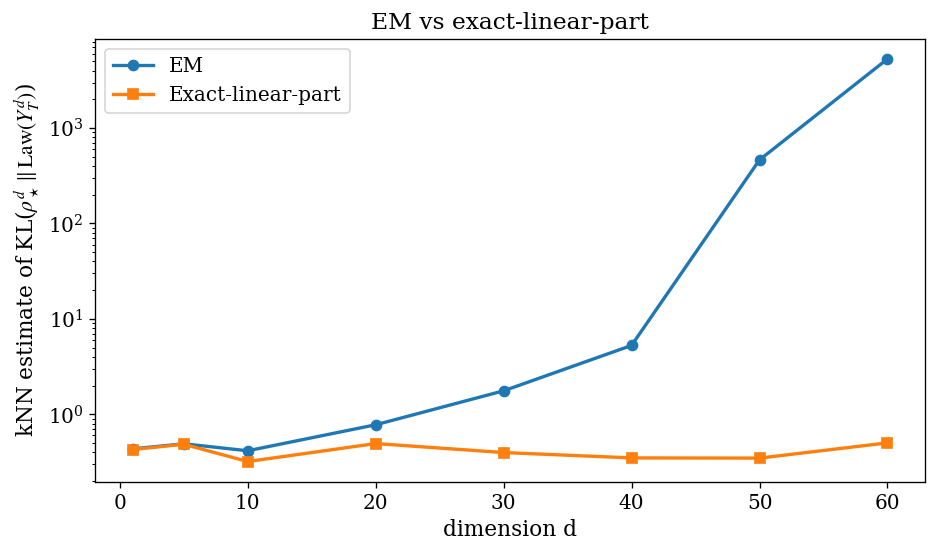}
    \end{minipage}
    \hfill
    \begin{minipage}[t]{0.48\linewidth}
        \centering
        \includegraphics[width=\linewidth]{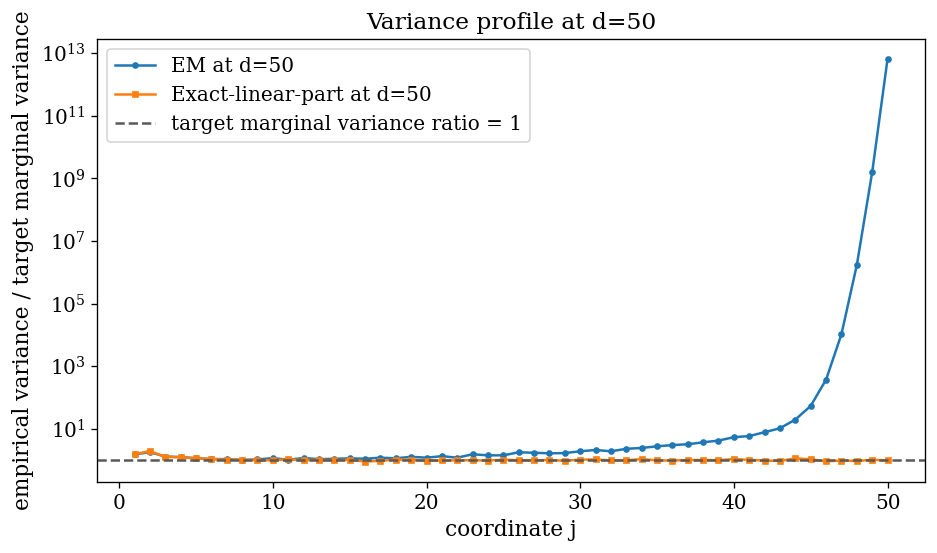}
    \end{minipage}
    \caption{Left: empirical $\KL(\rho_\star^d\|\Law(Y_T^d))$ versus dimension $d$ on
    a log scale. EM grows rapidly, while ELP remains stable. Right: coordinate-wise variance profile at $d=50$,
    normalized by the target marginal variance. EM is unstable in the
    high-frequency coordinates, while ELP remains near the target scale.  The KL is estimated by $k$NN
    with $k= 20$; robustness to $k$ is reported in Appendix~\ref{app:numerics-details}.}
    \label{fig:em-elp-combined}
\end{figure}

\section{Discussion and Future Work}

We addressed the problem of obtaining stable annealed Langevin dynamics under successive finite-dimensional approximations of an infinite-dimensional multimodal target. To the best of our knowledge, this is the first discretization analysis of ALD with guarantees that remain uniform as the finite-dimensional approximation is refined. For Gaussian mixtures,we showed that dimension-uniform control of the KL divergence from the target to the sampler law depends both on sufficient high-frequency damping by the preconditioner and a discretization that treats the high-frequency structure of the annealed score appropriately; otherwise, additional stability conditions may arise that sharply narrow the regimes in which the sampler admits dimension-uniform KL guarantees.

We analyzed two discretization schemes. For Euler--Maruyama, we showed that treating the linear part of the annealed score by a
forward Euler step imposes a stability condition which, combined with annealing-bias control, forces the smoothed law used to initialize ALD to
remain uniformly close to the target across dimensions. By
contrast, the ELP scheme of Section~\ref{sec:elp} achieves
dimension-uniform KL control even when the initial smoothed
law can be far from the target.
Together with the continuous-time analysis of
\cite{baldassari2026dimension}, this provides a unified picture of preconditioned ALD in infinite dimensions,
covering initialization mismatch, score perturbations, and discretization error.

As is common in infinite-dimensional analysis
\cite{baldassari2023conditional, pidstrigach2024infinite, baldassari2025preconditioned, baldassari2026dimension, franzese2025generative}, we worked under
structural assumptions that make the dimension-uniform conditions explicit in
terms of the relevant spectra. For example, we assumed that the mixture
covariances, smoothing covariance, and preconditioner are co-diagonalizable.
This allows these objects to be compared coordinate by coordinate, so that
dimension-uniform control can be expressed through explicit summability
conditions. Nevertheless, the model should not be viewed as simplistic: at each
fixed truncation level, mixtures with diagonal covariances can still represent a
broad class of multimodal distributions, especially when the number of
components is allowed to vary, or even be countable. With additional operator
estimates, the same strategy should extend to non-diagonal covariances and
non-diagonal preconditioners, for example under a common Loewner lower-envelope assumption on the annealed component covariances.

\bibliography{main_discretization_arxiv}
\bibliographystyle{plain}

%%%%%%%%%%%%%%%%%%%%%%%%%%%%%%%%%%%%%%%%%%%%%%%%%%%%%%%%%%%%
\newpage
\appendix

\section{Proofs of Section \ref{sec:em-limitations}}\label{app:proof-sec-3}
\subsection{Proof of Proposition \ref{prop:em-true-bias-obstruction}}\label{app:em-true-bias-obstruction}
Since the two mixture components differ only in the first coordinate, the annealed path is
\[
\pi_t^d
=
\frac12\,\mathcal N(ae_1^d,D_t^d)
+
\frac12\,\mathcal N(-ae_1^d,D_t^d),
\qquad
D_t^d=\Sigma^d+\kappa_t C^d.
\]
Its score is explicit:
\[
\nabla\log \pi_t^d(x)
=
\left(
-\frac{x_1}{d_1(t)}
+
\frac{a}{d_1(t)}
\tanh\!\left(\frac{a x_1}{d_1(t)}\right),
\;
-\frac{x_2}{d_2(t)},
\ldots,
-\frac{x_d}{d_d(t)}
\right),
\]
where
\[
d_j(t):=\sigma_j+\kappa_t\lambda_j.
\]
Hence the ALD dynamics decouples into a nonlinear first coordinate and Gaussian tail coordinates. More precisely, for every $j\ge 2$,
\[
dX_{t,j}
=
-\frac{\gamma_j}{d_j(t)}X_{t,j}\,dt
+
\sqrt{2\gamma_j}\,dW_{t,j},
\qquad
X_{0,j}\sim \mathcal N(0,\sigma_j+\lambda_j).
\]

It follows that the target and the terminal law factorize as
\[
\pi_\star^d
=
\nu_{\star,1}\otimes\bigotimes_{j=2}^d \mathcal N(0,\sigma_j),
\qquad
\operatorname{Law}(X_T^d)
=
\mu_{T,1}\otimes\bigotimes_{j=2}^d \mathcal N(0,p_j(T)),
\]
for suitable one-dimensional laws $\nu_{\star,1}$ and $\mu_{T,1}$, where $p_j(T)$ is the variance of the $j$-th coordinate at time $T$. By additivity of relative entropy,
\begin{equation}\label{eq:split-bias}
\operatorname{KL}(\pi_\star^d\,\|\,\operatorname{Law}(X_T^d))
=
\operatorname{KL}(\nu_{\star,1}\,\|\,\mu_{T,1})
+
\sum_{j=2}^d
\operatorname{KL}\!\big(\mathcal N(0,\sigma_j)\,\|\,\mathcal N(0,p_j(T))\big).
\end{equation}

We now compute the tail terms explicitly. For $j\ge 2$, the variance $p_j(t)$ solves
\[
p_j'(t)
=
2\gamma_j\left(1-\frac{p_j(t)}{d_j(t)}\right),
\qquad
p_j(0)=\sigma_j+\lambda_j.
\]
Set
\[
r_j:=\frac{\lambda_j}{\sigma_j},
\qquad
\alpha_j:=\frac{2\gamma_j T}{\lambda_j},
\qquad
\Psi(\alpha,r):=\int_1^{1+r}u^{-\alpha}\,du.
\]
A direct computation gives
\[
\frac{p_j(T)-\sigma_j}{\sigma_j}
=
\Psi(\alpha_j,r_j).
\]
Therefore
\[
\operatorname{KL}\!\big(\mathcal N(0,\sigma_j)\,\|\,\mathcal N(0,p_j(T))\big)
=
F\!\big(\Psi(\alpha_j,r_j)\big),
\]
where
\[
F(u):=\frac12\left(\log(1+u)-\frac{u}{1+u}\right),
\qquad u\ge 0.
\]

Now let
\[
M:=\sup_{j\ge 2}\frac{\gamma_j}{\sigma_j}<\infty.
\]
Then
\[
\alpha_j
=
\frac{2\gamma_j T}{\lambda_j}
=
\frac{2T}{r_j}\frac{\gamma_j}{\sigma_j}
\le
\frac{2TM}{r_j}.
\]
Since $\Psi(\alpha,r)$ is decreasing in $\alpha$, writing $c:=2TM$ yields
\[
\Psi(\alpha_j,r_j)\ge \Psi(c/r_j,r_j).
\]
Moreover,
\[
\Psi(c/r,r)
=
r\int_0^1 (1+rs)^{-c/r}\,ds.
\]
Using $\log(1+rs)\le rs$, we obtain
\[
(1+rs)^{-c/r}
=
\exp\!\left(-\frac{c}{r}\log(1+rs)\right)
\ge e^{-cs},
\]
and therefore
\[
\Psi(c/r,r)
\ge
r\int_0^1 e^{-cs}\,ds
=
\beta r,
\qquad
\beta:=\frac{1-e^{-c}}{c}>0.
\]
Thus
\[
\Psi(\alpha_j,r_j)\ge \beta r_j
\qquad\text{for all }j\ge 2.
\]

Combining this with \eqref{eq:split-bias}, the assumed uniform bound on the true annealing bias implies
\[
\sup_{d\ge 1}\sum_{j=2}^d F(\beta r_j)<\infty.
\]
If infinitely many $r_j$ were larger than $1/\beta$, then infinitely many summands would be bounded below by the positive constant $F(1)$, which is impossible. Hence $r_j\le 1/\beta$ for all but finitely many $j$. For such $j$, we have $\beta r_j\le 1$, and since
\[
F(u)=\int_0^u \frac{s}{2(1+s)^2}\,ds,
\]
it follows that for $0\le u\le 1$,
\[
F(u)\ge \frac{u^2}{16}.
\]
Hence, for all sufficiently large $j$,
\[
F(\beta r_j)\ge \frac{\beta^2}{16}r_j^2.
\]
Therefore
\[
\sum_{j\ge 2} r_j^2<\infty.
\]

Finally, the initial annealed law also factorizes:
\[
\pi_0^d
=
\nu_{0,1}\otimes\bigotimes_{j=2}^d \mathcal N(0,\sigma_j+\lambda_j),
\]
for a suitable one-dimensional law $\nu_{0,1}$. Thus
\[
\operatorname{KL}(\pi_\star^d\,\|\,\pi_0^d)
=
\operatorname{KL}(\nu_{\star,1}\,\|\,\nu_{0,1})
+
\sum_{j=2}^d F(r_j).
\]
Since $\sum_{j\ge 2}r_j^2<\infty$, we have $r_j\to 0$, so for all sufficiently large $j$ also $r_j\le 1$, and then
\[
F(r_j)\le \frac{r_j^2}{4}.
\]
Hence
\[
\sum_{j\ge 2}F(r_j)<\infty,
\]
and the first-coordinate contribution is a fixed constant independent of $d$. Therefore
\[
\sup_{d\ge 1}\operatorname{KL}(\pi_\star^d\,\|\,\pi_0^d)<\infty.
\]
This proves the claim.

\subsection{Proof of Proposition \ref{prop:em-kl}}\label{app:em-kl}

By joint convexity of relative entropy,
\[
\operatorname{KL}(\rho_\star^d\,\|\,\rho_0^d)
\le
\sum_{i\in I} w_i\,
\operatorname{KL}\!\Big(\mathcal N(m_i^d,\Sigma_i^d)\,\Big\|\,\mathcal N(m_i^d,\Sigma_i^d+C^d)\Big).
\]
Since the means coincide, the Gaussian relative entropy formula yields
\[
\operatorname{KL}(\rho_\star^d\,\|\,\rho_0^d)
\le
\frac12\sum_{i\in I} w_i\sum_{j=1}^d
\left[
\log\!\left(1+\frac{\lambda_j}{\sigma_{ij}}\right)
-
\frac{\lambda_j}{\sigma_{ij}+\lambda_j}
\right].
\]
For $r\ge 0$, the scalar function
\[
f(r):=\log(1+r)-\frac{r}{1+r}
\]
satisfies $f(r)\le r^2/2$. Therefore
\[
\operatorname{KL}(\rho_\star^d\,\|\,\rho_0^d)
\le
\frac14\sum_{i\in I} w_i\sum_{j=1}^d \frac{\lambda_j^2}{\sigma_{ij}^2}.
\]

\section{Proofs of Section \ref{sec:elp}}

\subsection{Proof of Theorem \ref{thm:concrete-elp}}\label{app:concrete-elp}

The proof has two parts. Recall that the score is decomposed as
\[
\nabla\log\rho_t^d(x)=-(B_t^d)^{-1}x+G_t^d(x),
\]
where $G_t^d$ is the nonlinear mixture correction. The ELP interpolation
$Y^d$ integrates the stiff diagonal linear part exactly, but freezes this nonlinear
correction on each discretization interval $[t_n,t_{n+1})$ at its left-endpoint value
$G_{t_n}^d(Y_{t_n}^d)$. To compare this scheme with the target, we introduce an
auxiliary process $\widehat X^d$ whose marginals exactly follow the annealing
path, that is, $\Law(\widehat X_t^d)=\rho_t^d$. When the two dynamics are
compared along the auxiliary path, the auxiliary process uses the current
nonlinear correction $G_t^d(\widehat X_t^d)$, whereas the ELP interpolation
uses the frozen correction $G_{t_n}^d(\widehat X_{t_n}^d)$. Thus the path-space KL estimate over $[0,T]$ between the law of the auxiliary process and the law of the ELP interpolation separates the error into an \emph{annealing contribution}, coming from
the transport velocity of the annealing path, and a \emph{freezing defect}
\[
G_t^d(\widehat X_t^d)-G_{t_n}^d(\widehat X_{t_n}^d),
\qquad t\in[t_n,t_{n+1}).
\]
The second part of the proof verifies that the coefficient assumptions
\eqref{eq:diag-step1-a}--\eqref{eq:diag-step3} control both contributions
uniformly in $d$, while tracking the dependence on the horizon $T$.

In this proof, we shall split the freezing defect into a \emph{spatial part}, where the time is fixed but
the spatial argument changes, and a \emph{temporal part}, where the spatial argument is
fixed but the time changes:
\[
G_t^d(\widehat X_t^d)-G_{t_n}^d(\widehat X_{t_n}^d)
=
\bigl(G_t^d(\widehat X_t^d)-G_t^d(\widehat X_{t_n}^d)\bigr)
+
\bigl(G_t^d(\widehat X_{t_n}^d)-G_{t_n}^d(\widehat X_{t_n}^d)\bigr).
\]
The estimates below are organized around this decomposition.

In what follows, for notational simplicity, we do not distinguish between
finite and countable mixtures. In the countable case, the argument can be
justified by applying the estimates to finite partial mixtures and then passing
to the limit. The normalization constants of the partial mixtures converge to
one and cancel in the responsibility formulas; moreover, the constants in the
estimates depend only on the uniform quantities appearing in the assumptions.

\paragraph{Coefficient Notation.}
We start by recording the coefficient notation used throughout. For each
$i\in I$ and $j\ge1$, we write
\[
v_{i,t,j}:=\sigma_{ij}+\kappa_t\lambda_j,
\qquad
D_{i,t}^d=\operatorname{Diag}(v_{i,t,1},\dots,v_{i,t,d}).
\]
We also define
\[
\delta\sigma_j:=\overline{\sigma}_j-\underline{\sigma}_j,
\qquad
\Delta m_j:=\sup_{i,\ell\in I}|m_{ij}-m_{\ell j}|,
\]
and
\[
A_j:=\frac{\delta\sigma_j}{\underline{\sigma}_j^2},
\qquad
B_j:=
\frac{\Delta m_j}{\underline{\sigma}_j}
+
\frac{\overline m_j\,\delta\sigma_j}{\underline{\sigma}_j^2},
\qquad
D_j:=\frac{\lambda_j\delta\sigma_j}{\underline{\sigma}_j^3}.
\]
With this notation, the summability assumptions
\eqref{eq:diag-step1-a}--\eqref{eq:diag-step3} become
\[
\sum_{j\ge1}(\overline\sigma_j+\lambda_j)<\infty,
\qquad
\sum_{j\ge1}\overline m_j^2<\infty,
\]
\[
\sum_{j\ge1}\gamma_j<\infty,
\qquad
\sum_{j\ge1}\gamma_j^2
\frac{\overline\sigma_j+\lambda_j}{\underline\sigma_j^2}<\infty,
\qquad
\sum_{j\ge1}\gamma_j^2
\frac{\overline m_j^2}{\underline\sigma_j^2}<\infty,
\]
\[
\sum_{j\ge1}\gamma_jA_j^2<\infty,
\qquad
\sum_{j\ge1}\gamma_j\frac{\overline m_j^2}{\underline\sigma_j^2}<\infty,
\]
\[
\sum_{j\ge1}A_j^2(\overline\sigma_j+\lambda_j+\overline m_j^2)<\infty,
\qquad
\sum_{j\ge1}B_j^2<\infty,
\]
\[
\sum_{j\ge1}D_j(\overline\sigma_j+\lambda_j+\overline m_j^2)<\infty,
\qquad
\sum_{j\ge1}\gamma_jD_j^2(\overline\sigma_j+\lambda_j+\overline m_j^2)<\infty,
\]
\[
\sum_{j\ge1}\lambda_j^2\frac{B_j^2}{\underline\sigma_j^2}<\infty,
\qquad
\sum_{j\ge1}\gamma_j\lambda_j^2\frac{\overline m_j^2}{\underline\sigma_j^4}<\infty,
\]
and
\[
\sum_{j\ge1}\frac{\lambda_j^2}{\gamma_j\underline\sigma_j}<\infty.
\]
Recall that
\[
G_t^d(x)=\sum_{i\in I}p_{i,t}^d(x)c_{i,t}^d(x),
\]
where $p_{i,t}^d(x)$ are the mixture responsibilities and
\[
c_{i,t}^d(x)
=
\Big((B_t^d)^{-1}-(D_{i,t}^d)^{-1}\Big)x
+
(D_{i,t}^d)^{-1}m_i^d
\]
is the affine correction associated with the $i$-th component. The terms
involving $A_j$ control the spatial variation of these affine corrections; the
terms involving $B_j$ control the spatial variation of the responsibilities;
and the terms involving $D_j$ control the time variation caused by the
annealing schedule. The final condition involving
$\lambda_j^2/(\gamma_j\underline\sigma_j)$ controls the annealing energy.

\paragraph{The Auxiliary Path and the Quantities to Be Estimated.}
On each discretization interval $[t_n,t_{n+1})$, the ELP scheme freezes $G_t^d$ at time $t_n$, and therefore does not
follow the annealing path exactly. To compare it with the target, we introduce
a reference process whose marginals are exactly $\rho_t^d$. The extra drift
below is the transport velocity associated with the evolution of the annealing
path:
\begin{equation}\label{eq:app-u-def}
u_t^d(x):=\frac{1}{2T}C^d\nabla\log\rho_t^d(x).
\end{equation}
Let $(\widehat X_t^d)_{t\in[0,T]}$ solve
\begin{equation}\label{eq:app-reference}
d\widehat X_t^d
=
-\Gamma^d(B_t^d)^{-1}\widehat X_t^d\,dt
+
\Gamma^dG_t^d(\widehat X_t^d)\,dt
+
u_t^d(\widehat X_t^d)\,dt
+
\sqrt{2\Gamma^d}\,dW_t^d,
\qquad
\widehat X_0^d\sim\rho_0^d.
\end{equation}
Since
\[
\partial_t\rho_t^d
=
-\nabla\cdot\left(
\frac{1}{2T}C^d\nabla\log\rho_t^d\,\rho_t^d
\right),
\]
and the Langevin part with drift $\Gamma^d\nabla\log\rho_t^d$ leaves
$\rho_t^d$ instantaneously invariant, the Fokker-Planck equation associated
with \eqref{eq:app-reference} is solved by $\rho_t^d$. Hence
\[
\Law(\widehat X_t^d)=\rho_t^d,
\qquad t\in[0,T],
\]
and in particular
\[
\Law(\widehat X_T^d)=\rho_T^d=\rho_\star^d.
\]

We shall use the corresponding path-matching energy
\begin{equation}\label{eq:app-Jann}
J_{\mathrm{ann}}^d(T)
:=
\frac14\int_0^T\int_{\R^d}
\left\|
(\Gamma^d)^{-1/2}u_t^d(x)
\right\|^2
\,\rho_t^d(dx)\,dt.
\end{equation}

Following \cite{guo2024provable}, which builds on
\cite{dalalyan2012sparse, chen2022sampling}, the comparison with the ELP
interpolation is based on a Girsanov estimate on path space over $[0,T]$.
Consequently, the relevant quantity is the drift mismatch between the auxiliary
process and the ELP interpolation. On an interval $[t_n,t_{n+1})$, this mismatch
is
\[
u_t^d(\widehat X_t^d)
+
\Gamma^d
\Bigl(
G_t^d(\widehat X_t^d)
-
G_{t_n}^d(\widehat X_{t_n}^d)
\Bigr).
\]
After weighting by $(\Gamma^d)^{-1/2}$, the two quantities to control are
therefore
\[
(\Gamma^d)^{-1/2}u_t^d(\widehat X_t^d)
\]
and
\begin{equation}\label{eq:app-freezing-defect}
\Delta_t^d
:=
(\Gamma^d)^{1/2}
\Bigl(
G_t^d(\widehat X_t^d)
-
G_{t_n}^d(\widehat X_{t_n}^d)
\Bigr).
\end{equation}
The first term gives the \emph{annealing contribution} $J_{\mathrm{ann}}^d(T)$. The second
term is the \emph{freezing defect}. As anticipated, we decompose it as
\[
\Delta_t^d
=
\underbrace{
(\Gamma^d)^{1/2}
\bigl(G_t^d(\widehat X_t^d)-G_t^d(\widehat X_{t_n}^d)\bigr)
}_{\text{spatial freezing defect}}
+
\underbrace{
(\Gamma^d)^{1/2}
\bigl(G_t^d(\widehat X_{t_n}^d)-G_{t_n}^d(\widehat X_{t_n}^d)\bigr)
}_{\text{temporal freezing defect}}.
\]
The proof will reduce to three estimates:
\[
J_{\mathrm{ann}}^d(T)
\lesssim
\frac1T
\sum_{j\ge1}\frac{\lambda_j^2}{\gamma_j\underline\sigma_j},
\]
\[
\E\left\|
(\Gamma^d)^{1/2}
\bigl(G_t^d(\widehat X_t^d)-G_t^d(\widehat X_s^d)\bigr)
\right\|^2
\lesssim
\left(1+T+\frac1T\right)(t-s),
\]
and
\[
\E\left\|
(\Gamma^d)^{1/2}
\bigl(G_t^d(\widehat X_s^d)-G_s^d(\widehat X_s^d)\bigr)
\right\|^2
\lesssim
\frac{(t-s)^2}{T^2}.
\]
The lemmas below prove these three estimates.

\paragraph{Moment and Transport Estimates.}
We begin with two preliminary estimates. The first gives uniform moment bounds
for the annealing path $\rho_t^d$, and hence for the marginals of
$\widehat X_t^d$. These bounds are needed later because the freezing defect
contains the random fields
\[
G_t^d(\widehat X_t^d)
\qquad\text{and}\qquad
G_t^d(\widehat X_{t_n}^d),
\]
and because the spatial freezing defect will require control of increments
\[
\widehat X_t^d-\widehat X_s^d.
\]
The second estimate controls the transport field $u_t^d$. It yields the bound
on the annealing contribution $J_{\mathrm{ann}}^d(T)$ and also provides a
fourth-moment estimate for $u_t^d(\widehat X_t^d)$, which will enter the
increment estimates below.

\begin{lemma}\label{lem:app-moments}
Under \eqref{eq:diag-step1-a}, there exist constants $M_2,M_4,M_8$,
independent of $d$, $t$, and $T$, such that
\[
\sup_{d\ge1}\sup_{t\in[0,T]}
\int_{\R^d}\|x\|^q\,\rho_t^d(dx)\le M_q,
\qquad q=2,4,8.
\]
\end{lemma}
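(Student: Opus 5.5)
The plan is to exploit the explicit mixture structure of $\rho_t^d$: since $\rho_t^d=\sum_{i\in I}w_i\,\mathcal N(m_i^d,D_{i,t}^d)$ with $D_{i,t}^d=\operatorname{Diag}(v_{i,t,1},\dots,v_{i,t,d})$, we have
\[
\int_{\R^d}\|x\|^q\,\rho_t^d(dx)=\sum_{i\in I}w_i\,\E\|Z_i\|^q,
\qquad Z_i\sim\mathcal N(m_i^d,D_{i,t}^d),
\]
so it suffices to bound $\E\|Z_i\|^q$ uniformly in $i$, $d$, $t$ (and $T$, which enters only through $\kappa_t\in[0,1]$). Writing $Z_i=m_i^d+(D_{i,t}^d)^{1/2}\xi$ with $\xi\sim\mathcal N(0,I_d)$, the elementary inequality $\|a+b\|^q\le 2^{q-1}(\|a\|^q+\|b\|^q)$ reduces the task to bounding $\|m_i^d\|^q$ and $\E\|(D_{i,t}^d)^{1/2}\xi\|^q$ separately.

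For the mean part, $\|m_i^d\|^2=\sum_{j\le d}m_{ij}^2\le\sum_{j\ge1}\overline m_j^2=:S_m<\infty$ by \eqref{eq:diag-step1-a}, so $\|m_i^d\|^q\le S_m^{q/2}$ uniformly. For the Gaussian part, $\|(D_{i,t}^d)^{1/2}\xi\|^2=\sum_{j\le d}v_{i,t,j}\xi_j^2$ with $0<v_{i,t,j}\le\overline\sigma_j+\lambda_j$ since $\kappa_t\le1$. Setting $S:=\sum_{j\ge1}(\overline\sigma_j+\lambda_j)<\infty$, I would apply Minkowski's inequality in $L^{q/2}$ to the sum of nonnegative terms:
\[
\Big\|\sum_{j\le d}v_{i,t,j}\xi_j^2\Big\|_{L^{q/2}}\le\sum_{j\le d}v_{i,t,j}\|\xi_j^2\|_{L^{q/2}}\le S\,(\E|\xi_1|^q)^{2/q},
\]
which gives $\E\|(D_{i,t}^d)^{1/2}\xi\|^q\le S^{q/2}\E|\xi_1|^q$, a bound that uses only Gaussian moments of order $q\le 8$. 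Alternatively one can use Jensen with weights $v_{i,t,j}/\sum_k v_{i,t,k}$. Combining the two parts yields $M_q:=2^{q-1}(S_m^{q/2}+S^{q/2}\E|\xi_1|^q)$, independent of $d,t,T,i$, and hence of the mixture weights after summing over $i$.

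There is no genuine obstacle here. The only point needing care is uniformity in $i$ for countable $I$, which is handled because the bounds use only $\overline\sigma_j$ and $\overline m_j$, together with the remark that the sum over $i$ of weights is one. The remaining detail is that the mixture representation of $\rho_t^d$ (convolution of each component with $\mathcal N(0,\kappa_tC^d)$) holds for all $t\in[0,T]$, including $t=T$ where $\kappa_T=0$.
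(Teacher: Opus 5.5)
Your proposal is correct and follows essentially the same route as the paper. You reduce to a single Gaussian component via the mixture representation, split off the mean with $\|a+b\|^q\le 2^{q-1}(\|a\|^q+\|b\|^q)$, and bound the centered part by Minkowski's inequality applied to $\sum_j v_{i,t,j}\xi_j^2$ using $v_{i,t,j}\le\overline\sigma_j+\lambda_j$. The only cosmetic difference is that you treat $q=2,4,8$ uniformly via Minkowski in $L^{q/2}$, whereas the paper handles $q=4$ by a direct computation and $q=8$ by Minkowski in $L^4$; both give the same constants.
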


\begin{proof}
Fix $i,d,t$. Since
\[
v_{i,t,j}=\sigma_{ij}+\kappa_t\lambda_j
\le \overline\sigma_j+\lambda_j,
\]
we have
\[
\operatorname{Tr}(D_{i,t}^d)
\le
\sum_{j\ge1}(\overline\sigma_j+\lambda_j)<\infty.
\]
Moreover,
\[
\|m_i^d\|^2
=
\sum_{j=1}^d m_{ij}^2
\le
\sum_{j\ge1}\overline m_j^2<\infty.
\]
Thus the second moments are uniformly bounded over components, $d$, and $t$.

For the fourth moment, let $Z=m+\xi$, with $\xi\sim\mathcal N(0,D)$ and $D$ diagonal. Then
\[
\|Z\|^4\le 8\|m\|^4+8\|\xi\|^4.
\]
Now
\[
\E\|\xi\|^4
=
\E\left(\sum_{j=1}^d\xi_j^2\right)^2
\le
3\left(\sum_{j=1}^d D_{jj}\right)^2
=
3(\operatorname{Tr}D)^2.
\]
Hence
\[
\mathbb E\|Z\|^4 \le 8 \|m\|^4 +24 (\operatorname{Tr}D)^2.
\]

For the eighth moment, again
\[
\|Z\|^8\le 2^7\|m\|^8+2^7\|\xi\|^8.
\]
By Minkowski's inequality in $L^4$,
\[
\left(\E\|\xi\|^8\right)^{1/4}
=
\left\|
\sum_{j=1}^d\xi_j^2
\right\|_{L^4}
\le
\sum_{j=1}^d
\|\xi_j^2\|_{L^4}
=
\sum_{j=1}^d
\left(\E|\xi_j|^8\right)^{1/4}.
\]
For a centered one-dimensional Gaussian with variance $D_{jj}$,
\[
\E|\xi_j|^8=105D_{jj}^4.
\]
Therefore
\[
\E\|\xi\|^8
\le
105(\operatorname{Tr}D)^4.
\]
It follows that 
\[
\mathbb E \|Z\|^8 \leq 2^7 \|m\|^8 + 2^7 \cdot 105(\operatorname{Tr} D)^4. 
\]
Applying these estimates to each Gaussian component $\mathcal N(m_i^d, D_{i,t}^d)$ and using the uniform bounds on $\left\|m_i^d\right\|^2$ and $\operatorname{Tr} D_{i,t}^d$ proves the required uniform fourth and eighth moment estimates. 
\end{proof}

We next control the transport field associated with the annealing path. This is
the term that produces the annealing contribution in the path-space KL estimate.

\begin{lemma}\label{lem:app-transport}
 Under $\sum_j\gamma_j<\infty$ and \eqref{eq:diag-step3}, we have
\[
\sup_{d\ge1}J_{\mathrm{ann}}^d(T)
\le
\frac{1}{16T}
\sum_{j\ge1}
\frac{\lambda_j^2}{\gamma_j\underline\sigma_j}.
\]
Moreover, there exists a constant $C_u$, independent of $d$ and $T$,
such that
\[
\sup_{d\ge1}\sup_{t\in[0,T]}
\E\|u_t^d(\widehat X_t^d)\|^4
\le
\frac{C_u}{T^4}.
\]
\end{lemma}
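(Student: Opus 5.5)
The plan is to reduce both claims to coordinatewise Gaussian computations. I will use two facts: the joint convexity of $(\rho,\nabla\rho)\mapsto |\partial_j\rho|^2/\rho$ (the perspective of a square), and the identity $\E_{\rho_t^d}[p_{i,t}^d(X)f(X)]=w_i\E_{\varphi_{i,t}^d}[f]$. Since $u_t^d=\frac1{2T}C^d\nabla\log\rho_t^d$ is diagonal in the basis, I first write
\[
J_{\mathrm{ann}}^d(T)=\frac{1}{16T^2}\int_0^T\sum_{j=1}^d\frac{\lambda_j^2}{\gamma_j}\int_{\R^d}\bigl(\partial_j\log\rho_t^d\bigr)^2\rho_t^d(dx)\,dt .
\]
The claim then follows from a bound on the $j$-th coordinate Fisher information of $\rho_t^d$ that is uniform in $t$.

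\textbf{Step 1 (Fisher information).} Since $\rho_t^d=\sum_i w_i\varphi_{i,t}^d$, convexity of the perspective map gives
\[
\int(\partial_j\rho_t^d)^2/\rho_t^d\le\sum_i w_i\int(\partial_j\varphi_{i,t}^d)^2/\varphi_{i,t}^d .
\]
The $i$-th term on the right equals $1/v_{i,t,j}$ for the Gaussian component. Because $v_{i,t,j}=\sigma_{ij}+\kappa_t\lambda_j\ge\underline\sigma_j$, each coordinate Fisher information is at most $1/\underline\sigma_j$. Integrating over $[0,T]$ produces a factor $T$, which gives exactly
\[
J_{\mathrm{ann}}^d(T)\le\frac{1}{16T}\sum_{j}\frac{\lambda_j^2}{\gamma_j\underline\sigma_j},
\]
uniformly in $d$. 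For a countable mixture I would apply convexity to finite partial sums and pass to the limit using Fatou's lemma, as the paper's convention allows.

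\textbf{Step 2 (fourth moment).} I write $s=\nabla\log\rho_t^d=\sum_i p_{i,t}^d\,\bigl(-(D_{i,t}^d)^{-1}(x-m_i^d)\bigr)$, so that $\|u_t^d\|^2=\frac1{4T^2}\sum_j\lambda_j^2s_j^2$. The responsibilities are probability weights, so Jensen's inequality gives
\[
\lambda_j^2s_j^2\le\sum_ip_{i,t}^d\,\lambda_j^2\frac{(x_j-m_{ij})^2}{v_{i,t,j}^2}.
\]
Summing over $j$ and applying Jensen again to the square yields
\[
\Bigl(\sum_j\lambda_j^2s_j^2\Bigr)^2\le\sum_ip_{i,t}^d\Bigl(\sum_j\lambda_j^2\frac{(x_j-m_{ij})^2}{v_{i,t,j}^2}\Bigr)^2 .
\]
Taking expectation under $\rho_t^d=\Law(\widehat X_t^d)$ turns each term into a Gaussian expectation under $\xi\sim\mathcal N(0,D_{i,t}^d)$. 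As in Lemma~\ref{lem:app-moments}, $\E(\sum_j a_j\xi_j^2)^2\le3(\sum_ja_jv_{i,t,j})^2$, so
\[
\E\|u_t^d(\widehat X_t^d)\|^4\le\frac{3}{16T^4}\sum_iw_i\Bigl(\sum_j\frac{\lambda_j^2}{v_{i,t,j}}\Bigr)^2\le\frac{3}{16T^4}\Bigl(\sum_j\frac{\lambda_j^2}{\underline\sigma_j}\Bigr)^2 .
\]

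\textbf{Step 3 (finiteness of the constant).} It remains to show $\sum_j\lambda_j^2/\underline\sigma_j<\infty$, and this is where the hypothesis $\sum_j\gamma_j<\infty$ enters. It implies $\bar\gamma:=\sup_j\gamma_j<\infty$, hence
\[
\frac{\lambda_j^2}{\underline\sigma_j}\le\bar\gamma\,\frac{\lambda_j^2}{\gamma_j\underline\sigma_j},
\]
and the sum is finite by \eqref{eq:diag-step3}. This gives $C_u=\frac3{16}\bigl(\bar\gamma\sum_j\lambda_j^2/(\gamma_j\underline\sigma_j)\bigr)^2$, which is independent of $d$, $t$ and $T$.

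The only delicate point is Step 1: the bound must be uniform up to $t=T$, where $\kappa_t\to0$. This is exactly why I bound $v_{i,t,j}$ from below by $\underline\sigma_j$ rather than by $\kappa_t\lambda_j$, and why the convexity/Jensen argument is used in place of any pointwise bound on the responsibilities.
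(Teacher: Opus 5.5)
Your proof is correct and follows essentially the same route as the paper. Your perspective-convexity bound on the coordinatewise Fisher information is the paper's Jensen inequality over the responsibilities $p_{i,t}^d$, combined with $\int p_{i,t}^d f\,d\rho_t^d = w_i\int f\,d\varphi_{i,t}^d$; your fourth-moment Gaussian computation and your use of $\sup_j\gamma_j<\infty$ match the paper's argument, and your $C_u$ is only a slightly larger admissible constant.
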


\begin{proof}
The argument follows that of \cite[Theorem 3.1]{baldassari2026dimension}. Write
\[
s_{i,t}^d(x):=-(D_{i,t}^d)^{-1}(x-m_i^d).
\]
Then
\[
\nabla\log\rho_t^d(x)
=
\sum_{i\in I}p_{i,t}^d(x)s_{i,t}^d(x).
\]
By Jensen's inequality,
\[
\left\|(\Gamma^d)^{-1/2}u_t^d(x)\right\|^2
\le
\frac{1}{4T^2}
\sum_{i\in I}p_{i,t}^d(x)
\left\|
(\Gamma^d)^{-1/2}C^d s_{i,t}^d(x)
\right\|^2.
\]
Integrating with respect to $\rho_t^d$ and using the mixture representation,
\begin{align*}
\int_{\R^d}
\left\|(\Gamma^d)^{-1/2}u_t^d(x)\right\|^2
\,\rho_t^d(dx)
\le
\frac{1}{4T^2}
\sum_{i\in I}w_i
\sum_{j=1}^d
\frac{\lambda_j^2}{\gamma_j}
\E_{\mathcal N(m_i^d,D_{i,t}^d)}
\left[
\frac{(X_j-m_{ij})^2}{v_{i,t,j}^2}
\right].
\end{align*}
Since
\[
\E_{\mathcal N(m_i^d,D_{i,t}^d)}
\left[
(X_j-m_{ij})^2
\right]
=
v_{i,t,j},
\]
we get
\[
\int_{\R^d}
\left\|(\Gamma^d)^{-1/2}u_t^d(x)\right\|^2
\,\rho_t^d(dx)
\le
\frac{1}{4T^2}
\sum_{j=1}^d
\frac{\lambda_j^2}{\gamma_j\underline\sigma_j}.
\]
Integrating in time and multiplying by $1/4$ according to \eqref{eq:app-Jann} gives the desired bound on
$J_{\mathrm{ann}}^d(T)$.

For the fourth moment, Jensen's inequality gives
\[
\|u_t^d(x)\|^4
\le
\frac{1}{16T^4}
\sum_{i\in I}p_{i,t}^d(x)
\|C^d s_{i,t}^d(x)\|^4.
\]
Integrating against $\rho_t^d$ and using the mixture decomposition,
\[
\E\|u_t^d(\widehat X_t^d)\|^4
\le
\frac{1}{16T^4}
\sum_{i\in I}w_i
\E_{\mathcal N(m_i^d,D_{i,t}^d)}
\|C^d s_{i,t}^d(X)\|^4.
\]
Under the $i$-th component $\mathcal N(m_i^d,D_{i,t}^d)$, the coordinates of
$C^d s_{i,t}^d(X)$ are independent centered Gaussians with variances
$\lambda_j^2/v_{i,t,j}$. Hence
\[
\E_{\mathcal N(m_i^d,D_{i,t}^d)}
\|C^d s_{i,t}^d(X)\|^4=
\Bigl(\sum_{j=1}^d\frac{\lambda_j^2}{v_{i,t,j}}\Bigr)^2
+2\sum_{j=1}^d \frac{\lambda_j^4}{v_{i,t,j}^2}\le
3\left(
\sum_{j=1}^d\frac{\lambda_j^2}{v_{i,t,j}}
\right)^2
\le
3\left(
\sum_{j\ge1}\frac{\lambda_j^2}{\underline\sigma_j}
\right)^2.
\]
Under \eqref{eq:diag-step3},
\[
\sum_{j\ge1}\frac{\lambda_j^2}{\underline\sigma_j}
\le
\left(\sup_{j\ge1}\gamma_j\right)
\sum_{j\ge1}\frac{\lambda_j^2}{\gamma_j\underline\sigma_j}
<\infty,
\]
because $\sum_j\gamma_j<\infty$ implies $\sup_j\gamma_j<\infty$. Thus the
fourth-moment estimate holds with
\[
C_u:=
\frac{3}{16}
\left(
\sum_{j\ge1}\frac{\lambda_j^2}{\underline\sigma_j}
\right)^2,
\]
which is independent of $d$ and $T$.
\end{proof}

\paragraph{Drift and Increment Estimates.}
The next estimates are used to control the spatial freezing defect. We
will compare
\[
G_t^d(\widehat X_t^d)
\qquad\text{and}\qquad
G_t^d(\widehat X_s^d).
\]
The spatial variation of $G_t^d$ is controlled by a Lipschitz factor with
polynomial growth:
\[
\|(\Gamma^d)^{1/2}(G_t^d(x)-G_t^d(y))\|
\lesssim
L_t^d(x,y)\|x-y\|.
\]
After substituting $x=\widehat X_t^d$ and $y=\widehat X_s^d$, the spatial
freezing estimate therefore involves a product of the form
\[
\E\Big[
L_t^d(\widehat X_t^d,\widehat X_s^d)^2
\,
\|\widehat X_t^d-\widehat X_s^d\|^2
\Big].
\]
The moment bounds for the annealing path control the first factor, while
H\"older's inequality leaves an $L^4$ norm of the increment. This is why we need
a fourth-moment estimate for the auxiliary path:
\[
\E\|\widehat X_t^d-\widehat X_s^d\|^4
\le
C\left(1+T^2+\frac1{T^2}\right)(t-s)^2.
\]

To obtain this increment estimate, we use the SDE for the auxiliary process.
Writing its drift as
\[
b_t^d(x)
=
-\Gamma^d(B_t^d)^{-1}x
+
\Gamma^dG_t^d(x)
+
u_t^d(x),
\]
we have
\[
\widehat X_t^d-\widehat X_s^d
=
\int_s^t b_r^d(\widehat X_r^d)\,dr
+
\sqrt{2\Gamma^d}(W_t^d-W_s^d).
\]
Therefore the required fourth-moment increment bound follows from two contributions:
a fourth-moment bound on the drift $b_t^d(\widehat X_t^d)$ and the standard
fourth-moment bound for the Brownian increment. 
The lemmas below establish
these contributions in order: first the linear part of the drift, then the nonlinear correction,
then the full auxiliary drift, and finally the increment estimate.

We begin with the diagonal linear part of the drift.

\begin{lemma}\label{lem:app-linear}
Under \eqref{eq:diag-step1-b}, there exists a constant
$B_{\mathrm{lin},4}$, independent of $d$ and $T$, such that
\[
\sup_{d\ge1}\sup_{t\in[0,T]}
\E\left\|
\Gamma^d(B_t^d)^{-1}\widehat X_t^d
\right\|^4
\le B_{\mathrm{lin},4}.
\]
\end{lemma}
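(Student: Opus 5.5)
Since $\Law(\widehat X_t^d)=\rho_t^d$, the expectation equals
\[
\int_{\R^d}\|\Gamma^d(B_t^d)^{-1}x\|^4\,\rho_t^d(dx)
=\sum_{i\in I}w_i\,\E_{\mathcal N(m_i^d,D_{i,t}^d)}\Bigl(\sum_{j=1}^d\frac{\gamma_j^2}{b_{t,j}^2}X_j^2\Bigr)^2 .
\]
The plan is to bound each component term uniformly in $i$, $d$, and $t$, and then sum using $\sum_i w_i=1$. The first step uses $b_{t,j}=\underline\sigma_j+\kappa_t\lambda_j\ge\underline\sigma_j$, so that the weights satisfy $a_j:=\gamma_j^2/b_{t,j}^2\le\gamma_j^2/\underline\sigma_j^2$ independently of $t$ and $T$.

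Next, fix a component with $X=m+\xi$, where $\xi\sim\mathcal N(0,D)$ and $D=D_{i,t}^d$. The inner sum is the weighted square norm $Q:=\sum_j a_jX_j^2$. We split $X_j^2\le 2m_{ij}^2+2\xi_j^2$, which gives $Q\le 2\sum_j a_j m_{ij}^2+2\sum_j a_j\xi_j^2$, and hence $Q^2\le 8(\sum_j a_jm_{ij}^2)^2+8(\sum_j a_j\xi_j^2)^2$. As in the proof of Lemma~\ref{lem:app-moments}, for the Gaussian part
\[
\E\Bigl(\sum_j a_j\xi_j^2\Bigr)^2=\Bigl(\sum_j a_jD_{jj}\Bigr)^2+2\sum_j a_j^2D_{jj}^2\le 3\Bigl(\sum_j a_jD_{jj}\Bigr)^2 .
\]

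The third step bounds the two scalar sums by means of \eqref{eq:diag-step1-b}. Since $D_{jj}=\sigma_{ij}+\kappa_t\lambda_j\le\overline\sigma_j+\lambda_j$ and $m_{ij}^2\le\overline m_j^2$, we obtain
\[
\sum_j a_jD_{jj}+\sum_j a_jm_{ij}^2\le\sum_{j\ge1}\gamma_j^2\frac{\overline\sigma_j+\lambda_j+\overline m_j^2}{\underline\sigma_j^2}=:S<\infty .
\]
Therefore each component term is at most $8S^2+24S^2$, and we may take $B_{\mathrm{lin},4}:=32S^2$, which is independent of $d$, $t$, and $T$.

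There is no real obstacle here. The only point needing care is that the bound must avoid dimension dependence. This is why we use the exact Gaussian fourth-moment identity, which is controlled by the square of a weighted trace, rather than a Cauchy--Schwarz bound with a factor of $d$. We also use the lower bound $b_{t,j}\ge\underline\sigma_j$ to make the estimate uniform in time. Note that the condition $\sum_j\gamma_j<\infty$ is not needed for this particular bound.
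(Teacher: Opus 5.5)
Your proposal is correct and follows essentially the same route as the paper. Both arguments reduce to a single Gaussian component, use $b_{t,j}\ge\underline\sigma_j$ and $v_{i,t,j}\le\overline\sigma_j+\lambda_j$, apply the Gaussian bound $\E\|Z\|^4\le 8\|m\|^4+24(\operatorname{Tr}D)^2$ from Lemma~\ref{lem:app-moments} to the Gaussian vector $\Gamma^d(B_t^d)^{-1}X$ (you redo this coordinatewise with the weights $a_j$), and average over the mixture; your remark that only the second condition in \eqref{eq:diag-step1-b} is needed is accurate.
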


\begin{proof}
Let
\[
A_t^d:=\Gamma^d(B_t^d)^{-1}
=
\operatorname{Diag}\left(\frac{\gamma_1}{b_{t,1}},\dots,
\frac{\gamma_d}{b_{t,d}}\right).
\]
Under the component $\mathcal N(m_i^d,D_{i,t}^d)$, the random vector $A_t^dX$
is Gaussian with mean $A_t^d m_i^d$ and diagonal covariance with entries
\[
\frac{\gamma_j^2}{b_{t,j}^2}v_{i,t,j}.
\]
Since $b_{t,j}\ge\underline\sigma_j$ and
$v_{i,t,j}\le\overline\sigma_j+\lambda_j$, we have
\[
\|A_t^d m_i^d\|^2
\le
\sum_{j\ge1}
\gamma_j^2
\frac{\overline m_j^2}{\underline\sigma_j^2}<\infty
\]
and
\[
\operatorname{Tr}
\left[
\operatorname{Cov}(A_t^dX)
\right]
\le
\sum_{j\ge1}
\gamma_j^2
\frac{\overline\sigma_j+\lambda_j}{\underline\sigma_j^2}<\infty.
\]
Using the Gaussian fourth-moment estimate from Lemma~\ref{lem:app-moments} and
averaging over the mixture proves the result.
\end{proof}

Next we record the coordinatewise bounds on the affine corrections
$c_{i,t}^d$, which will be used both for the growth of $G_t^d$ and later for
the temporal freezing defect.

\begin{lemma}\label{lem:app-ci}
For every $i,j$,
\[
c_{i,t,j}^d(x)=\alpha_{ij}(t)x_j+\beta_{ij}(t),
\]
where
\[
\alpha_{ij}(t):=
\frac{\sigma_{ij}-\underline\sigma_j}{b_{t,j}v_{i,t,j}},
\qquad
\beta_{ij}(t):=
\frac{m_{ij}}{v_{i,t,j}}.
\]
Moreover,
\[
0\le\alpha_{ij}(t)\le A_j,
\qquad
|\beta_{ij}(t)|
\le
\frac{\overline m_j}{\underline\sigma_j}.
\]
Finally,
\[
|\dot\alpha_{ij}(t)|
\le
\frac{2D_j}{T},
\qquad
|\dot\beta_{ij}(t)|
\le
\frac{\lambda_j\overline m_j}{T\underline\sigma_j^2}.
\]
\end{lemma}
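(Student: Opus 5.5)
The plan is a direct coordinatewise computation from the definition of $c_{i,t}^d$. First I would read off the affine form. Since $B_t^d$ and $D_{i,t}^d$ are diagonal with entries $b_{t,j}=\underline\sigma_j+\kappa_t\lambda_j$ and $v_{i,t,j}=\sigma_{ij}+\kappa_t\lambda_j$, the $j$-th coordinate of $c_{i,t}^d(x)$ is $(b_{t,j}^{-1}-v_{i,t,j}^{-1})x_j+m_{ij}/v_{i,t,j}$. Putting the first coefficient over a common denominator gives
\[
\frac{1}{b_{t,j}}-\frac{1}{v_{i,t,j}}=\frac{v_{i,t,j}-b_{t,j}}{b_{t,j}v_{i,t,j}}=\frac{\sigma_{ij}-\underline\sigma_j}{b_{t,j}v_{i,t,j}},
\]
because the $\kappa_t\lambda_j$ terms cancel. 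This yields $\alpha_{ij}$ and $\beta_{ij}$ as stated.

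For the static bounds I would use $\underline\sigma_j\le\sigma_{ij}\le\overline\sigma_j$, which gives $0\le\sigma_{ij}-\underline\sigma_j\le\delta\sigma_j$. I would also use $b_{t,j}\ge\underline\sigma_j$ and $v_{i,t,j}\ge\sigma_{ij}\ge\underline\sigma_j$, both valid since $\kappa_t\lambda_j\ge0$. Together these give $0\le\alpha_{ij}\le\delta\sigma_j/\underline\sigma_j^2=A_j$ and $|\beta_{ij}|\le\overline m_j/\underline\sigma_j$.

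For the derivatives, the only time dependence is through $\kappa_t$, and $\dot\kappa_t=-1/T$. Hence $\dot b_{t,j}=\dot v_{i,t,j}=-\lambda_j/T$. Differentiating $\beta_{ij}=m_{ij}v_{i,t,j}^{-1}$ gives $\dot\beta_{ij}=m_{ij}\lambda_j/(T v_{i,t,j}^2)$, so $|\dot\beta_{ij}|\le\lambda_j\overline m_j/(T\underline\sigma_j^2)$. For $\alpha_{ij}$ the numerator is constant, so
\[
\dot\alpha_{ij}=(\sigma_{ij}-\underline\sigma_j)\frac{\lambda_j}{T}\,\frac{v_{i,t,j}+b_{t,j}}{b_{t,j}^2v_{i,t,j}^2}=\frac{(\sigma_{ij}-\underline\sigma_j)\lambda_j}{T}\Bigl(\frac{1}{b_{t,j}v_{i,t,j}^2}+\frac{1}{b_{t,j}^2v_{i,t,j}}\Bigr).
\]
Each of the two terms in parentheses is at most $\underline\sigma_j^{-3}$. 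This gives $|\dot\alpha_{ij}|\le 2\lambda_j\delta\sigma_j/(T\underline\sigma_j^3)=2D_j/T$.

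There is no real obstacle here. The only point needing care is the cancellation of $\kappa_t\lambda_j$ in the numerator of $\alpha_{ij}$, which keeps that numerator time independent and makes the derivative bound clean.
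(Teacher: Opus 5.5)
Your proposal is correct and follows essentially the same route as the paper: you read off the affine coefficients from the diagonal structure (using the cancellation of $\kappa_t\lambda_j$), bound them with $b_{t,j},v_{i,t,j}\ge\underline\sigma_j$, and differentiate using $\dot b_{t,j}=\dot v_{i,t,j}=-\lambda_j/T$. You arrive at the same expressions for $\dot\alpha_{ij}$ and $\dot\beta_{ij}$ as the paper.
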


\begin{proof}
The formulas for $\alpha_{ij}$ and $\beta_{ij}$ follow directly from the
definition of $c_{i,t}^d$. Since $b_{t,j},v_{i,t,j}\ge\underline\sigma_j$,
\[
0\le
\alpha_{ij}(t)
=
\frac{\sigma_{ij}-\underline\sigma_j}{b_{t,j}v_{i,t,j}}
\le
\frac{\delta\sigma_j}{\underline\sigma_j^2}
=
A_j, \qquad 
|\beta_{ij}(t)|
\le
\frac{\overline m_j}{\underline\sigma_j}.
\]
Differentiating and using
\[
\dot b_{t,j}=\dot v_{i,t,j}=-\frac{\lambda_j}{T}
\]
yields
\[
\dot\alpha_{ij}(t)
=
\frac{\lambda_j(\sigma_{ij}-\underline\sigma_j)}{T}
\left(
\frac{1}{b_{t,j}^2v_{i,t,j}}
+
\frac{1}{b_{t,j}v_{i,t,j}^2}
\right),
\]
and hence
\[
|\dot\alpha_{ij}(t)|
\le
\frac{2\lambda_j\delta\sigma_j}{T\underline\sigma_j^3}
=
\frac{2D_j}{T}.
\]
Similarly,
\[
\dot\beta_{ij}(t)
=
\frac{\lambda_j m_{ij}}{Tv_{i,t,j}^2},
\]
which gives the desired bound.
\end{proof}

These coordinatewise bounds imply the following weighted growth estimate for
the nonlinear correction $G_t^d$.

\begin{lemma}\label{lem:app-G-growth}
Under \eqref{eq:diag-step1-c}, there exist constants $G_0,G_1$,
independent of $d$ and $T$, such that for every $d,t,x$,
\[
\|(\Gamma^d)^{1/2}G_t^d(x)\|
\le
G_0+G_1\|x\|.
\]
\end{lemma}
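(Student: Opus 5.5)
The plan is to combine the responsibilities with the coordinatewise bounds of Lemma~\ref{lem:app-ci}. By definition,
\[
G_t^d(x)=\sum_{i\in I}p_{i,t}^d(x)\,c_{i,t}^d(x),
\qquad p_{i,t}^d(x)\ge0,\qquad \sum_{i\in I} p_{i,t}^d(x)=1,
\]
so $G_t^d(x)$ is a convex combination of the affine corrections. I would first apply Jensen's inequality coordinatewise. For each $j$,
\[
\gamma_j\,|G_{t,j}^d(x)|^2\le \sum_{i\in I}p_{i,t}^d(x)\,\gamma_j\,|c_{i,t,j}^d(x)|^2 .
\]
This reduces the problem to a bound on $\gamma_j|c_{i,t,j}^d(x)|^2$ that is uniform in $i$ and $t$. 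Once that bound holds, the responsibilities drop out completely, since they sum to one.

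Next I would use the representation $c_{i,t,j}^d(x)=\alpha_{ij}(t)x_j+\beta_{ij}(t)$ together with the bounds $0\le\alpha_{ij}\le A_j$ and $|\beta_{ij}|\le \overline m_j/\underline\sigma_j$ from Lemma~\ref{lem:app-ci}. With $(a+b)^2\le 2a^2+2b^2$ this gives
\[
\gamma_j|c_{i,t,j}^d(x)|^2\le 2\gamma_jA_j^2x_j^2+2\gamma_j\frac{\overline m_j^2}{\underline\sigma_j^2}.
\]
Summing over $j\le d$ then yields
\[
\|(\Gamma^d)^{1/2}G_t^d(x)\|^2\le 2\Bigl(\sup_{j}\gamma_jA_j^2\Bigr)\|x\|^2+2\sum_{j\ge1}\gamma_j\frac{\overline m_j^2}{\underline\sigma_j^2}.
\]
Condition \eqref{eq:diag-step1-c} reads $\sum_j\gamma_j(A_j^2+\overline m_j^2/\underline\sigma_j^2)<\infty$. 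This gives directly that the second sum is finite, and that $\sup_j\gamma_jA_j^2\le\sum_j\gamma_jA_j^2<\infty$. Taking square roots and using $\sqrt{a+b}\le\sqrt a+\sqrt b$, I would set
\[
G_0:=\Bigl(2\sum_j\gamma_j\overline m_j^2/\underline\sigma_j^2\Bigr)^{1/2},
\qquad
G_1:=\Bigl(2\sup_j\gamma_jA_j^2\Bigr)^{1/2}.
\]
Both constants are independent of $d$, $t$ and $T$. The $T$-independence holds because the bounds of Lemma~\ref{lem:app-ci} on $\alpha_{ij}$ and $\beta_{ij}$ themselves do not depend on $T$.

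There is essentially no hard step here. The only point needing care is the case of a countable mixture, which I would handle as in the paper's convention: apply the estimate to finite partial mixtures and pass to the limit, using that the constants involve only the uniform quantities $A_j$, $\overline m_j$ and $\underline\sigma_j$.
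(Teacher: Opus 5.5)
Your proof is correct and matches the paper's argument: you bound each affine correction $c_{i,t}^d$ via Lemma~\ref{lem:app-ci}, average over the responsibilities, and read both finite constants off \eqref{eq:diag-step1-c}. The only differences are cosmetic. The paper uses convexity of the norm (averaging $\|(\Gamma^d)^{1/2}c_{i,t}^d(x)\|$) rather than your coordinatewise Jensen on squares, and it bounds $\sum_j\gamma_jA_j^2x_j^2$ by $\bigl(\sum_j\gamma_jA_j^2\bigr)\|x\|^2$ rather than by $\bigl(\sup_j\gamma_jA_j^2\bigr)\|x\|^2$, which gives a slightly larger $G_1$.
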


\begin{proof}
Set
\[
C_{A,\gamma}:=\sum_{j\ge1}\gamma_jA_j^2,
\qquad
C_{m,\gamma}:=
\sum_{j\ge1}
\gamma_j\frac{\overline m_j^2}{\underline\sigma_j^2},
\]
which are finite under \eqref{eq:diag-step1-c}.
Using convexity of the norm and $G_t^d=\sum_i p_{i,t}^d c_{i,t}^d$,
\[
\|(\Gamma^d)^{1/2}G_t^d(x)\|
\le
\sum_{i\in I}p_{i,t}^d(x)
\|(\Gamma^d)^{1/2}c_{i,t}^d(x)\|.
\]
By Lemma~\ref{lem:app-ci},
\[
\|(\Gamma^d)^{1/2}c_{i,t}^d(x)\|^2
\le
2\sum_{j=1}^d\gamma_jA_j^2x_j^2
+
2\sum_{j=1}^d
\gamma_j\frac{\overline m_j^2}{\underline\sigma_j^2}.
\]
Therefore
\[
\|(\Gamma^d)^{1/2}c_{i,t}^d(x)\|^2
\le
2C_{A,\gamma}\|x\|^2+2C_{m,\gamma}.
\]
Taking square roots gives 
\[
\|(\Gamma^d)^{1/2}c_{i,t}^d(x)\|
\le \sqrt{2C_{m,\gamma}}+
\sqrt{2C_{A,\gamma}}\|x\|.
\]
\end{proof}

Combining the linear drift, the nonlinear correction, and the transport drift
$u_t^d$ gives the required fourth-moment bound for the full auxiliary drift.
\begin{lemma}\label{lem:app-drift4}
Define
\[
b_t^d(x):=
-\Gamma^d(B_t^d)^{-1}x
+
\Gamma^dG_t^d(x)
+
u_t^d(x).
\]
Under \eqref{eq:diag-step1-a}, \eqref{eq:diag-step1-b},
\eqref{eq:diag-step1-c}, and \eqref{eq:diag-step3}, there exist finite
constants $B_{\mathrm{dr},4}^{(0)}$ and $B_{\mathrm{dr},4}^{(1)}$, independent
of $d$ and $T$, such that
\[
\sup_{d\ge1}\sup_{t\in[0,T]}
\E\|b_t^d(\widehat X_t^d)\|^4
\le
B_{\mathrm{dr},4}(T),
\qquad
B_{\mathrm{dr},4}(T):=
B_{\mathrm{dr},4}^{(0)}+\frac{B_{\mathrm{dr},4}^{(1)}}{T^4}.
\]
\end{lemma}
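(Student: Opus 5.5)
The plan is to bound the three pieces of $b_t^d$ separately in $L^4$ and then recombine them with the elementary inequality $\|a+b+c\|^4\le 27(\|a\|^4+\|b\|^4+\|c\|^4)$. Throughout we use that $\Law(\widehat X_t^d)=\rho_t^d$ for every $t\in[0,T]$, as established after \eqref{eq:app-reference}. This lets every expectation along the auxiliary path be computed under the annealing marginal, so the moment bounds of Lemma~\ref{lem:app-moments} apply directly.

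\emph{Linear part.} Lemma~\ref{lem:app-linear} gives
\[
\E\|\Gamma^d(B_t^d)^{-1}\widehat X_t^d\|^4\le B_{\mathrm{lin},4}
\]
under \eqref{eq:diag-step1-b}, uniformly in $d$, $t$ and $T$.

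\emph{Transport part.} Lemma~\ref{lem:app-transport} gives
\[
\E\|u_t^d(\widehat X_t^d)\|^4\le C_u/T^4
\]
under $\sum_j\gamma_j<\infty$ and \eqref{eq:diag-step3}. This is the only source of the $T^{-4}$ term.

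\emph{Nonlinear part.} First reduce to the weighted growth bound via
\[
\|\Gamma^d G_t^d(x)\|=\|(\Gamma^d)^{1/2}(\Gamma^d)^{1/2}G_t^d(x)\|\le (\sup_j\gamma_j)^{1/2}\,\|(\Gamma^d)^{1/2}G_t^d(x)\|.
\]
Here $\sup_j\gamma_j<\infty$ because $\sum_j\gamma_j<\infty$ by \eqref{eq:diag-step1-b}. Lemma~\ref{lem:app-G-growth}, which uses \eqref{eq:diag-step1-c}, then gives
\[
\|\Gamma^dG_t^d(x)\|^4\le 8(\sup_j\gamma_j)^2\bigl(G_0^4+G_1^4\|x\|^4\bigr).
\]
Taking expectations under $\rho_t^d$ and applying Lemma~\ref{lem:app-moments} with $q=4$, which uses \eqref{eq:diag-step1-a}, yields a constant bound independent of $d$, $t$ and $T$.

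Collecting the three bounds gives
\[
B^{(0)}_{\mathrm{dr},4}=27\Bigl(B_{\mathrm{lin},4}+8(\sup_j\gamma_j)^2(G_0^4+G_1^4M_4)\Bigr),
\qquad
B^{(1)}_{\mathrm{dr},4}=27\,C_u .
\]

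No step is a real obstacle, since all three pieces are already supplied by the preceding lemmas. The only points needing care are the following. The constants $G_0$, $G_1$, $M_4$ and $B_{\mathrm{lin},4}$ must not depend on $T$; this holds because $\kappa_t\in[0,1]$ enters only through bounds like $v_{i,t,j}\le\overline\sigma_j+\lambda_j$ and $b_{t,j}\ge\underline\sigma_j$. Also, the unweighted $\Gamma^d G_t^d$ must be handled through the extra factor $(\Gamma^d)^{1/2}$, which is bounded in operator norm.
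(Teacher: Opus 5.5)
Your proposal is correct and matches the paper's proof: you make the same three-way split, control the pieces with Lemmas~\ref{lem:app-linear}, \ref{lem:app-G-growth} (via $\sup_j\gamma_j<\infty$), \ref{lem:app-moments} and \ref{lem:app-transport}, and recombine with the factor $27$. You even arrive at the identical constants $B^{(0)}_{\mathrm{dr},4}=27\bigl(B_{\mathrm{lin},4}+8\gamma_\infty^2(G_0^4+G_1^4M_4)\bigr)$ and $B^{(1)}_{\mathrm{dr},4}=27C_u$.
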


\begin{proof}
By Lemma~\ref{lem:app-linear},
\[
\sup_{d,t}
\E
\|\Gamma^d(B_t^d)^{-1}\widehat X_t^d\|^4
\le B_{\mathrm{lin},4}.
\]
Since $\sum_j\gamma_j<\infty$, also $\gamma_\infty:=\sup_j\gamma_j<\infty$.
Using Lemma~\ref{lem:app-G-growth},
\[
\|\Gamma^dG_t^d(x)\|^2
=
\sum_{j=1}^d\gamma_j^2G_{t,j}^d(x)^2
\le
\gamma_\infty
\sum_{j=1}^d\gamma_jG_{t,j}^d(x)^2
=
\gamma_\infty
\|(\Gamma^d)^{1/2}G_t^d(x)\|^2.
\]
Therefore
\[
\|\Gamma^dG_t^d(x)\|^4
\le
\gamma_\infty^2(G_0+G_1\|x\|)^4
\le
8\gamma_\infty^2(G_0^4+G_1^4\|x\|^4).
\]
Taking expectation under $\Law(\widehat X_t^d)=\rho_t^d$ and using
Lemma~\ref{lem:app-moments} gives
\[
\sup_{d,t}
\E\|\Gamma^dG_t^d(\widehat X_t^d)\|^4
\le
8\gamma_\infty^2(G_0^4+G_1^4M_4),
\]
which is independent of $T$. By Lemma~\ref{lem:app-transport},
\[
\sup_{d,t}\E\|u_t^d(\widehat X_t^d)\|^4
\le
\frac{C_u}{T^4}.
\]
The result follows from
\[
\|a+b+c\|^4\le 27(\|a\|^4+\|b\|^4+\|c\|^4),
\]
with
\[
B_{\mathrm{dr},4}^{(0)}
:=
27\left[
B_{\mathrm{lin},4}
+
8\gamma_\infty^2(G_0^4+G_1^4M_4)
\right],
\qquad
B_{\mathrm{dr},4}^{(1)}:=27C_u.
\]
\end{proof}

We can now turn the drift bound into the desired increment estimate.
\begin{lemma}\label{lem:app-increment4}
Under the assumptions of Lemma~\ref{lem:app-drift4}, there exists a 
constant $C_{\mathrm{inc},4}$, independent of $d$ and $T$, such that
\[
\E\|\widehat X_t^d-\widehat X_s^d\|^4
\le
C_{\mathrm{inc},4}
\left(1+T^2+\frac1{T^2}\right)
(t-s)^2,
\qquad
0\le s\le t\le T.
\]
\end{lemma}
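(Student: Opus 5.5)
We start from the integral representation recorded before the lemma,
\[
\widehat X_t^d-\widehat X_s^d
=
\int_s^t b_r^d(\widehat X_r^d)\,dr
+
\sqrt{2\Gamma^d}(W_t^d-W_s^d),
\]
and use $\|a+b\|^4\le 8(\|a\|^4+\|b\|^4)$ to bound the two pieces separately.

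For the drift integral, we apply Jensen (or H\"older) in time:
\[
\Big\|\int_s^t b_r^d(\widehat X_r^d)\,dr\Big\|^4\le (t-s)^3\int_s^t\|b_r^d(\widehat X_r^d)\|^4\,dr .
\]
Taking expectations and invoking Lemma~\ref{lem:app-drift4} uniformly in $r$ and $d$ gives the bound $(t-s)^4 B_{\mathrm{dr},4}(T)$.

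For the Brownian piece, the coordinates are independent centered Gaussians with variances $2\gamma_j(t-s)$. The Gaussian fourth-moment estimate from Lemma~\ref{lem:app-moments} then gives
\[
\E\|\sqrt{2\Gamma^d}(W_t^d-W_s^d)\|^4\le 3\Big(2(t-s)\sum_{j\ge1}\gamma_j\Big)^2=12\Big(\sum_j\gamma_j\Big)^2(t-s)^2,
\]
which is finite by \eqref{eq:diag-step1-b}.

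It remains to convert $(t-s)^4$ into $(t-s)^2$ with the stated $T$-dependence. Since $0\le t-s\le T$, we have $(t-s)^4\le T^2(t-s)^2$. Hence
\[
(t-s)^4 B_{\mathrm{dr},4}(T)\le \Big(B^{(0)}_{\mathrm{dr},4}T^2+B^{(1)}_{\mathrm{dr},4}T^{-2}\Big)(t-s)^2 .
\]
This is the $T^2+T^{-2}$ structure of the claim, with the Brownian contribution supplying the constant term. We may therefore take
\[
C_{\mathrm{inc},4}=8\max\Big\{12\Big(\textstyle\sum_j\gamma_j\Big)^2,\;B^{(0)}_{\mathrm{dr},4},\;B^{(1)}_{\mathrm{dr},4}\Big\},
\]
which is independent of $d$ and $T$.

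The only genuinely delicate point is the factor $T^{-4}$ coming from the transport drift $u^d$. The naive bound $(t-s)^4\le T^4$ would leave an $O(1)$ term without a factor $(t-s)^2$. The trade $(t-s)^2\le T^2$ is exactly what produces $T^{-2}$ in the statement. Everything else is routine, provided the marginal identity $\Law(\widehat X_r^d)=\rho_r^d$ is used to apply Lemma~\ref{lem:app-drift4} at each intermediate time $r$.
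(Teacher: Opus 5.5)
Your proposal is correct and follows essentially the same argument as the paper: the same drift/Brownian split, Jensen in time, Lemma~\ref{lem:app-drift4}, and the trade $(t-s)^4\le T^2(t-s)^2$. The only cosmetic difference is that you bound the Brownian fourth moment by $3(\operatorname{Tr}D)^2$ with $D=2(t-s)\Gamma^d$, whereas the paper uses the exact Gaussian formula $(\operatorname{Tr}D)^2+2\operatorname{Tr}(D^2)$; your version has the small advantage of needing only $\sum_j\gamma_j<\infty$ and not $\sum_j\gamma_j^2<\infty$.
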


\begin{proof}
From \eqref{eq:app-reference},
\[
\widehat X_t^d-\widehat X_s^d
=
\int_s^t b_r^d(\widehat X_r^d)\,dr
+
\sqrt{2\Gamma^d}(W_t^d-W_s^d).
\]
Thus
\[
\E\|\widehat X_t^d-\widehat X_s^d\|^4
\le
8\E\left\|\int_s^t b_r^d(\widehat X_r^d)\,dr\right\|^4
+
8\E\|\sqrt{2\Gamma^d}(W_t^d-W_s^d)\|^4.
\]
By Jensen's inequality,
\[
\left\|\int_s^t z_r\,dr\right\|^4
\le
(t-s)^3\int_s^t\|z_r\|^4\,dr.
\]
Using Lemma~\ref{lem:app-drift4}, we obtain
\[
\E\left\|\int_s^t b_r^d(\widehat X_r^d)\,dr\right\|^4
\le
\left(
B_{\mathrm{dr},4}^{(0)}
+
\frac{B_{\mathrm{dr},4}^{(1)}}{T^4}
\right)(t-s)^4.
\]
Since $t-s\le T$,
\[
\left(
B_{\mathrm{dr},4}^{(0)}
+
\frac{B_{\mathrm{dr},4}^{(1)}}{T^4}
\right)(t-s)^4
\le
\left(
B_{\mathrm{dr},4}^{(0)}T^2
+
\frac{B_{\mathrm{dr},4}^{(1)}}{T^2}
\right)(t-s)^2.
\]
The Brownian increment is centered Gaussian with covariance
$2(t-s)\Gamma^d$. Hence
\[
\E\|\sqrt{2\Gamma^d}(W_t^d-W_s^d)\|^4
=
4(t-s)^2
\left[
\operatorname{Tr}(\Gamma^d)^2
+
2\operatorname{Tr}((\Gamma^d)^2)
\right].
\]
Since
\[
\operatorname{Tr}(\Gamma^d)\le\sum_{j\ge1}\gamma_j<\infty,
\qquad
\operatorname{Tr}((\Gamma^d)^2)\le\sum_{j\ge1}\gamma_j^2<\infty,
\]
we obtain
\[
\E\|\widehat X_t^d-\widehat X_s^d\|^4
\le
C_{\mathrm{inc},4}
\left(1+T^2+\frac1{T^2}\right)(t-s)^2,
\]
with $C_{\mathrm{inc},4}$ being a constant independent of $d$ and $T$.
\end{proof}

\paragraph{Spatial Freezing Defect.}
We now estimate
\[
(\Gamma^d)^{1/2}
\bigl(G_t^d(\widehat X_t^d)-G_t^d(\widehat X_s^d)\bigr).
\]
Since
\[
G_t^d(x)=\sum_{i\in I}p_{i,t}^d(x)c_{i,t}^d(x),
\]
its spatial variation has two contributions: an \emph{affine-correction increment}
\[
\sum_{i\in I}p_{i,t}^d(x)
\bigl(c_{i,t}^d(x)-c_{i,t}^d(y)\bigr)
\]
and a \emph{responsibility increment}
\[
\sum_{i\in I}
\bigl(p_{i,t}^d(x)-p_{i,t}^d(y)\bigr)c_{i,t}^d(y).
\]
The first contribution is controlled by the affine coefficients $A_j$. The
second requires a Lipschitz estimate for the responsibilities $p_{i,t}^d$. We
derive this estimate from pairwise differences of the component scores
$s_{i,t}^d-s_{\ell,t}^d$ in the next lemma.

\begin{lemma}\label{lem:app-score-diff}
For every $i,\ell,j$,
\[
s_{i,t,j}^d(x)-s_{\ell,t,j}^d(x)
=
-a_{i\ell,j}(t)x_j+b_{i\ell,j}(t),
\]
where
\[
a_{i\ell,j}(t):=
\frac{1}{v_{i,t,j}}-\frac{1}{v_{\ell,t,j}},
\qquad
b_{i\ell,j}(t):=
\frac{m_{ij}}{v_{i,t,j}}
-
\frac{m_{\ell j}}{v_{\ell,t,j}}.
\]
Moreover,
\[
|a_{i\ell,j}(t)|\le A_j,
\qquad
|b_{i\ell,j}(t)|\le B_j.
\]
Hence, with
\[
\mathcal A^d(x):=
\left(\sum_{j=1}^dA_j^2x_j^2\right)^{1/2},
\qquad
B_0:=
\left(2\sum_{j\ge1}B_j^2\right)^{1/2},
\]
we have, uniformly in $i,\ell,d,t,x$,
\[
\|s_{i,t}^d(x)-s_{\ell,t}^d(x)\|
\le
\sqrt{2}\,\mathcal A^d(x)+B_0.
\]
\end{lemma}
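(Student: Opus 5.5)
The statement is purely coordinatewise, so I will first compute the scalar difference $s_{i,t,j}^d-s_{\ell,t,j}^d$ from the definition and then bound the two coefficients separately. Since
\[
s_{i,t,j}^d(x)=-\frac{x_j-m_{ij}}{v_{i,t,j}},
\]
subtracting the analogous expression for $\ell$ gives directly
\[
s_{i,t,j}^d(x)-s_{\ell,t,j}^d(x)=-\Bigl(\tfrac{1}{v_{i,t,j}}-\tfrac{1}{v_{\ell,t,j}}\Bigr)x_j+\Bigl(\tfrac{m_{ij}}{v_{i,t,j}}-\tfrac{m_{\ell j}}{v_{\ell,t,j}}\Bigr).
\]
This is the claimed affine form with $a_{i\ell,j}$ and $b_{i\ell,j}$ as defined.

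For $a_{i\ell,j}$, I will write
\[
a_{i\ell,j}(t)=\frac{v_{\ell,t,j}-v_{i,t,j}}{v_{i,t,j}v_{\ell,t,j}}=\frac{\sigma_{\ell j}-\sigma_{ij}}{v_{i,t,j}v_{\ell,t,j}}.
\]
The $\kappa_t\lambda_j$ terms cancel in the numerator. The numerator is then bounded by $\delta\sigma_j$, and the denominator is at least $\underline\sigma_j^2$, because $v_{\cdot,t,j}\ge\underline\sigma_j$ (as $\kappa_t\lambda_j\ge0$). This gives $|a_{i\ell,j}|\le A_j$.

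For $b_{i\ell,j}$, I will split it by adding and subtracting $m_{\ell j}/v_{i,t,j}$:
\[
b_{i\ell,j}=\frac{m_{ij}-m_{\ell j}}{v_{i,t,j}}+m_{\ell j}\,a_{i\ell,j}(t).
\]
The first term is at most $\Delta m_j/\underline\sigma_j$. The second is at most $\overline m_j\,\delta\sigma_j/\underline\sigma_j^2$ by the bound on $a_{i\ell,j}$. Together these give $|b_{i\ell,j}|\le B_j$.

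For the vector bound, I will use $(u+v)^2\le 2u^2+2v^2$ coordinatewise:
\[
\|s_{i,t}^d(x)-s_{\ell,t}^d(x)\|^2\le 2\sum_{j=1}^d A_j^2x_j^2+2\sum_{j\ge1}B_j^2=2\mathcal A^d(x)^2+B_0^2.
\]
Taking square roots and using $\sqrt{p+q}\le\sqrt p+\sqrt q$ then yields $\sqrt2\,\mathcal A^d(x)+B_0$.

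This bound is uniform in $i,\ell,d,t,x$ because $A_j,B_j$ do not depend on $i$, $\ell$ or $t$. Finiteness of $B_0$ follows from $\sum_j B_j^2<\infty$, which comes from \eqref{eq:diag-step2-a} together with \eqref{eq:diag-step1-c}. Specifically, $(\overline m_j\delta\sigma_j/\underline\sigma_j^2)^2\le \overline m_j^2 A_j^2$ is summable via the $A_j^2\overline m_j^2$ term.

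There is no real obstacle here. The only point needing care is the cancellation of the annealing term $\kappa_t\lambda_j$ in $a_{i\ell,j}$, which is what makes the bounds independent of $t$.
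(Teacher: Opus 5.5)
Your proof is correct and follows the paper's argument essentially step for step: the same affine formula, the cancellation of $\kappa_t\lambda_j$ in $a_{i\ell,j}$, the same add-and-subtract split of $b_{i\ell,j}$, and $(u+v)^2\le 2u^2+2v^2$ coordinatewise. Your extra remark on the finiteness of $B_0$ is fine, but only \eqref{eq:diag-step2-a} is needed there, since its first condition already contains the $A_j^2\overline m_j^2$ term; \eqref{eq:diag-step1-c} plays no role.
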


\begin{proof}
The affine formula is immediate from the definition of the component scores $s_{i,t}^d$. Also, since
$v_{i,t,j},v_{\ell,t,j}\ge\underline\sigma_j$,
\[
|a_{i\ell,j}(t)|
=
\left|
\frac{1}{v_{i,t,j}}-\frac{1}{v_{\ell,t,j}}
\right|
=
\frac{|\sigma_{\ell j}-\sigma_{ij}|}{v_{i,t,j}v_{\ell,t,j}}
\le
\frac{\delta\sigma_j}{\underline\sigma_j^2}
=
A_j.
\]
Similarly,
\[
|b_{i\ell,j}(t)|
\le
\frac{|m_{ij}-m_{\ell j}|}{v_{i,t,j}}
+
|m_{\ell j}|
\left|
\frac{1}{v_{i,t,j}}-\frac{1}{v_{\ell,t,j}}
\right|
\le
\frac{\Delta m_j}{\underline\sigma_j}
+
\frac{\overline m_j\delta\sigma_j}{\underline\sigma_j^2}
=
B_j.
\]
The norm bound follows by expanding the square, using
\[
(a-b)^2\le 2a^2+2b^2,
\]
and applying the bounds on $a_{i\ell,j}(t)$ and $b_{i\ell,j}(t)$:
\[
\|s_{i,t}^d(x) - s_{\ell,t}^d(x)\|^2 \leq 2 \sum_{j=1}^d A_j^2 x_j^2 + 2 \sum_{j=1}^d B_j^2 \leq 2(\mathcal A^d(x))^2 + B_0^2.
\]
\end{proof}

The previous lemma is the missing ingredient for the responsibility part of the
spatial variation. Together with the affine bounds in Lemma~\ref{lem:app-ci}, it
allows us to estimate
\[
G_t^d(\widehat X_t^d)-G_t^d(\widehat X_s^d).
\]
The increment bound from Lemma~\ref{lem:app-increment4} then converts this
pointwise spatial estimate into a bound of order
$\left(1+T+T^{-1}\right)(t-s)$.

\begin{lemma}\label{lem:app-spatial-defect}
Under \eqref{eq:diag-step1-a}, \eqref{eq:diag-step1-c}, and
\eqref{eq:diag-step2-a}, there exists a constant $C_{\mathrm{sp}}$,
independent of $d$ and $T$, such that for all $0\le s\le t\le T$,
\[
\E\left\|
(\Gamma^d)^{1/2}
\bigl(G_t^d(\widehat X_t^d)-G_t^d(\widehat X_s^d)\bigr)
\right\|^2
\le
C_{\mathrm{sp}}
\left(1+T+\frac1T\right)
(t-s).
\]
\end{lemma}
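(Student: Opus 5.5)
Fix $t$ and write $x=\widehat X_t^d$, $y=\widehat X_s^d$. We split the increment as
\[
G_t^d(x)-G_t^d(y)
=
\sum_{i\in I}p_{i,t}^d(x)\bigl(c_{i,t}^d(x)-c_{i,t}^d(y)\bigr)
+
\sum_{i\in I}\bigl(p_{i,t}^d(x)-p_{i,t}^d(y)\bigr)c_{i,t}^d(y),
\]
and bound the $(\Gamma^d)^{1/2}$-weighted norm of each piece pointwise, in the form $L_t^d(x,y)\|x-y\|$. For the affine-correction increment, Lemma~\ref{lem:app-ci} gives $c_{i,t,j}^d(x)-c_{i,t,j}^d(y)=\alpha_{ij}(t)(x_j-y_j)$ with $0\le\alpha_{ij}\le A_j$. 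Convexity then yields
\[
\Bigl\|(\Gamma^d)^{1/2}\sum_i p_{i,t}^d(x)\bigl(c_{i,t}^d(x)-c_{i,t}^d(y)\bigr)\Bigr\|
\le
\Bigl(\sup_j\gamma_jA_j^2\Bigr)^{1/2}\|x-y\|,
\]
and $\sup_j\gamma_jA_j^2\le\sum_j\gamma_jA_j^2<\infty$ by \eqref{eq:diag-step1-c}. This piece is therefore globally Lipschitz, and its second moment is at most a constant times $\E\|x-y\|^2\le(\E\|x-y\|^4)^{1/2}$.

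For the responsibility increment, write $p_{i,t}^d$ as a softmax of $\log(w_i\varphi_{i,t}^d)$. Along the segment $z_\theta=y+\theta(x-y)$ its gradient is
\[
\nabla p_i=p_i\Bigl(s_i-\sum_\ell p_\ell s_\ell\Bigr)=p_i\sum_\ell p_\ell(s_i-s_\ell).
\]
Hence
\[
\Bigl|\sum_i\bigl(p_i(x)-p_i(y)\bigr)(\Gamma^d)^{1/2}c_{i,t}^d(y)\Bigr|
\le
\int_0^1\sum_{i,\ell}p_ip_\ell\,\bigl|\langle s_i-s_\ell,x-y\rangle\bigr|\,\bigl\|(\Gamma^d)^{1/2}c_{i,t}^d(y)\bigr\|\,d\theta .
\]
Lemma~\ref{lem:app-score-diff} gives $|\langle s_i-s_\ell,x-y\rangle|\le(\sqrt2\,\mathcal A^d(z_\theta)+B_0)\|x-y\|$. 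The function $\mathcal A^d(z)\le(\sup_jA_j^2)^{1/2}\|z\|$ has finite constant by the first condition in \eqref{eq:diag-step2-a}; here I would use $A_j^2\overline\sigma_j$ summable together with $\underline\sigma_j\le\overline\sigma_j$, so that $A_j^2\underline\sigma_j$ is summable. If that route fails, I would instead bound $\mathcal A^d(z)$ directly in $L^4$ via the weighted sum $\sum_jA_j^2(\overline\sigma_j+\lambda_j+\overline m_j^2)$, which controls $\E[\mathcal A^d(\widehat X)^2]$ under $\rho_t^d$. In the same way, $\|(\Gamma^d)^{1/2}c_{i,t}^d(y)\|\le\sqrt{2C_{m,\gamma}}+\sqrt{2C_{A,\gamma}}\|y\|$ by the proof of Lemma~\ref{lem:app-G-growth}, and $\sum_{i,\ell}p_ip_\ell=1$. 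Altogether,
\[
\bigl\|(\Gamma^d)^{1/2}\bigl(G_t^d(x)-G_t^d(y)\bigr)\bigr\|
\le
C\,(1+\|x\|+\|y\|)^2\,\|x-y\|,
\]
with $C$ independent of $d$, $t$ and $T$. The norms at $z_\theta$ are bounded by $\|x\|+\|y\|$.

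Next, square this bound and apply Cauchy--Schwarz:
\[
\E\bigl\|(\Gamma^d)^{1/2}(\cdots)\bigr\|^2
\le
C^2\bigl(\E(1+\|x\|+\|y\|)^8\bigr)^{1/2}\bigl(\E\|x-y\|^4\bigr)^{1/2}.
\]
Both $x$ and $y$ have marginals on the annealing path, $\rho_t^d$ and $\rho_s^d$. Lemma~\ref{lem:app-moments} with $q=8$ therefore bounds the first factor uniformly, which is why $M_8$ was recorded. Lemma~\ref{lem:app-increment4} gives
\[
\bigl(\E\|x-y\|^4\bigr)^{1/2}\le C_{\mathrm{inc},4}^{1/2}\bigl(1+T^2+T^{-2}\bigr)^{1/2}(t-s)\le C\bigl(1+T+T^{-1}\bigr)(t-s).
\]
This produces the claimed bound.

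I expect the main obstacle to be the responsibility Lipschitz estimate. The step needing care is making $\mathcal A^d$ along the interpolating segment controlled by dimension-free polynomial moments, using only the stated summability conditions. This is where the choice between a sup-norm bound on $A_j$ and a weighted-moment bound must be made, with the fallback described above if the sup-norm route does not go through.
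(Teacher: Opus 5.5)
Your decomposition, your affine-correction estimate (using $\sup_j\gamma_jA_j^2\le\sum_j\gamma_jA_j^2$) and your closing use of Lemma~\ref{lem:app-increment4} all match the paper. The main line of your responsibility estimate, however, rests on a step that fails.

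\textbf{The sup-norm route.} The bound $\mathcal A^d(z)\le(\sup_jA_j^2)^{1/2}\|z\|$ needs $\sup_jA_j<\infty$, and the hypotheses do not give this. Summability of $A_j^2\overline\sigma_j$, of $A_j^2\underline\sigma_j$, or of $\gamma_jA_j^2$ only yields $A_j^2=o(1/\underline\sigma_j)$ or $A_j^2=o(1/\gamma_j)$, and both $\underline\sigma_j\to0$ and $\gamma_j\to0$.

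For instance, replace $\delta_j=j^{-12}$ by $\delta_j=j^{-11}$ in Proposition~\ref{prop:elp-example}. All of \eqref{eq:diag-step1-a}--\eqref{eq:diag-step3} still hold: e.g.\ $\sum_jA_j^2\overline\sigma_j\asymp\sum_jj^{-4}$, $\sum_j\gamma_jA_j^2=\sum_{j\ge2}j^{-2}$, and $\sum_jD_j\overline\sigma_j\asymp\sum_jj^{-5}$. Yet $A_j=j$.

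In that example your displayed pointwise bound $\|(\Gamma^d)^{1/2}(G_t^d(x)-G_t^d(y))\|\le C(1+\|x\|+\|y\|)^2\|x-y\|$ is itself false for any $d$-independent $C$. Take $y=re_j$ with $r\asymp j^{-1/2}$. Then $p_{1,t}^dp_{2,t}^d\gtrsim1$ and $|\partial_{x_j}p_{1,t}^d|=p_{1,t}^dp_{2,t}^d\,|a_{12,j}(t)|\,r\asymp j^{1/2}$. This enters the first coordinate of $G_t^d$, which equals $(2p_{1,t}^d-1)a/(1+\kappa_t)$. So no deterministic Lipschitz bound with polynomial growth in $\|x\|$ alone is available. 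The factor $\mathcal A^d$ has to be kept as a random weight and controlled only through its moments under the annealing path.

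\textbf{The fallback.} Your fallback points in the right direction and is exactly the paper's argument, but two ingredients are missing.

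First, along the segment you need that $\mathcal A^d$ is a seminorm, so $\mathcal A^d(z_\theta)\le\mathcal A^d(x)+\mathcal A^d(y)$. This gives $\sum_i|p_{i,t}^d(x)-p_{i,t}^d(y)|\le(\sqrt2\,\mathcal A^d(x)+\sqrt2\,\mathcal A^d(y)+B_0)\|x-y\|$, with $x=\widehat X_t^d$ and $y=\widehat X_s^d$ both distributed along the annealing path.

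Second, the moment order is wrong. Lemma~\ref{lem:app-increment4} controls $\|x-y\|$ only in $L^4$. After squaring, H\"older with exponents $(4,4,2)$ therefore requires both the factor $\sup_i\|(\Gamma^d)^{1/2}c_{i,t}^d(y)\|\lesssim1+\|y\|$ and $\mathcal A^d(\widehat X_r^d)$ to be in $L^8$. A second-moment or $L^4$ bound on $\mathcal A^d$, as you suggest, is not enough. The $L^8$ bound follows from Minkowski in $L^4$: $\|\mathcal A^d(\widehat X_r^d)^2\|_{L^4}\le\sum_jA_j^2(\E|\widehat X_{r,j}^d|^8)^{1/4}\le C\sum_jA_j^2(\overline\sigma_j+\lambda_j+\overline m_j^2)$. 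This is precisely where the first condition in \eqref{eq:diag-step2-a} is used.

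With these two changes your argument coincides with the paper's.
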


\begin{proof}
Set
\[
C_{A,\gamma}:=\sum_{j\ge1}\gamma_jA_j^2<\infty,
\qquad
C_{c,0}:=
\left(
\sum_{j\ge1}\gamma_j
\frac{\overline m_j^2}{\underline\sigma_j^2}
\right)^{1/2},
\qquad
C_{c,1}:=(2C_{A,\gamma})^{1/2}.
\]
Fix $d,t$ and $x,y\in\R^d$. Decompose
\[
G_t^d(x)-G_t^d(y)
=
\sum_{i\in I}p_{i,t}^d(x)\bigl(c_{i,t}^d(x)-c_{i,t}^d(y)\bigr)
+
\sum_{i\in I}
\bigl(p_{i,t}^d(x)-p_{i,t}^d(y)\bigr)c_{i,t}^d(y).
\]

For the affine-correction increment, convexity and Lemma~\ref{lem:app-ci} give
\begin{align*}
\left\|
(\Gamma^d)^{1/2}
\sum_{i\in I}p_{i,t}^d(x)\bigl(c_{i,t}^d(x)-c_{i,t}^d(y)\bigr)
\right\|^2
& \le
\sum_{i\in I}p_{i,t}^d(x)
\left\|
(\Gamma^d)^{1/2}
\bigl(c_{i,t}^d(x)-c_{i,t}^d(y)\bigr)
\right\|^2
\\
&\le
\sum_{j=1}^d\gamma_jA_j^2(x_j-y_j)^2.
\end{align*}
Hence, taking $x=\widehat X_t^d$ and $y=\widehat X_s^d$,
\[
\E
\sum_{j=1}^d\gamma_jA_j^2
(\widehat X_{t,j}^d-\widehat X_{s,j}^d)^2
\le
C_{A,\gamma}\E\|\widehat X_t^d-\widehat X_s^d\|^2.
\]
By Lemma~\ref{lem:app-increment4},
\[
\E\|\widehat X_t^d-\widehat X_s^d\|^2
\le
\left(\E\|\widehat X_t^d-\widehat X_s^d\|^4\right)^{1/2}
\le
C\left(1+T+\frac1T\right)(t-s),
\]
which gives the desired spatial bound for the affine-correction increment.

We next control the responsibility increment. By Lemma~\ref{lem:app-ci},
\[
\sup_{i\in I}
\|(\Gamma^d)^{1/2}c_{i,t}^d(y)\|
\le
\sqrt{2}\,C_{c,0}+C_{c,1}\|y\|.
\]
Moreover,
\[
\nabla p_{i,t}^d(z)
=
p_{i,t}^d(z)
\left(
s_{i,t}^d(z)
-
\sum_{\ell\in I}p_{\ell,t}^d(z)s_{\ell,t}^d(z)
\right).
\]
Therefore, using Lemma~\ref{lem:app-score-diff} and convexity,
\[
\sum_{i\in I}\|\nabla p_{i,t}^d(z)\|
\le
\sum_{i,\ell\in I}
p_{i,t}^d(z)p_{\ell,t}^d(z)
\|s_{i,t}^d(z)-s_{\ell,t}^d(z)\|
\le
\sqrt{2}\,\mathcal A^d(z)+B_0.
\]
The mean-value theorem along the segment
$z_\theta=(1-\theta)y+\theta x$ gives
\begin{align*}
\sum_{i\in I}|p_{i,t}^d(x)-p_{i,t}^d(y)|
&\le
\|x-y\|
\int_0^1
\sum_{i\in I}\|\nabla p_{i,t}^d(z_\theta)\|\,d\theta
\\
&\le
\bigl(
\sqrt{2}\,\mathcal A^d(x)
+
\sqrt{2}\,\mathcal A^d(y)
+
B_0
\bigr)
\|x-y\|.
\end{align*}
Hence
\begin{align*}
&\left\|
(\Gamma^d)^{1/2}
\sum_{i\in I}
\bigl(p_{i,t}^d(x)-p_{i,t}^d(y)\bigr)c_{i,t}^d(y)
\right\|
\\
&\qquad\le
\bigl(\sqrt{2}\,C_{c,0}+C_{c,1}\|y\|\bigr)
\bigl(
\sqrt{2}\,\mathcal A^d(x)
+
\sqrt{2}\,\mathcal A^d(y)
+
B_0
\bigr)
\|x-y\|.
\end{align*}
Now set $x=\widehat X_t^d$ and $y=\widehat X_s^d$. H\"older's inequality with
exponents $(4,4,2)$ yields
\begin{align*}
&\E\left\|
(\Gamma^d)^{1/2}
\sum_{i\in I}
\bigl(p_{i,t}^d(\widehat X_t^d)-p_{i,t}^d(\widehat X_s^d)\bigr)
c_{i,t}^d(\widehat X_s^d)
\right\|^2
\\
&\qquad\le
\left\|\sqrt{2}\,C_{c,0}+C_{c,1}\|\widehat X_s^d\|\right\|_{L^8}^2
\left\|
\sqrt{2}\,\mathcal A^d(\widehat X_t^d)
+
\sqrt{2}\,\mathcal A^d(\widehat X_s^d)
+
B_0
\right\|_{L^8}^2
\left\|
\widehat X_t^d-\widehat X_s^d
\right\|_{L^4}^2.
\end{align*}
The first factor is uniformly bounded  by Lemma~\ref{lem:app-moments}, with no
dependence on $T$. To bound the second factor, note that
\[
\bigl(\mathcal A^d(x)\bigr)^2
=
\sum_{j=1}^dA_j^2x_j^2.
\]
By Minkowski's inequality in $L^4$,
\[
\left(\E\bigl[\mathcal A^d(\widehat X_r^d)\bigr]^8\right)^{1/4}
\le
\sum_{j=1}^d
A_j^2
\left(\E|\widehat X_{r,j}^d|^8\right)^{1/4}.
\]
Under $\Law(\widehat X_r^d)=\rho_r^d$, the $j$-th coordinate is a
one-dimensional Gaussian mixture with component means bounded by $\overline m_j$
and variances bounded by $\overline\sigma_j+\lambda_j$. Hence there is a constant $C>0$ such that
\[
\left(\E|\widehat X_{r,j}^d|^8\right)^{1/4}
\le
C(\overline\sigma_j+\lambda_j+\overline m_j^2).
\]
By \eqref{eq:diag-step2-a},
\[
\sum_{j\ge1}
A_j^2(\overline\sigma_j+\lambda_j+\overline m_j^2)<\infty.
\]
Therefore the second factor is uniformly bounded, again with no dependence on
$T$. Finally, Lemma~\ref{lem:app-increment4} gives
\[
\left\|
\widehat X_t^d-\widehat X_s^d
\right\|_{L^4}^2
\le
C \left(1+T+\frac1T\right)(t-s).
\]
Combining the affine and responsibility estimates proves the result for some constant $C_{\mathrm{sp}}$.
\end{proof}

\paragraph{Temporal Freezing Defect.}
We next estimate
\[
(\Gamma^d)^{1/2}
\bigl(G_t^d(\widehat X_s^d)-G_s^d(\widehat X_s^d)\bigr),
\]
where the spatial variable is fixed and only the time argument changes. Using
\[
G_t^d(x)=\sum_{i\in I}p_{i,t}^d(x)c_{i,t}^d(x),
\]
we split the temporal variation into two contributions:
\[
\sum_{i\in I}p_{i,t}^d(x)
\bigl(c_{i,t}^d(x)-c_{i,s}^d(x)\bigr),
\]
the affine-correction increment, and
\[
\sum_{i\in I}
\bigl(p_{i,t}^d(x)-p_{i,s}^d(x)\bigr)c_{i,s}^d(x),
\]
the responsibility increment. The first term is controlled by the time
derivatives of the affine-correction coefficients, namely
$\dot\alpha_{ij}$ and $\dot\beta_{ij}$. The second term requires a bound on
the time variation of the responsibilities, which is obtained through
\[
\zeta_{i,t}^d(x):=\partial_t\log\varphi_{i,t}^d(x).
\]
Since $\kappa_t=(T-t)/T$, each time derivative contributes a factor $1/T$.
This is the mechanism behind the final bound of order $(t-s)^2/T^2$.

\begin{lemma}\label{lem:app-zeta-diff}
Define
\[
\zeta_{i,t}^d(x):=\partial_t\log\varphi_{i,t}^d(x),
\qquad
\mathcal D^d(x):=\sum_{j=1}^dD_jx_j^2.
\]
Under \eqref{eq:diag-step1-a}, \eqref{eq:diag-step2-bb}, and
\eqref{eq:diag-step2-d}, there exist constants $H_0,H_1$, independent
of $d$ and $T$, such that, uniformly in $i,\ell,d,t,x$,
\[
|\zeta_{i,t}^d(x)-\zeta_{\ell,t}^d(x)|
\le
\frac{1}{T}\left(
H_0+H_1\|x\|+\mathcal D^d(x)
\right).
\]
\end{lemma}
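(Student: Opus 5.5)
We compute $\zeta_{i,t}^d$ explicitly and then bound the pairwise differences coordinate by coordinate. Since $\varphi_{i,t}^d$ is the density of $\mathcal N(m_i^d,D_{i,t}^d)$ with diagonal covariance,
\[
\log\varphi_{i,t}^d(x)=-\frac12\sum_{j=1}^d\Big(\log(2\pi v_{i,t,j})+\frac{(x_j-m_{ij})^2}{v_{i,t,j}}\Big).
\]
Using $\dot v_{i,t,j}=-\lambda_j/T$, we obtain
\[
\zeta_{i,t}^d(x)=\frac{1}{2T}\sum_{j=1}^d\lambda_j\Big(\frac{1}{v_{i,t,j}}-\frac{(x_j-m_{ij})^2}{v_{i,t,j}^2}\Big).
\]
The factor $1/T$ is thus explicit. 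It remains to show that $T\,(\zeta_{i,t}^d-\zeta_{\ell,t}^d)$ is bounded by $H_0+H_1\|x\|+\mathcal D^d(x)$ with $d$-independent constants.

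We expand $(x_j-m_{ij})^2/v_{i,t,j}^2=x_j^2/v_{i,t,j}^2-2x_jm_{ij}/v_{i,t,j}^2+m_{ij}^2/v_{i,t,j}^2$ and group the difference into constant, linear and quadratic terms in $x_j$.

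\textbf{Quadratic term.} The coefficient of $x_j^2$ is $\frac{\lambda_j}{2}\big(v_{\ell,t,j}^{-2}-v_{i,t,j}^{-2}\big)$. Since $|v^{-2}-w^{-2}|=|v-w|(v+w)/(v^2w^2)\le 2\delta\sigma_j/\underline\sigma_j^3$, this coefficient is at most $D_j$ in absolute value. It therefore contributes at most $\mathcal D^d(x)$, matching the stated form with coefficient one.

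\textbf{Constant term.} It consists of $\frac{\lambda_j}{2}|v_{i,t,j}^{-1}-v_{\ell,t,j}^{-1}|\le \frac12\lambda_j\delta\sigma_j/\underline\sigma_j^2\le \frac12 D_j\underline\sigma_j$, together with $\frac{\lambda_j}{2}|m_{ij}^2/v_{i,t,j}^2-m_{\ell j}^2/v_{\ell,t,j}^2|$. We write the latter as $\frac{\lambda_j}{2}\,|m_{ij}/v_{i,t,j}-m_{\ell j}/v_{\ell,t,j}|\,|m_{ij}/v_{i,t,j}+m_{\ell j}/v_{\ell,t,j}|\le\lambda_j B_j\overline m_j/\underline\sigma_j$, using Lemma~\ref{lem:app-score-diff}. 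The first sum is finite by the first condition in \eqref{eq:diag-step2-bb}, since $\underline\sigma_j\le\overline\sigma_j$. For the second, Cauchy--Schwarz gives $\sum_j\lambda_jB_j\overline m_j/\underline\sigma_j\le(\sum_j\lambda_j^2B_j^2/\underline\sigma_j^2)^{1/2}(\sum_j\overline m_j^2)^{1/2}$, which is finite by \eqref{eq:diag-step2-d} and \eqref{eq:diag-step1-a}. These terms give $H_0$.

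\textbf{Linear term.} The coefficient of $x_j$ is $\lambda_j\big(m_{ij}/v_{i,t,j}^2-m_{\ell j}/v_{\ell,t,j}^2\big)$. We split it as
\[
\frac{1}{v_{i,t,j}}\Big(\frac{m_{ij}}{v_{i,t,j}}-\frac{m_{\ell j}}{v_{\ell,t,j}}\Big)+\frac{m_{\ell j}}{v_{\ell,t,j}}\Big(\frac{1}{v_{i,t,j}}-\frac1{v_{\ell,t,j}}\Big),
\]
which is bounded by $B_j/\underline\sigma_j+\overline m_jA_j/\underline\sigma_j\le 2B_j/\underline\sigma_j$, since $\overline m_jA_j\le B_j$. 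Cauchy--Schwarz in $j$ then bounds the linear contribution by $2\big(\sum_j\lambda_j^2B_j^2/\underline\sigma_j^2\big)^{1/2}\|x\|$, which is finite by the first condition in \eqref{eq:diag-step2-d}. This gives $H_1$.

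The only delicate point is the bookkeeping. Each grouped coefficient must match a summable quantity from the assumption list, and the quadratic part must be kept as the weighted $\mathcal D^d(x)$ rather than bounded by $\|x\|^2$, since $\sup_jD_j$ need not yield a uniform constant. Collecting the three terms gives the claimed bound.
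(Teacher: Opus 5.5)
Your proposal is correct and follows essentially the same route as the paper: compute $\zeta_{i,t}^d$ explicitly, expand the difference into quadratic, linear and constant parts in $x_j$, bound the coefficients by $D_j$, $2B_j/\underline\sigma_j$ and $2\overline m_jB_j/\underline\sigma_j$, apply Cauchy--Schwarz against \eqref{eq:diag-step2-d} and \eqref{eq:diag-step1-a}, and handle $\sum_j\lambda_jA_j=\sum_jD_j\underline\sigma_j$ through \eqref{eq:diag-step2-bb}. The only cosmetic difference is the constant mean term: you factor it as a difference of squares and reuse $|b_{i\ell,j}|\le B_j$ from Lemma~\ref{lem:app-score-diff}, whereas the paper splits it directly using $|m_{ij}^2-m_{\ell j}^2|\le 2\overline m_j\Delta m_j$, and both arrive at the same bound.
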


\begin{proof}
A direct differentiation gives
\[
\zeta_{i,t}^d(x)
=
\frac{1}{2T}
\sum_{j=1}^d
\lambda_j
\left[
\frac{1}{v_{i,t,j}}
-
\frac{(x_j-m_{ij})^2}{v_{i,t,j}^2}
\right].
\]
Therefore
\[
\zeta_{i,t}^d(x)-\zeta_{\ell,t}^d(x)
=
\frac{1}{2T}
\sum_{j=1}^d\lambda_j
\left[
\left(
\frac{1}{v_{i,t,j}}-\frac{1}{v_{\ell,t,j}}
\right)
-
\left(
\frac{(x_j-m_{ij})^2}{v_{i,t,j}^2}
-
\frac{(x_j-m_{\ell j})^2}{v_{\ell,t,j}^2}
\right)
\right].
\]
We expand the quadratic difference:
\begin{align*}
\frac{(x_j-m_{ij})^2}{v_{i,t,j}^2}
-
\frac{(x_j-m_{\ell j})^2}{v_{\ell,t,j}^2}
=
\left(
\frac{1}{v_{i,t,j}^2}
-
\frac{1}{v_{\ell,t,j}^2}
\right)x_j^2
-2\left(
\frac{m_{ij}}{v_{i,t,j}^2}
-
\frac{m_{\ell j}}{v_{\ell,t,j}^2}
\right)x_j
+
\left(
\frac{m_{ij}^2}{v_{i,t,j}^2}
-
\frac{m_{\ell j}^2}{v_{\ell,t,j}^2}
\right).
\end{align*}
The coefficient of $x_j^2$ is bounded by
\[
\left|
\frac{1}{v_{i,t,j}^2}
-
\frac{1}{v_{\ell,t,j}^2}
\right|
\le
\frac{2\delta\sigma_j}{\underline\sigma_j^3},
\]
and hence, after multiplication by $\lambda_j$, by $2D_j$.

For the linear coefficient,
\[
\left|
\frac{m_{ij}}{v_{i,t,j}^2}
-
\frac{m_{\ell j}}{v_{\ell,t,j}^2}
\right|
\le
\frac{\Delta m_j}{\underline\sigma_j^2}
+
\frac{2\overline m_j\delta\sigma_j}{\underline\sigma_j^3}
\le
\frac{2B_j}{\underline\sigma_j}.
\]
Thus
\[
\sum_{j=1}^d
\lambda_j
\left|
\frac{m_{ij}}{v_{i,t,j}^2}
-
\frac{m_{\ell j}}{v_{\ell,t,j}^2}
\right|
|x_j|
\le
2
\left(
\sum_{j\ge1}
\lambda_j^2\frac{B_j^2}{\underline\sigma_j^2}
\right)^{1/2}
\|x\|.
\]
For the constant terms, using
\[
|m_{ij}^2-m_{\ell j}^2|
\le
2\overline m_j\Delta m_j,
\]
we get
\[
\left|
\frac{m_{ij}^2}{v_{i,t,j}^2}
-
\frac{m_{\ell j}^2}{v_{\ell,t,j}^2}
\right|
\le
\frac{2\overline m_j\Delta m_j}{\underline\sigma_j^2}
+
\frac{2\overline m_j^2\delta\sigma_j}{\underline\sigma_j^3}
\le
\frac{2\overline m_jB_j}{\underline\sigma_j}.
\]
Hence
\[
\sum_{j\ge1}
\lambda_j
\frac{\overline m_jB_j}{\underline\sigma_j}
\le
\left(
\sum_{j\ge1}\overline m_j^2
\right)^{1/2}
\left(
\sum_{j\ge1}
\lambda_j^2\frac{B_j^2}{\underline\sigma_j^2}
\right)^{1/2}
<\infty.
\]
Finally,
\[
\sum_{j\ge1}\lambda_jA_j
=
\sum_{j\ge1}D_j\underline\sigma_j
\le
\sum_{j\ge1}D_j\overline\sigma_j
<\infty
\]
by \eqref{eq:diag-step2-bb}. Collecting these bounds proves the claim, with the
only dependence on $T$ given by the explicit prefactor $1/T$.
\end{proof}

We now combine the affine-correction estimate with the preceding
time-derivative bound for the responsibilities.

\begin{lemma}\label{lem:app-temporal-defect}
Under \eqref{eq:diag-step1-a}, \eqref{eq:diag-step1-c},
\eqref{eq:diag-step2-bb}, and \eqref{eq:diag-step2-d}, there exists a
constant $C_{\mathrm{tm}}$, independent of $d$ and $T$, such that for all
$0\le s\le t\le T$,
\[
\E\left\|
(\Gamma^d)^{1/2}
\bigl(G_t^d(\widehat X_s^d)-G_s^d(\widehat X_s^d)\bigr)
\right\|^2
\le
\frac{C_{\mathrm{tm}}}{T^2}(t-s)^2.
\]
\end{lemma}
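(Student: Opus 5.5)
Fix $x=\widehat X_s^d$ and split $G_t^d(x)-G_s^d(x)$ into the affine-correction increment $\sum_i p_{i,t}^d(x)(c_{i,t}^d(x)-c_{i,s}^d(x))$ and the responsibility increment $\sum_i (p_{i,t}^d(x)-p_{i,s}^d(x))c_{i,s}^d(x)$. Bound the weighted square of each by $(t-s)^2/T^2$ times a random variable, then take expectations under $\Law(\widehat X_s^d)=\rho_s^d$.

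\emph{Affine increment.} By convexity of $\|\cdot\|^2$, the weighted square is at most $\sup_i\|(\Gamma^d)^{1/2}(c_{i,t}^d(x)-c_{i,s}^d(x))\|^2$. Coordinatewise, the mean value theorem and Lemma~\ref{lem:app-ci} give
\[
|c_{i,t,j}^d(x)-c_{i,s,j}^d(x)|\le (t-s)\Big(\frac{2D_j}{T}|x_j|+\frac{\lambda_j\overline m_j}{T\underline\sigma_j^2}\Big).
\]
Squaring, weighting by $\gamma_j$ and summing yields
\[
\frac{2(t-s)^2}{T^2}\Big(4\sum_j\gamma_jD_j^2x_j^2+\sum_j\gamma_j\lambda_j^2\overline m_j^2/\underline\sigma_j^4\Big).
\]
The second sum is finite by \eqref{eq:diag-step2-d}. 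For the first, take expectations: $\E x_j^2\le \overline\sigma_j+\lambda_j+\overline m_j^2$, so it is finite by the second condition in \eqref{eq:diag-step2-bb}.

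\emph{Responsibility increment.} Write $p_{i,t}^d=w_i\varphi_{i,t}^d/\sum_k w_k\varphi_{k,t}^d$. Then
\[
\partial_r p_{i,r}^d(x)=p_{i,r}^d(x)\Big(\zeta_{i,r}^d(x)-\sum_\ell p_{\ell,r}^d(x)\zeta_{\ell,r}^d(x)\Big),
\]
so $\sum_i|\partial_r p_{i,r}^d(x)|\le \sup_{i,\ell}|\zeta_{i,r}^d(x)-\zeta_{\ell,r}^d(x)|$. Integrating over $[s,t]$ and applying Lemma~\ref{lem:app-zeta-diff} (whose bound is uniform in $r$) gives
\[
\sum_i|p_{i,t}^d(x)-p_{i,s}^d(x)|\le \frac{t-s}{T}\big(H_0+H_1\|x\|+\mathcal D^d(x)\big).
\]
Combined with the uniform bound $\sup_i\|(\Gamma^d)^{1/2}c_{i,s}^d(x)\|\le\sqrt2 C_{c,0}+C_{c,1}\|x\|$ from Lemma~\ref{lem:app-G-growth}, the weighted norm of this term is at most
\[
\frac{t-s}{T}\big(\sqrt2C_{c,0}+C_{c,1}\|x\|\big)\big(H_0+H_1\|x\|+\mathcal D^d(x)\big).
\]

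\emph{Expectation.} The expected square of this product is controlled by Cauchy--Schwarz, $\E[U^2V^2]\le\|U\|_{L^4}^2\|V\|_{L^4}^2$. The factor $\|x\|$ is handled by Lemma~\ref{lem:app-moments} (moments up to order $8$). The main obstacle is $\E[\mathcal D^d(x)^4]$, uniformly in $d$. Use Minkowski in $L^4$ as in Lemma~\ref{lem:app-spatial-defect}:
\[
\|\mathcal D^d(x)\|_{L^4}\le\sum_j D_j\big(\E|x_j|^8\big)^{1/4}\le C\sum_j D_j(\overline\sigma_j+\lambda_j+\overline m_j^2),
\]
which is finite by the first condition in \eqref{eq:diag-step2-bb}. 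Collecting both increments with $(a+b)^2\le2a^2+2b^2$ gives the bound $C_{\mathrm{tm}}(t-s)^2/T^2$. The constant $C_{\mathrm{tm}}$ is independent of $d$ and $T$, since every $T$-dependence enters only through the explicit $1/T$ prefactors.
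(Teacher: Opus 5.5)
Your proof is correct and follows essentially the same route as the paper. You use the same split into affine-correction and responsibility increments, the mean-value bounds from Lemma~\ref{lem:app-ci}, the $\zeta$-difference bound of Lemma~\ref{lem:app-zeta-diff} integrated over $[s,t]$, Cauchy--Schwarz with $L^4$ norms, and Minkowski in $L^4$ to control $\mathcal D^d(\widehat X_s^d)$ via the first condition in \eqref{eq:diag-step2-bb}. The only cosmetic difference is that you take the bound on $\sup_i\|(\Gamma^d)^{1/2}c_{i,s}^d(x)\|$ from the proof of Lemma~\ref{lem:app-G-growth}, whereas the paper cites Lemma~\ref{lem:app-ci}, from which that bound is derived.
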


\begin{proof}
Set
\[
C_{A,\gamma}:=\sum_{j\ge1}\gamma_jA_j^2<\infty,
\qquad
C_{c,1}:=(2C_{A,\gamma})^{1/2},
\qquad
C_{c,0}:=
\left(
\sum_{j\ge1}\gamma_j
\frac{\overline m_j^2}{\underline\sigma_j^2}
\right)^{1/2}.
\]
Fix $d$ and $0\le s\le t\le T$. For $x\in\R^d$, we write
\[
G_t^d(x)-G_s^d(x)
=
\sum_{i\in I}p_{i,t}^d(x)\bigl(c_{i,t}^d(x)-c_{i,s}^d(x)\bigr)
+
\sum_{i\in I}
\bigl(p_{i,t}^d(x)-p_{i,s}^d(x)\bigr)c_{i,s}^d(x).
\]
We treat the two terms separately.

We first bound the affine-correction increment. By convexity and
Lemma~\ref{lem:app-ci},
\begin{align*}
&\left\|
(\Gamma^d)^{1/2}
\sum_{i\in I}p_{i,t}^d(x)\bigl(c_{i,t}^d(x)-c_{i,s}^d(x)\bigr)
\right\|^2
\\
&\qquad\le
\sup_{i\in I}
\left\|
(\Gamma^d)^{1/2}
\bigl(c_{i,t}^d(x)-c_{i,s}^d(x)\bigr)
\right\|^2
\\
&\qquad\le
2\sup_{i\in I}
\sum_{j=1}^d
\gamma_j
|\alpha_{ij}(t)-\alpha_{ij}(s)|^2x_j^2
+
2\sup_{i\in I}
\sum_{j=1}^d
\gamma_j
|\beta_{ij}(t)-\beta_{ij}(s)|^2.
\end{align*}
By the mean-value theorem and Lemma~\ref{lem:app-ci},
\[
|\alpha_{ij}(t)-\alpha_{ij}(s)|
\le
\frac{2(t-s)}{T}D_j,
\qquad
|\beta_{ij}(t)-\beta_{ij}(s)|
\le
\frac{t-s}{T}
\frac{\lambda_j\overline m_j}{\underline\sigma_j^2}.
\]
Hence, for $x = \widehat X_s^d$,
\begin{align*}
&\E\left\|
(\Gamma^d)^{1/2}
\sum_{i\in I}p_{i,t}^d(\widehat X_s^d)
\bigl(c_{i,t}^d(\widehat X_s^d)-c_{i,s}^d(\widehat X_s^d)\bigr)
\right\|^2
\\
&\qquad\le
\frac{C(t-s)^2}{T^2}
\left[
\sum_{j\ge1}
\gamma_jD_j^2
(\overline\sigma_j+\lambda_j+\overline m_j^2)
+
\sum_{j\ge1}
\gamma_j\lambda_j^2
\frac{\overline m_j^2}{\underline\sigma_j^4}
\right],
\end{align*}
for some constant $C>0$, independent of $d$, $s$, $t$, and $T$.
The bracketed quantity is finite by \eqref{eq:diag-step2-bb} and
\eqref{eq:diag-step2-d}. Therefore the affine-correction increment is bounded by a constant multiple of
$(t-s)^2/T^2$, uniformly in $d$.

We now bound the responsibility increment. Since
\[
\partial_r p_{i,r}^d(x)
=
p_{i,r}^d(x)
\left(
\zeta_{i,r}^d(x)
-
\sum_{\ell\in I}p_{\ell,r}^d(x)\zeta_{\ell,r}^d(x)
\right),
\]
Lemma~\ref{lem:app-zeta-diff} gives
\[
\sum_{i\in I}|\partial_r p_{i,r}^d(x)|
\le
\frac{1}{T}
\left(
H_0+H_1\|x\|+\mathcal D^d(x)
\right).
\]
Hence
\[
\sum_{i\in I}|p_{i,t}^d(x)-p_{i,s}^d(x)|
\le \int_s^t \sum_{i \in I} \left|\partial_r p_{i,r}^d(x) \right| \, \text{d}r \le
\frac{t-s}{T}
\left(
H_0+H_1\|x\|+\mathcal D^d(x)
\right).
\]
Moreover, Lemma~\ref{lem:app-ci} gives
\[
\sup_{i\in I}\|(\Gamma^d)^{1/2}c_{i,s}^d(x)\|
\le
\sqrt{2}\,C_{c,0}+C_{c,1}\|x\|.
\]
Therefore
\begin{align*}
&\left\|
(\Gamma^d)^{1/2}
\sum_{i\in I}
\bigl(p_{i,t}^d(x)-p_{i,s}^d(x)\bigr)c_{i,s}^d(x)
\right\|
\\
&\qquad\le \sup_{i\in I}\norm{(\Gamma^d)^{1/2}c_{i,s}^d(x)}\sum_{i\in I}\abs{p_{i,t}^d(x)-p_{i,s}^d(x)}\\&\qquad\le
\frac{t-s}{T}
\bigl(\sqrt{2}\,C_{c,0}+C_{c,1}\|x\|\bigr)
\bigl(H_0+H_1\|x\|+\mathcal D^d(x)\bigr).
\end{align*}
Now set $x=\widehat X_s^d$. By Hölder,
\begin{align*}
&\E\Bigl\|(\Gamma^d)^{1/2}\sum_{i\in I} (p_{i,t}^d(\widehat X_s^d)-p_{i,s}^d(\widehat X_s^d))c_{i,s}^d(\widehat X_s^d)\Bigr\|^2\\
&\qquad\le
\frac{(t-s)^2}{T^2}
\Bigl\|\sqrt{2}\,C_{c,0}+C_{c,1}\norm{\widehat X_s^d}\Bigr\|_{L^4}^2
\Bigl\|H_0+H_1\norm{\widehat X_s^d}+\mathcal D^d(\widehat X_s^d)\Bigr\|_{L^4}^2.
\end{align*}

By Lemma~\ref{lem:app-moments}, the
moments of $\|\widehat X_s^d\|$ needed here are uniformly bounded, with no
dependence on $T$. It remains only to bound $\mathcal D^d(\widehat X_s^d)$ in
$L^4$. Since
\[
\mathcal D^d(x)
=
\sum_{j=1}^dD_jx_j^2,
\]
Minkowski's inequality in $L^4$ gives
\[
\left\|D^d(\widehat X_s^d)\right\|_{L^4} =\left(\E|\mathcal D^d(\widehat X_s^d)|^4\right)^{1/4}
\le
\sum_{j=1}^d
D_j
\left(\E|\widehat X_{s,j}^d|^8\right)^{1/4}.
\]
As in the proof of Lemma~\ref{lem:app-spatial-defect},
\[
\left(\E|\widehat X_{s,j}^d|^8\right)^{1/4}
\le
C(\overline\sigma_j+\lambda_j+\overline m_j^2),
\]
and the summability condition \eqref{eq:diag-step2-bb} gives
\[
\sum_{j\ge1}
D_j(\overline\sigma_j+\lambda_j+\overline m_j^2)<\infty.
\]
Thus the responsibility increment is also bounded by a constant multiple of $(t-s)^2/T^2$,
uniformly in $d$. Combining the affine and responsibility bounds proves the
result.
\end{proof}

\paragraph{Path-Space KL Estimate and Conclusion.}
We now combine the preceding estimates. Let $(\Omega,\mathcal F,(\mathcal F_t)_{t\in[0,T]},P)$ carry the unique strong solution $\widehat X^d$ to \eqref{eq:app-reference}, together with the driving $d$-dimensional Brownian motion $W^d$. Let $P^d:=P\circ(\widehat X^d)^{-1}$ be the path law of $\widehat X^d$ on $C([0,T];\R^d)$, and let $Q^d$ be the path law of the interpolation process $Y^d$ defined by \eqref{eq:interp-raw}. The terminal marginal of $P^d$ is $\rho_\star^d$, whereas
the terminal marginal of $Q^d$ is $\Law(Y_T^d)$. Thus a path-space relative
entropy estimate, followed by data processing under the evaluation map at time $T$, gives
the desired terminal KL bound.

\begin{proof}[Proof of Theorem~\ref{thm:concrete-elp}]
Fix $d\ge1$. By construction, the marginal at time $T$ of $Q^d$ is
$\Law(Y_T^d)$, while the marginal at time $T$ of $P^d$ is $\rho_\star^d$.

The two path laws have the same diffusion coefficient $\sqrt{2\Gamma^d}$ and
differ only in their drifts. Since the ELP interpolation uses a frozen value at
the previous mesh point, it is useful to write the two drifts as adapted
functionals on path space. For a continuous path
$\omega\in C([0,T];\mathbb R^d)$ and $t\in[t_n,t_{n+1})$, set
\[
b_t^{P,d}(\omega)
:=
-\Gamma^d(B_t^d)^{-1}\omega_t
+
\Gamma^dG_t^d(\omega_t)
+
u_t^d(\omega_t),
\]
and
\[
b_t^{Q,d}(\omega)
:=
-\Gamma^d(B_t^d)^{-1}\omega_t
+
\Gamma^dG_{t_n}^d(\omega_{t_n}).
\]
The second drift is not Markovian in the current state alone, but it is
progressively measurable: on $[t_n,t_{n+1})$ it depends only on the current
value $\omega_t$ and on the previously observed mesh value $\omega_{t_n}$.
Thus it is admissible in the localized path-space Girsanov argument below.
Evaluating these functionals along the reference path $\widehat X^d$, we obtain
\[
(\Gamma^d)^{-1/2}
\bigl(b_t^{P,d}(\widehat X^d)-b_t^{Q,d}(\widehat X^d)\bigr)
=
(\Gamma^d)^{-1/2}u_t^d(\widehat X_t^d)
+
\Delta_t^d,
\]
where
\[
\Delta_t^d
=
(\Gamma^d)^{1/2}
\Bigl(
G_t^d(\widehat X_t^d)
-
G_{t_n}^d(\widehat X_{t_n}^d)
\Bigr).
\]

The square integrability needed for Girsanov follows from
Lemma~\ref{lem:app-transport}, Lemma~\ref{lem:app-G-growth}, and
Lemma~\ref{lem:app-moments}. Indeed,
\[
\norm{\Delta_t^d}^2
\le
2\norm{(\Gamma^d)^{1/2}G_t^d(\widehat X_t^d)}^2
+
2\norm{(\Gamma^d)^{1/2}G_{t_n}^d(\widehat X_{t_n}^d)}^2.
\]
By the growth bound on $G_t^d$,
\[
\norm{(\Gamma^d)^{1/2}G_t^d(x)}
\le
G_0+G_1\norm{x},
\]
and therefore
\[
\sup_{d\ge1}\sup_{t\in[0,T]}
\E\norm{(\Gamma^d)^{1/2}G_t^d(\widehat X_t^d)}^2
\le
2G_0^2+2G_1^2M_2
<\infty.
\]
The same estimate applies to
$G_{t_n}^d(\widehat X_{t_n}^d)$. Hence, for each fixed $T$,
\[
\E\int_0^T \norm{\Delta_t^d}^2\,\dd t<\infty.
\]
Also,
\[
\E\int_0^T \norm{(\Gamma^d)^{-1/2}u_t^d(\widehat X_t^d)}^2\,\dd t
=4J_{\mathrm{ann}}^d(T)<\infty.
\]
Therefore
\[
\E\int_0^T
\norm{
(\Gamma^d)^{-1/2}
\bigl(b_t^{P,d}(\widehat X^d)-b_t^{Q,d}(\widehat X^d)\bigr)
}^2\,\dd t
<\infty.
\]

Set
\[
\beta_t^d
:=
\frac{1}{\sqrt2}
(\Gamma^d)^{-1/2}
\bigl(b_t^{P,d}(\widehat X^d)-b_t^{Q,d}(\widehat X^d)\bigr).
\]
Then
\[
\E\int_0^T \norm{\beta_t^d}^2\,\dd t<\infty.
\]
For $m\ge1$, define the stopping time
\[
\tau_m^d
:=
\inf\left\{
t\in[0,T]:
\int_0^t\|\beta_s^d\|^2\,ds\ge m
\right\}\wedge T.
\]
Then
\[
\int_0^{\tau_m^d}\|\beta_s^d\|^2\,ds\le m
\qquad P\text{-a.s.}
\]
The stopped stochastic exponential
\[
Z_t^{d,m}
:=
\exp\left(
-\int_0^{t\wedge\tau_m^d}\beta_s^d\cdot dW_s^d
-
\frac12
\int_0^{t\wedge\tau_m^d}\|\beta_s^d\|^2\,ds
\right)
\]
is therefore a true $P$-martingale on $[0,T]$. Define a probability measure
$Q^{d,m}$ on $(\Omega, \mathcal F)$ by
\[
\frac{dQ^{d,m}}{dP}:=Z_T^{d,m}.
\]
By the stopped Girsanov theorem,
\[
W_t^{d,m}:=
W_t^d+\int_0^{t\wedge\tau_m^d}\beta_s^d\,ds
\]
is a Brownian motion under $Q^{d,m}$, and for $t\le\tau_m^d$,
\[
d\widehat X_t^d
=
b_t^{Q,d}(\widehat X^d)\,dt+\sqrt{2\Gamma^d}\,dW_t^{d,m}.
\]
That is, up to the stopping time $\tau_m^d$, the process $\widehat X^d$ solves
the same path-dependent equation as the ELP interpolation, with drift
$b_t^{Q,d}$ and diffusion coefficient $\sqrt{2\Gamma^d}$. This equation has a
unique strong solution, obtained recursively on the mesh intervals: once the
value at $t_n$ is known, the drift on $[t_n,t_{n+1})$ is the sum of a
time-dependent linear term in the current state and the frozen vector
$\Gamma^dG_{t_n}^d(Y_{t_n}^d)$. The same recursive construction gives
uniqueness in law for the stopped equation.

Now let
\[
S_m:C([0,T];\mathbb R^d)\to C([0,T];\mathbb R^d),
\qquad
S_m(\omega):=(\omega_{t\wedge\tau_m(\omega)})_{t\in[0,T]},
\]
where $\tau_m(\omega)$ denotes the canonical version of the stopping time above.
Write
\[
P_m^d:=P^d\circ S_m^{-1},
\qquad
Q_m^d:=Q^d\circ S_m^{-1}.
\]
Therefore the law of $\widehat X_{\cdot\wedge\tau_m^d}^d$ under $Q^{d,m}$
coincides with $Q_m^d$.

By data processing under the measurable map
$\widehat X^d\mapsto \widehat X_{\cdot\wedge\tau_m^d}^d$,
\[
\KL(P_m^d\,\|\,Q_m^d)
\le
\KL(P\,\|\,Q^{d,m}).
\]
Moreover,
\begin{align*}
\KL(P\,\|\,Q^{d,m})
&=
\E_P\log\frac{dP}{dQ^{d,m}}
=
-\E_P\log Z_T^{d,m}
\\
&=
\frac12
\E_P\int_0^{\tau_m^d}\|\beta_t^d\|^2\,dt
\\
&=
\frac14
\E_P\int_0^{\tau_m^d}
\left\|
(\Gamma^d)^{-1/2}
\bigl(b_t^{P,d}(\widehat X^d)-b_t^{Q,d}(\widehat X^d)\bigr)
\right\|^2\,dt.
\end{align*}
Using $(a+b)^2\le 2a^2+2b^2$ and the definition of
$J_{\mathrm{ann}}^d(T)$, we get
\[
\KL(P_m^d\,\|\,Q_m^d)
\le
2J_{\mathrm{ann}}^d(T)
+
\frac12
\sum_{n=0}^{N-1}
\int_{t_n}^{t_{n+1}}
\E\left[
\mathbf 1_{\{t\le\tau_m^d\}}\|\Delta_t^d\|^2
\right]\,dt.
\]
Since $\tau_m^d\uparrow T$ almost surely, the stopped observations increase to
the full path observation. Hence, by monotonicity of relative entropy under
increasing observations, followed by monotone convergence on the right-hand
side,
\begin{equation}\label{eq:app-path-kl-split}
\KL(P^d\,\|\,Q^d)
\le
2J_{\mathrm{ann}}^d(T)
+
\frac12
\sum_{n=0}^{N-1}
\int_{t_n}^{t_{n+1}}
\E\|\Delta_t^d\|^2\,dt.
\end{equation}
Finally, data processing under the evaluation map at time $T$ gives
\[
\KL\bigl(\rho_\star^d\,\|\,\Law(Y_T^d)\bigr)
\le
\KL(P^d\,\|\,Q^d).
\]

It remains to estimate the two terms on the right-hand side of
\eqref{eq:app-path-kl-split}. By Lemma~\ref{lem:app-transport},
\[
2J_{\mathrm{ann}}^d(T)
\le
\frac{1}{8T}
\sum_{j=1}^d
\frac{\lambda_j^2}{\gamma_j\underline\sigma_j}
\le
\frac{1}{8T}
\sum_{j\ge1}
\frac{\lambda_j^2}{\gamma_j\underline\sigma_j}.
\]

For the freezing term, fix $n$ and $t\in[t_n,t_{n+1})$. Using the decomposition
of $\Delta_t^d$ into its spatial and temporal parts,
\[
\Delta_t^d
=
(\Gamma^d)^{1/2}
\left(
G_t^d(\widehat X_t^d)-G_t^d(\widehat X_{t_n}^d)
\right)
+
(\Gamma^d)^{1/2}
\left(
G_t^d(\widehat X_{t_n}^d)-G_{t_n}^d(\widehat X_{t_n}^d)
\right),
\]
together with
$(a+b)^2\le2a^2+2b^2$, Lemma~\ref{lem:app-spatial-defect}, and
Lemma~\ref{lem:app-temporal-defect}, we obtain
\[
\E\|\Delta_t^d\|^2
\le
2C_{\mathrm{sp}}
\left(1+T+\frac1T\right)(t-t_n)
+
\frac{2C_{\mathrm{tm}}}{T^2}(t-t_n)^2.
\]
Integrating over $[t_n,t_{n+1}]$ gives
\[
\int_{t_n}^{t_{n+1}}
\E\|\Delta_t^d\|^2\,dt
\le
C_{\mathrm{sp}}
\left(1+T+\frac1T\right)h_n^2
+
\frac{2C_{\mathrm{tm}}}{3T^2}h_n^3.
\]
Summing over $n$ and using
\[
\sum_{n=0}^{N-1}h_n^2\le T h_{\max},
\qquad
\sum_{n=0}^{N-1}h_n^3\le T^2 h_{\max},
\]
we find
\[
\sum_{n=0}^{N-1}
\int_{t_n}^{t_{n+1}}
\E\|\Delta_t^d\|^2\,dt
\le
\left[
C_{\mathrm{sp}}(1+T+T^2)
+
\frac{2C_{\mathrm{tm}}}{3}
\right]h_{\max}.
\]
Inserting this bound into \eqref{eq:app-path-kl-split} yields
\[
\KL\bigl(\rho_\star^d\,\|\,\Law(Y_T^d)\bigr)
\le
\frac{1}{8T}
\sum_{j\ge1}
\frac{\lambda_j^2}{\gamma_j\underline\sigma_j}
+
C_{\mathrm{disc},T}h_{\max},
\]
where
\[
C_{\mathrm{disc},T}
:=
\frac12 C_{\mathrm{sp}}(1+T+T^2)
+
\frac13 C_{\mathrm{tm}}.
\]
In particular, there exists a constant $C_{\mathrm{disc}}$, depending only on
the summability bounds, such that
\[
C_{\mathrm{disc},T}
\le
C_{\mathrm{disc}}(1+T+T^2).
\]
The constants do not depend on $d$ or on the mesh. Taking the supremum over $d$
proves Theorem \ref{thm:concrete-elp}. The final
$\varepsilon_{\mathrm{ann}}+\varepsilon_{\mathrm{disc}}$ statement follows
immediately from the estimate above. Since the discretization constant grows at
most like $1+T+T^2$, it is enough to impose
\[
h_{\max}
\le
\frac{\varepsilon_{\mathrm{disc}}}
{C_{\mathrm{disc}}(1+T+T^2)}
\]
after choosing $T$ so that the annealing contribution is at most
$\varepsilon_{\mathrm{ann}}$.
\end{proof}

\subsection{Proof of Proposition \ref{prop:elp-example}}\label{app:elp-example}

Let
\[
\rho_\star^d
=
\frac12 \mathcal N(m_1^d,\Sigma_1^d)
+
\frac12 \mathcal N(m_2^d,\Sigma_2^d),
\qquad
\rho_0^d
=
\rho_\star^d * \mathcal N(0,C^d).
\]
In the example,
\[
\sigma_{1j}=\sigma_j=j^{-6},
\qquad
\sigma_{2j}=\sigma_j+\delta_j,
\qquad
\delta_1=0,\quad \delta_j=j^{-12}\quad (j\ge2).
\]
Hence
\[
\underline\sigma_j=\sigma_j=j^{-6},
\qquad
\overline\sigma_j=\sigma_j+\delta_j,
\qquad
\overline\sigma_j-\underline\sigma_j=\delta_j.
\]
Moreover,
\[
\overline m_1=a,\qquad \overline m_j=0\quad (j\ge2).
\]
Define 
\[
\Delta m_j:= \sup_{i,\ell\in I}|m_{ij}-m_{\ell j}|.
\]
Hence 
\[
\Delta m_1 =2a,
\qquad
\Delta m_j =0\quad (j\ge2).
\]

We first verify the summability assumptions
\eqref{eq:diag-step1-a}--\eqref{eq:diag-step3}. Since
\[
\lambda_j=j^{-6},
\qquad
\gamma_j=j^{-4},
\qquad
\delta_j=j^{-12}\quad (j\ge2),
\]
we have
\[
\sum_{j\ge1}(\overline\sigma_j+\lambda_j)
\lesssim
\sum_{j\ge1}j^{-6}<\infty,
\qquad
\sum_{j\ge1}\overline m_j^2=a^2<\infty.
\]
Also,
\[
\sum_{j\ge1}\gamma_j
=
\sum_{j\ge1}j^{-4}<\infty,
\]
and
\[
\sum_{j\ge1}
\gamma_j^2
\frac{\overline\sigma_j+\lambda_j}{\underline\sigma_j^2}
\lesssim
\sum_{j\ge1}
j^{-8}\frac{j^{-6}}{j^{-12}}
=
\sum_{j\ge1}j^{-2}
<\infty.
\]
The term involving the means is finite because $\overline m_j=0$ for all
$j\ge2$, while the first coordinate contributes only a constant:
\[
\sum_{j\ge1}
\gamma_j^2\frac{\overline m_j^2}{\underline\sigma_j^2}
=
\gamma_1^2\frac{a^2}{\sigma_1^2}
<\infty.
\]
Next,
\[
\sum_{j\ge1}
\gamma_j\frac{(\overline\sigma_j-\underline\sigma_j)^2}
{\underline\sigma_j^4}
=
\sum_{j\ge2}
j^{-4}\frac{j^{-24}}{j^{-24}}
<\infty,
\]
and again
\[
\sum_{j\ge1}\gamma_j\frac{\overline m_j^2}{\underline\sigma_j^2}
<\infty
\]
because only the first coordinate contributes.

We now check the conditions controlling the freezing defect. First,
\[
\sum_{j\ge1}
\frac{(\overline\sigma_j-\underline\sigma_j)^2}
{\underline\sigma_j^4}
(\overline\sigma_j+\lambda_j+\overline m_j^2)
\lesssim
\sum_{j\ge2}
\frac{j^{-24}}{j^{-24}}\,j^{-6} = \sum_{j\ge2}
j^{-6} 
<\infty.
\]
Furthermore,
\[
\sum_{j\ge1}
\left(
\frac{\Delta m_j}{\underline\sigma_j}
+
\frac{\overline m_j(\overline\sigma_j-\underline\sigma_j)}
{\underline\sigma_j^2}
\right)^2
<\infty,
\]
because the summand vanishes for all $j\ge2$ and the first coordinate is finite.

For the time-variation terms, since $\delta_j=0$ at $j=1$ and
$\overline m_j=0$ for $j\ge2$,
\[
\sum_{j\ge1}
\frac{\lambda_j(\overline\sigma_j-\underline\sigma_j)}
{\underline\sigma_j^3}
(\overline\sigma_j+\lambda_j+\overline m_j^2)
\lesssim
\sum_{j\ge2}
\frac{j^{-6}j^{-12}}{j^{-18}}\,j^{-6}
=
\sum_{j\ge2}j^{-6}
<\infty,
\]
and
\[
\sum_{j\ge1}
\gamma_j
\frac{\lambda_j^2(\overline\sigma_j-\underline\sigma_j)^2}
{\underline\sigma_j^6}
(\overline\sigma_j+\lambda_j+\overline m_j^2)
\lesssim
\sum_{j\ge2}
j^{-4}
\frac{j^{-12}j^{-24}}{j^{-36}}\,j^{-6}
=
\sum_{j\ge2}j^{-10}
<\infty.
\]
Similarly,
\[
\sum_{j\ge1}
\frac{\lambda_j^2}{\underline\sigma_j^2}
\left(
\frac{\Delta m_j}{\underline\sigma_j}
+
\frac{\overline m_j(\overline\sigma_j-\underline\sigma_j)}
{\underline\sigma_j^2}
\right)^2
<\infty,
\]
again because only the first coordinate contributes. Also,
\[
\sum_{j\ge1}
\gamma_j\lambda_j^2
\frac{\overline m_j^2}{\underline\sigma_j^4}
<\infty
\]
for the same reason. Finally,
\[
\sum_{j\ge1}
\frac{\lambda_j^2}{\gamma_j\underline\sigma_j}
=
\sum_{j\ge1}
\frac{j^{-12}}{j^{-4}j^{-6}}
=
\sum_{j\ge1}j^{-2}
<\infty.
\]
Thus all assumptions \eqref{eq:diag-step1-a}--\eqref{eq:diag-step3} hold.

We next prove that
\[
\KL(\rho_\star^d\,\|\,\rho_0^d)\to\infty.
\]
We use the variational characterization of relative entropy. For $t>0$, define
\[
S_d(x):=\sum_{j=2}^d \frac{x_j^2}{\sigma_j}.
\]
Then
\[
\KL(\rho_\star^d\,\|\,\rho_0^d)
\ge
-t\,\mathbb E_{\rho_\star^d}[S_d]
-
\log \mathbb E_{\rho_0^d}\!\left[e^{-tS_d}\right].
\]
Under $\rho_\star^d$, the tail coordinates have, conditionally on the mixture component, variances
\[
\sigma_j
\quad\text{or}\quad
\sigma_j+\delta_j.
\]
Therefore
\[
\mathbb E_{\rho_\star^d}[S_d]
=
\frac12\sum_{j=2}^d 1
+
\frac12\sum_{j=2}^d\left(1+\frac{\delta_j}{\sigma_j}\right)
=
(d-1)+\frac12\sum_{j=2}^d j^{-6}
\le
(d-1)+C_0,
\]
where $C_0:=\frac12\sum_{j\ge2}j^{-6}<\infty$.

On the other hand, under $\rho_0^d$, the two tail covariance profiles are
\[
\sigma_j+\lambda_j=2\sigma_j,
\qquad
\sigma_j+\delta_j+\lambda_j=2\sigma_j+\delta_j.
\]
Hence
\[
\mathbb E_{\rho_0^d}\!\left[e^{-tS_d}\right]
=
\frac12\prod_{j=2}^d (1+4t)^{-1/2}
+
\frac12\prod_{j=2}^d
\left(1+2t\left(2+\frac{\delta_j}{\sigma_j}\right)\right)^{-1/2}.
\]
Since $\delta_j/\sigma_j=j^{-6}\ge0$, the second product is bounded above by the first one. Therefore
\[
\mathbb E_{\rho_0^d}\!\left[e^{-tS_d}\right]
\le
(1+4t)^{-(d-1)/2}.
\]
Hence,
\[
\KL(\rho_\star^d\,\|\,\rho_0^d)
\ge
(d-1)\left(\frac12\log(1+4t)-t\right)
-tC_0.
\]
Choosing, for instance, $t=1/8$, we get
\[
\frac12\log(1+4t)-t
=
\frac12\log\!\left(\frac32\right)-\frac18
>0.
\]
Thus
\[
\KL(\rho_\star^d\,\|\,\rho_0^d)\to\infty
\qquad\text{as }d\to\infty.
\]

Finally, since the summability assumptions
\eqref{eq:diag-step1-a}--\eqref{eq:diag-step3} have been verified, Theorem~\ref{thm:concrete-elp} applies. In the example,
\[
\sum_{j\ge1}
\frac{\lambda_j^2}{\gamma_j\underline\sigma_j}
=
\sum_{j\ge1}j^{-2}.
\]
Therefore, for every $T>0$ and every mesh,
\[
\sup_{d\ge1}
\KL\!\bigl(\rho_\star^d\,\|\,\Law(Y_T^d)\bigr)
\le
\frac{1}{8T}\sum_{j\ge1}j^{-2}
+
C_{\mathrm{disc}}(1+T^2)\,h_{\max}
<\infty.
\]

\section{Preconditioner Design for the ELP Scheme in the Power-Law Regime}\label{app:power-law}

We use a simple power-law regime to extract a concrete design rule for the ELP
scheme from the sufficient conditions of Theorem~\ref{thm:concrete-elp}. In the
simplified setting below, these conditions reveal a clear trade-off.

The role of the preconditioner $(\gamma_j)$ is two-sided. On the one hand, larger
coefficients $\gamma_j$ accelerate the annealing dynamics and improve the
annealing contribution. On the other hand, if $\gamma_j$ is too large in the
high-frequency tail, discretization errors can accumulate across coordinates.
Thus the preconditioner must be large enough to make annealing efficient, but
sufficiently damped in the tail to control the discretization error.

The main takeaway of this section is the following. In the common-covariance
tail regime considered below, these two roles of the preconditioner are balanced by the choice
\[
\gamma_j\asymp \lambda_j^{2/3}.
\]
With this choice, the dominant annealing and discretization contributions have
the same per-coordinate order,
\[
\frac{\lambda_j^{4/3}}{\sigma_j}.
\]
Thus, in this simplified regime, the sufficient conditions of
Theorem~\ref{thm:concrete-elp} reduce to
\[
\sum_{j\ge1}\frac{\lambda_j^{4/3}}{\sigma_j}<\infty.
\]
For power-law spectra
\[
\sigma_j\asymp j^{-a},
\qquad
\lambda_j\asymp j^{-b}, \qquad b\ge a,
\]
this becomes
\[
b>\frac{3(a+1)}{4}.
\]
In particular, when the smoothing covariance and the target covariance have the
same tail order, $\lambda_j\asymp\sigma_j\asymp j^{-a}$, the balanced
preconditioner is
\[
\gamma_j\asymp j^{-2a/3},
\]
and the sufficient conditions reduce to $a>3$.

\paragraph{Verification of the Sufficient Conditions.}
We now verify these claims directly from the conditions of
Theorem~\ref{thm:concrete-elp}.

\begin{proposition}
\label{prop:power-law}
Consider a Gaussian mixture with common diagonal covariance,
\[
\Sigma_i^d=\Sigma^d=\operatorname{Diag}(\sigma_1,\dots,\sigma_d),
\qquad i\in I,
\]
and assume that the component means are supported on finitely many coordinates:
there exists $J<\infty$ such that
\[
m_{ij}=0,
\qquad i\in I,\quad j>J .
\]
Assume that
\[
\sigma_j\asymp j^{-a},
\qquad
\lambda_j\asymp j^{-b},
\qquad
\gamma_j\asymp j^{-c},
\]
with $a>1$, $b\ge a$, and
\begin{equation}
\label{eq:power-law-range}
\max\left\{1,\frac{a+1}{2}\right\}
<
c
<
2b-a-1.
\end{equation}
Then the conditions
\eqref{eq:diag-step1-a}--\eqref{eq:diag-step3} hold. Consequently, the ELP
scheme satisfies the dimension-uniform KL bound of
Theorem~\ref{thm:concrete-elp}.
\end{proposition}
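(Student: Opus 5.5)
The plan is to verify each of the conditions \eqref{eq:diag-step1-a}--\eqref{eq:diag-step3} directly by reducing it to a power-law series $\sum_j j^{-p}$ and checking $p>1$. The structural assumptions make this easy. Common covariance gives $\underline\sigma_j=\overline\sigma_j=\sigma_j$, so $\overline\sigma_j-\underline\sigma_j=0$. Every term containing $\delta\sigma_j$ therefore vanishes identically; this removes \eqref{eq:diag-step1-c}'s first part, the first parts of \eqref{eq:diag-step2-a} and \eqref{eq:diag-step2-bb}, the second part of \eqref{eq:diag-step2-bb}, and the $\delta\sigma_j$ part of \eqref{eq:diag-step2-d}. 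Finite support of the means gives $\overline m_j=\Delta m_j=0$ for $j>J$. Every series whose summand carries a factor $\overline m_j^2$ or $\Delta m_j^2$ is then a finite sum, hence finite, since all $\sigma_j>0$ and all quantities are finite. This disposes of the mean parts of \eqref{eq:diag-step1-b}, \eqref{eq:diag-step1-c}, \eqref{eq:diag-step2-a} and \eqref{eq:diag-step2-d}.

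What remains are the mean-free, covariance-free parts. Replacing $\overline m_j^2$ by $0$ for $j>J$ in the factor $\overline\sigma_j+\lambda_j+\overline m_j^2$ leaves $\sigma_j+\lambda_j\asymp j^{-a}$, using $b\ge a$; similarly the first part of \eqref{eq:diag-step1-a} is $\sum j^{-a}<\infty$ since $a>1$. I would then check the following in order.
\begin{itemize}
\item \eqref{eq:diag-step1-b}, first part: $\sum\gamma_j\asymp\sum j^{-c}$, which is finite because $c>1$.
\item \eqref{eq:diag-step1-b}, second part: $\sum\gamma_j^2(\sigma_j+\lambda_j)/\sigma_j^2\asymp\sum j^{-2c+a}$, which is finite iff $2c-a>1$, i.e.\ $c>(a+1)/2$.
\item \eqref{eq:diag-step3}: $\sum\lambda_j^2/(\gamma_j\sigma_j)\asymp\sum j^{-2b+c+a}$, which is finite iff $2b-a-c>1$, i.e.\ $c<2b-a-1$.
\end{itemize}
These three are exactly the three inequalities in \eqref{eq:power-law-range}. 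The conclusion then follows by invoking Theorem~\ref{thm:concrete-elp}.

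The only real obstacle is bookkeeping: making sure no surviving term was overlooked, in particular the tail parts of \eqref{eq:diag-step2-d}. For those I would note that both summands there involve $\Delta m_j$, $\overline m_j\delta\sigma_j$, or $\overline m_j^2$, all of which vanish beyond $J$. I would also remark that the range \eqref{eq:power-law-range} is nonempty precisely when $2b-a-1>\max\{1,(a+1)/2\}$. This nonemptiness is not needed for the implication itself, but it connects the statement to the balanced choice $c=2b/3$ discussed before the proposition.
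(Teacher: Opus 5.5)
Your proposal is correct and follows the paper's proof essentially step for step. Common covariance eliminates every $\overline\sigma_j-\underline\sigma_j$ term, finite support of the means turns every mean-dependent series into a finite sum, and the three surviving power-law series $\sum j^{-c}$, $\sum j^{-2c+a}$, $\sum j^{-2b+c+a}$ give exactly the three inequalities defining the admissible range for $c$, with your accounting of which part of each condition vanishes being slightly more explicit than the paper's.
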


\begin{proof}
Since the covariance is common across mixture components, we have
\[
\underline\sigma_j=\overline\sigma_j=\sigma_j,
\qquad
\overline\sigma_j-\underline\sigma_j=0.
\]
Moreover, because the component means are supported on finitely many
coordinates, all terms involving $\overline m_j$ or
$\sup_{i,\ell}|m_{ij}-m_{\ell j}|$ vanish in the tail. Therefore these terms are harmless for summability.

We first check \eqref{eq:diag-step1-a}. Since $a>1$ and $b\ge a$, we have
\[
\sum_{j\ge1}\sigma_j<\infty,
\qquad
\sum_{j\ge1}\lambda_j<\infty.
\]
Together with the finite-support assumption on the means, this gives
\eqref{eq:diag-step1-a}.

Next consider \eqref{eq:diag-step1-b}. The condition
$\sum_j\gamma_j<\infty$ holds if $c>1$. For the second condition in
\eqref{eq:diag-step1-b}, since $b\ge a$, we have
$\sigma_j+\lambda_j\asymp\sigma_j$ in the tail. Hence
\[
\gamma_j^2
\frac{\sigma_j+\lambda_j}{\sigma_j^2}
\asymp
\frac{\gamma_j^2}{\sigma_j}
\asymp
j^{-2c+a}.
\]
This is summable if
\[
2c-a>1,
\qquad\text{that is,}\qquad
c>\frac{a+1}{2}.
\]
Thus the lower bound in \eqref{eq:power-law-range} gives
\eqref{eq:diag-step1-b}.

Condition \eqref{eq:diag-step1-c} is automatic in the tail because
$\overline\sigma_j-\underline\sigma_j=0$ and the means are supported on finitely many coordinates.
The same argument applies to the covariance-difference and mean-difference terms
in \eqref{eq:diag-step2-a}, \eqref{eq:diag-step2-bb}, and
\eqref{eq:diag-step2-d}.

It remains to check \eqref{eq:diag-step3}. In this setting it reads
\[
\sum_{j\ge1}
\frac{\lambda_j^2}{\gamma_j\sigma_j}
<\infty.
\]
Using the power laws,
\[
\frac{\lambda_j^2}{\gamma_j\sigma_j}
\asymp
j^{-2b+c+a}.
\]
Therefore this series is finite if
\[
2b-c-a>1,
\qquad\text{that is}\qquad
c<2b-a-1.
\]
Combining this upper bound with the lower bounds above gives exactly
\eqref{eq:power-law-range}. Hence all the assumptions
\eqref{eq:diag-step1-a}--\eqref{eq:diag-step3} hold, and the conclusion follows
from Theorem~\ref{thm:concrete-elp}.
\end{proof}

\paragraph{Interpretation of the Admissible Preconditioner.}
The interval \eqref{eq:power-law-range} has a simple interpretation:
\begin{itemize}
    \item The lower
bound on $c$ is the damping requirement. If $c$ is too small, then
$\gamma_j\asymp j^{-c}$ does not decay fast enough, and the high-frequency
discretization estimates diverge.
\item  The upper bound on $c$ is the annealing
requirement. If $c$ is too large, then $\gamma_j$ is too small in the tail, and
the annealing contribution
\[
\sum_{j\ge1}\frac{\lambda_j^2}{\gamma_j\sigma_j}
\]
diverges. 
\end{itemize}
In other words, the preconditioner must
be damped enough for discretization, but not so damped that annealing control is
lost.

\paragraph{Balanced Preconditioner.}
The same trade-off can be seen directly, without assuming a power law for
$\gamma_j$. In this simplified setting, dimension-uniform control follows
from controlling the two tail contributions with per-coordinate terms
\[
\frac{\gamma_j^2}{\sigma_j}
\qquad\text{and}\qquad
\frac{\lambda_j^2}{\gamma_j\sigma_j}.
\]
The first term comes from the drift and discretization estimates; it penalizes
large high-frequency preconditioning. The second term comes from the annealing
contribution; it penalizes excessive damping of the preconditioner. Balancing
the two terms coordinatewise gives
\[
\frac{\gamma_j^2}{\sigma_j}
\asymp
\frac{\lambda_j^2}{\gamma_j\sigma_j},
\]
and hence
\[
\gamma_j\asymp\lambda_j^{2/3}.
\]
With this balanced choice,
\[
\frac{\gamma_j^2}{\sigma_j}
\asymp
\frac{\lambda_j^2}{\gamma_j\sigma_j}
\asymp
\frac{\lambda_j^{4/3}}{\sigma_j},
\]
and the sufficient conditions of
Theorem~\ref{thm:concrete-elp} reduce to
\[
\sum_{j\ge1}\frac{\lambda_j^{4/3}}{\sigma_j}<\infty.
\]
Under the power laws, this translates into the following proposition.
\begin{proposition}
\label{prop:power-law-balanced-preconditioner}
In the setting of Proposition~\ref{prop:power-law}, assume that
\[
\sigma_j\asymp j^{-a},
\qquad
\lambda_j\asymp j^{-b}, \qquad b\geq a.
\]
Then the preconditioner that balances the need to accelerate the annealing dynamics
with the need to damp high-frequency discretization errors is
\[
\gamma_j\asymp\lambda_j^{2/3}\asymp j^{-2b/3}.
\]
With this balanced choice,
\[
\frac{\gamma_j^2}{\sigma_j}
\asymp
\frac{\lambda_j^2}{\gamma_j\sigma_j}
\asymp
\frac{\lambda_j^{4/3}}{\sigma_j}
\asymp
j^{-4b/3+a}.
\]
Therefore the sufficient conditions of
Theorem~\ref{thm:concrete-elp} are satisfied if
\[
\sum_{j\ge1}j^{-4b/3+a}<\infty,
\]
that is
\[
b>\frac{3(a+1)}{4}.
\]
In particular, if
\[
\lambda_j\asymp\sigma_j\asymp j^{-a},
\]
the balanced preconditioner is
\[
\gamma_j\asymp j^{-2a/3},
\]
and the sufficient conditions of 
Theorem~\ref{thm:concrete-elp} reduce to $a>3$.
\end{proposition}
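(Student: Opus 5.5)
The plan is to reduce Proposition~\ref{prop:power-law-balanced-preconditioner} to Proposition~\ref{prop:power-law}: once $\gamma_j$ is a power law $j^{-c}$, I only need to check that $c=2b/3$ lies in the admissible interval \eqref{eq:power-law-range}. Since $\lambda_j\asymp j^{-b}$, the choice $\gamma_j\asymp\lambda_j^{2/3}$ gives $\gamma_j\asymp j^{-2b/3}$, so $c=2b/3$. I will handle the balancing identities and the interval check as two separate steps.

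\emph{Balancing identities.} These are direct substitutions. With $\gamma_j\asymp\lambda_j^{2/3}$,
\[
\frac{\gamma_j^2}{\sigma_j}\asymp\frac{\lambda_j^{4/3}}{\sigma_j},
\qquad
\frac{\lambda_j^2}{\gamma_j\sigma_j}\asymp\frac{\lambda_j^{4/3}}{\sigma_j},
\]
and both are $\asymp j^{-4b/3+a}$. This series is summable exactly when $4b/3-a>1$, that is, when $b>3(a+1)/4$.

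\emph{Checking \eqref{eq:power-law-range} for $c=2b/3$.} The upper bound $2b/3<2b-a-1$ is equivalent to $4b/3>a+1$, which is the stated condition. The lower bound $2b/3>(a+1)/2$ is equivalent to $b>3(a+1)/4$, the same condition again. For the remaining lower bound $2b/3>1$, I would use $b\ge a$: if $a\ge 3/2$ then $2b/3\ge 2a/3\ge1$, and more directly $b>3(a+1)/4>3/2$ whenever $a>1$, so $2b/3>1$ always holds. The hypotheses $a>1$ and $b\ge a$ are inherited from Proposition~\ref{prop:power-law}. Proposition~\ref{prop:power-law} then gives \eqref{eq:diag-step1-a}--\eqref{eq:diag-step3}, and Theorem~\ref{thm:concrete-elp} applies.

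\emph{Special case.} When $\lambda_j\asymp\sigma_j\asymp j^{-a}$, set $b=a$. Then $\gamma_j\asymp j^{-2a/3}$, and the condition $b>3(a+1)/4$ becomes $a/4>3/4$, i.e.\ $a>3$.

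\emph{Main obstacle.} The delicate point is interpretive: the word ``balances'' is a heuristic. I would make it precise only in the sense that the balanced choice equalizes the orders of the two tail terms $\gamma_j^2/\sigma_j$ and $\lambda_j^2/(\gamma_j\sigma_j)$. This also maximizes the admissible range: for fixed $a,b$, the interval for $c$ in \eqref{eq:power-law-range} is nonempty iff $(a+1)/2<2b-a-1$, i.e.\ $b>3(a+1)/4$, and $c=2b/3$ lies inside it whenever it is nonempty. I would state the optimality claim in this sense and avoid any stronger notion.
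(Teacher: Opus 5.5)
Your proposal is correct and follows essentially the paper's route: the proposition is read off from Proposition~\ref{prop:power-law} with $c=2b/3$, where the two tail exponents $2c-a$ and $2b-c-a$ both equal $4b/3-a$. Your explicit check that $c=2b/3>1$ (via $b>3(a+1)/4>3/2$ for $a>1$) is a useful addition, since the paper's balancing heuristic tracks only the terms $\gamma_j^2/\sigma_j$ and $\lambda_j^2/(\gamma_j\sigma_j)$ and leaves the condition $\sum_j\gamma_j<\infty$ implicit.
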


\section{Additional Details on the Numerical Experiments}
\label{app:numerics-details}

This appendix collects implementation details and additional diagnostics for the
experiments in Section~\ref{sec:numerics}. All figures were generated in Google
Colab with 13 GB of RAM, with a runtime of about one minute.

\paragraph{Experimental Setup.}
In Figure~\ref{fig:em-elp-combined}, we compare Euler--Maruyama (EM) and the
exact-linear-part (ELP) discretization on the same preconditioned ALD dynamics.
The target is the two-component Gaussian mixture
\[
\rho_\star^d
=
w_1\mathcal N(m_1^d,\Sigma^d)
+
w_2\mathcal N(m_2^d,\Sigma^d),
\]
with
\[
w_1=0.75,\qquad w_2=0.25,\qquad m_1^d=\mathbf{0},\qquad m_2^d=8e_1,
\]
and common covariance
\[
\Sigma^d=\operatorname{Diag}(\sigma_j)_{j=1}^d,
\qquad
\sigma_j=j^{-6}.
\]
The smoothing covariance and preconditioner are diagonal, with
\[
\lambda_j=j^{-6},
\qquad
\gamma_j=j^{-4}.
\]
The annealing path is
\[
\rho_t^d
=
\rho_\star^d*\mathcal N(0,\theta(t)C^d),
\qquad
\theta(t)=2S(1-t/T),
\qquad
S=5.
\]
Both schemes are initialized exactly from
\[
\rho_0^d=\rho_\star^d*\mathcal N(0,2SC^d).
\]
We use step size $h=10^{-3}$, $2500$ time steps, and hence $T=2.5$. The
dimensions are
\[
d\in\{1,5,10,20,30,40,50,60\}.
\]

\paragraph{Spectral Summability Conditions.}
Since
$\lambda_j=\sigma_j$, the initialization becomes increasingly far from the
target:
\[
\KL(\rho_\star^d\|\rho_0^d)
\]
grows linearly with $d$. At the same time, the ELP summability conditions hold;
in particular,
\[
\sum_{j\ge1}\sigma_j<\infty,
\qquad
\sum_{j\ge1}\lambda_j<\infty,
\qquad
\sum_{j\ge1}\gamma_j<\infty,
\qquad
\sum_{j\ge1}\frac{\lambda_j^2}{\gamma_j\sigma_j}<\infty.
\]
The two mixture components have the same covariance, so the covariance-mismatch
terms in the ELP assumptions vanish.

For EM, however,
\[
\frac{\gamma_j}{\sigma_j}=j^2.
\]
Thus the largest linearly stable EM step size up to dimension $d$ is of order
\[
h_{\max}^{\mathrm{EM}}(d)\simeq \frac{2}{d^2}.
\]
Therefore, for a fixed time step, the EM stability condition is violated once the
dimension is large enough; in the experiment this happens around $d\approx45$.

\paragraph{KL Divergence Estimation.}
The left panel of Figure~\ref{fig:em-elp-combined} reports a fixed-$k$ nearest-neighbor estimate of
$\KL(\rho_\star^d\|\Law(Y_T^d))$, following the KL estimators of
\cite{perez2008kullback, wang2009divergence}. We use $k=20$ in the main text. For each
dimension, we generate $800$ target samples and $800$ terminal samples from the discretized sampler.

As a robustness check, we repeat the estimation with
$k\in\{10,20,30,50\}$. Figure~\ref{fig:knn-k-robustness} shows the same
behavior across these choices: EM estimate grows rapidly in the stiff
regime, whereas the ELP estimate remains stable.

\begin{figure}[H]
    \centering
    \includegraphics[width=0.65\linewidth]{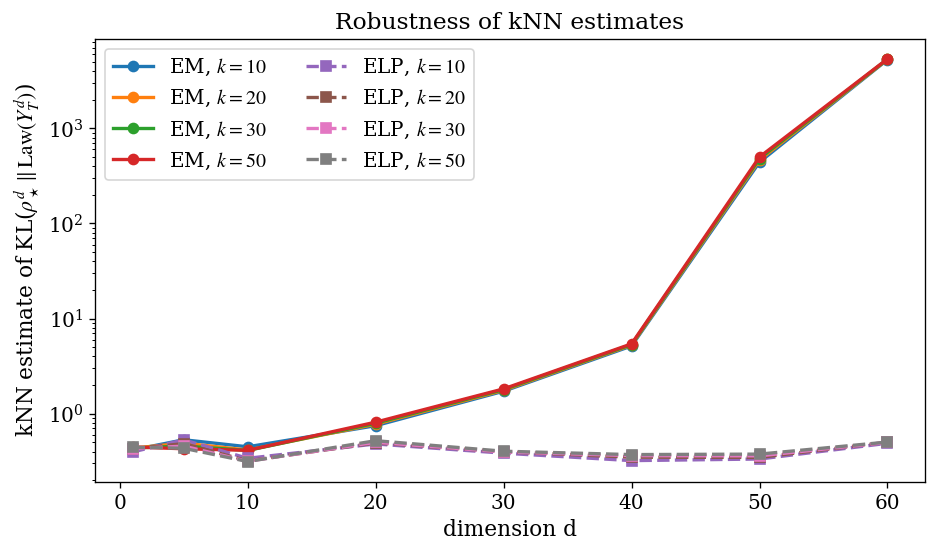}
    \caption{$k$NN KL estimates for different choices of $k$. The qualitative behavior is
stable across $k\in\{10,20,30,50\}$ (the main text reports $k=20$): EM grows rapidly once it enters the high-frequency
stiffness regime, whereas ELP remains stable.}
    \label{fig:knn-k-robustness}
\end{figure}

\paragraph{Variance Profile.}
The right panel of Figure~\ref{fig:em-elp-combined} shows the coordinate-wise variance profile at $d=50$. For each coordinate $j$, we compute the empirical variance of the terminal samples and normalize it by the corresponding target marginal variance. This identifies the coordinates in which the instability occurs: EM develops a large variance excess in the high-frequency coordinates, where $\gamma_j/\sigma_j$ is largest, whereas ELP remains close to the target scale.

%%%%%%%%%%%%%%%%%%%%%%%%%%%%%%%%%%%%%%%%%%%%%%%%%%%%%%%%%%%%

\end{document}